\def\namedlabel#1#2{\begingroup
    #2%
    \def\@currentlabel{#2}%
    \phantomsection\label{#1}\endgroup
}
\def\cF{{\mathcal F}}
\def\sL{{\mathsf{L}}}
\def \st{\mathsf t}
\def \sT{\mathsf T}
\def \kt {\mathfrak t}
\def \cN {\mathcal N}
\def \sm {\mathsf m}
\def \I{\mathbf I}
\def \0{\mathbf 0}
\def \cF{\mathcal F}
\def \cU{\mathcal U}
\def \st{\mathsf{t}}
\def \bP{\mathbb P}
\def \Q{\mathrm Q}
\def\md{{\mathrm d}}
\def\cZ{\mathcal{Z}}
\def\cY{\mathcal{Y}}
\def\Sb{\mathbb S}
\def\b0{\mathbf 0}
\def \bsigma{\boldsymbol{\sigma}}
\def \Rb {\mathbb R}
\def \Eb {\mathbb E}
\def \bR {\mathbb R}
\def \D{\text D}
\def \U{\mathrm U}
\def \bsigma{\boldsymbol{\sigma}}
\def \cP{\mathcal P}
\def \cG{\mathcal G}
\def \Sb{\mathbb S}
\def\Sb{\mathbb{S}}
\def \B{{B}}
\def \Pb{\mathbb P}
\def \I {\mathrm I}
\def \sT{\mathsf{T}}
\def \argmin{\mathrm{arg\,min}}
\def \dd{\mathsf{d}}
\def\sJ{{\mathsf{J}}}
\def\cF{{\mathcal F}}
\def\sL{{\mathsf{L}}}
\def \bE{\mathbb E}
\def \diag{\mathrm{diag}}
\def \sW{\mathsf{W}}
\def \sSW{\mathsf{SW}}
\def \F{\mathrm{F}}
\def \h{\mathrm{h}}
\def \b{\mathrm{b}}
\def \inf {\mathrm{inf.}}
\definecolor{darkred}{rgb}{.7,0,0}
\definecolor{darkgreen}{rgb}{0,0.6,0}
\definecolor{darkblue}{rgb}{0,0,0.7}
\definecolor{darkmagenta}{rgb}{0.55,0,0.55}
\definecolor{darkteal}{rgb}{0,0.45,0.55}
\definecolor{darkorange}{rgb}{1,0.40,0}
\def\genbox#1#2#3#4#5#6{% #1=0/1, #2=color, #3=shape, #4=raise, #5=width, #6=width/2
    \leavevmode\raise#4bp\hbox to#5bp{\vrule height#5bp depth0bp width0bp
    \pdfliteral{q .5 w \csname #2COLOR\endcsname\space RG
                       \csname #3PDF\endcsname{#5}{#6} S Q
             \ifx1#1 q \csname #2COLOR\endcsname\space rg 
                       \csname #3PDF\endcsname{#5}{#6} f Q\fi}\hss}}
\def\diabox     #1#2{\genbox{#1}{#2}  {dia}      {-.5} {6}    {3}}
\newtheorem{theorem}{Theorem}[section]
\newtheorem{lemma}[theorem]{Lemma}
\newtheorem{remark}[theorem]{Remark}
\newtheorem{assumption}[theorem]{Assumption}
\newtheorem{example}[theorem]{Example}
\def\ps@pprintTitle{\let\@oddhead\@empty \let\@evenhead\@empty \def\@oddfoot{} \let\@evenfoot\@oddfoot} \makeatother
\journal{arXiv}
\begin{document}

\begin{frontmatter}

\title{Efficient Deconvolution in Populational Inverse Problems}

\author[1]{Arnaud Vadeboncoeur \corref{cor1}\fnref{fn1}} %% Author name
\ead{av537@cam.ac.uk}
\author[1,2]{Mark Girolami \fnref{fn2}} %% Author name
\ead{mag92@cam.ac.uk}
\author[3]{Andrew M. Stuart \fnref{fn3}} %% Author name
\ead{astuart@caltech.edu}

\cortext[cor1]{Corresponding author}
\fntext[fn1]{AV was supported through the EPSRC ROSEHIPS grant EP/W005816/1.}
\fntext[fn2]{MG was supported by a Royal Academy of Engineering Research Chair and UKRI grants EP/X037770/1, EP/Y028805/1,
EP/W005816/1, EP/V056522/1, EP/V056441/1, EP/T000414/1, EP/R034710/1. }
\fntext[fn3]{AMS was supported by a Department of Defense Vannevar Bush
Faculty Fellowship and by the SciAI Center, funded by the Office of Naval Research (ONR), under grant N00014-23-1-2729.}

%% Author affiliation
\affiliation[1]{organization={Department of Engineering, University of Cambrdige},%Department and Organization
            addressline={7a JJ Thomson Ave}, 
            city={Cambridge},
            postcode={CB3 0FA}, 
            % state={},
            country={UK}}
\affiliation[2]{organization={The Alan Turing Institute},%Department and Organization
            addressline={96 Euston Rd.}, 
            city={London},
            postcode={NW1 2DB}, 
            % state={},
            country={UK}}
\affiliation[3]{organization={Computing and Mathematical Sciences,  California Institute of Technology},%Department and Organization
            addressline={1200 E California Blvd}, 
            city={Pasadena},
            postcode={CA 91125}, 
            state={CA},
            country={US}}

% %% Abstract
\begin{abstract}
    This work is focussed on the inversion task
    of inferring the distribution over parameters of interest leading to
    multiple sets of observations. The potential to solve such distributional inversion problems is driven by increasing availability of data, but a major roadblock 
    is blind deconvolution, arising when the observational noise distribution is unknown. 
    However, when data originates from collections of physical systems, \textit{a population}, it is possible to leverage this information to perform deconvolution. 
    To this end, we propose a methodology leveraging large data sets of observations, collected from different instantiations of the same physical processes, to simultaneously deconvolve the data corrupting noise distribution, and to identify the distribution over model parameters defining the physical processes.
    A parameter-dependent mathematical model of the physical process is employed. A loss function characterizing the match between the observed data and the output of the mathematical model is defined; it is minimized as a function of the both the parameter inputs to the model of the physics and the parameterized observational noise. 
    This coupled problem is addressed with a modified gradient descent algorithm that leverages specific structure in the noise model. Furthermore, a new active learning scheme is proposed, based on adaptive empirical measures, to train a surrogate model to be accurate in parameter regions of interest; this approach accelerates computation and enables automatic differentiation of black-box, potentially nondifferentiable, code computing parameter-to-solution maps. The proposed methodology is
    demonstrated on porous medium flow, damped elastodynamics, and simplified models of atmospheric dynamics. 
\end{abstract}

%%Graphical abstract
% \begin{graphicalabstract}
%\includegraphics{grabs}
% \end{graphicalabstract}

%% Keywords
\begin{keyword}
%% keywords here, in the form: keyword \sep keyword

%% PACS codes here, in the form: \PACS code \sep code

%% MSC codes here, in the form: \MSC code \sep code
%% or \MSC[2008] code \sep code (2000 is the default)
Inverse Problems, Deconvolution, Populational Inference, Surrogate Models.
\end{keyword}

\end{frontmatter}

%% Add \usepackage{lineno} before \begin{document} and uncomment 
%% following line to enable line numbers
%% \linenumbers

%% main text
%%

%% Use \section commands to start a section
% \clearpage

\section{Introduction}
From the conception of the first mathematical model intended to make predictions of nature, the question of how to set model parameters has persisted.
When observations originating from the model are available, we have an opportunity to calibrate our model. This task is often referred to as an \textit{inverse problem}~\cite{engl1996regularization,kaipio2006statistical,stuart2010inverse, aster2018parameter, groetsch1993inverse}.
A prominent difficulty in this arena is the need for precise understanding of the corruption occurring, when collecting data, in the form of measurement error. Furthermore, noise corruptions can exacerbate the ill-posed nature of many inverse problems.
Inverse problems are central to the interfacing of computational models with data from the physical systems they attempt to represent; solving these inverse problems facilitates more accurate predictions when systems are subject to new, unseen conditions.  In this age of data, massive observational data sets are routinely collected not just from single physical systems, but from vast numbers of individual realizations of similar systems. Querying information about these systems as a collective may be referred to as a \textit{distributional inverse problem} or a 
\textit{populational inverse problem}\footnote{To be specific, by ``population'' we mean here a collection of $N$ sets of data arising from distinct physical systems
that may be viewed as arising from different parameter settings in a common
family of systems; this is in contrast to the statistical learning communities typical use of the term ``population'' denoting the $N\rightarrow\infty$ limit of empirical measures.} \cite{gelman1995bayesian, kroese1995distributional}.
The objective of such inverse problems is to characterize the ensemble of unseen model parameters. The need to have knowledge about the nature of the data corruption persists and can be addressed concurrently, by leveraging large  data volumes: we may then seek to concurrently learn the distributions of both the model parameters and the observational noise. Efficient and reliable methods for performing this kind of inference can have important impact in, for example: estimating as-built properties of collections of manufactured assets and quality control~\cite{montgomery2020introduction}; monitoring of collections of deployed assets belonging to a population~\cite{bull2021foundations}; and model calibration for collections of measurements from physical systems~\cite{schneider2017earth}.

The work~\cite{akyildiz2024efficient} introduces a new approach to the emerging field of distributional inversion. However, in that work, the noise distribution is assumed to be fully known; furthermore only a single physical problem (porous medium flow) is addressed. In this paper we tackle
the problem of simultaneously learning the distribution of the noise; and we
illustrate the methodology on a variety of mechanics applications including
the Darcy model of porous medium flow~\cite{darcy1856fontaines}, linear elastodynamics and a general circulation model (GCM) where data is available in the form of 
statistics~\cite{cleary2021calibrate, schneider2017earth}. 

In solving these problems, we find ourselves evaluating solutions and derivatives of numerical models many times for small variations of model parameters. This is prohibitively expensive. Furthermore, practitioners already have numerical code for simulating their systems of interest -- hence, assuming one can use automatic differentiation tools is unrealistic. Finally, there are models, including GCMs for example, where the parameter-to-solution map for the model and observation process may be discontinuous by nature. To address these issues, we propose an efficient active learning surrogate modelling scheme that is coupled to the estimation procedure for the noise and parameter distributions.

\subsection{Contributions and Outline}\label{ssec:contrib_outline}
The main contribution presented in this paper is in developing methodology for concurrent deconvolution, distributional inversion, and surrogate modelling to enable efficient learning of statistical models when we have access to data from large collections of physical systems. The contributions can be broken down further 
as follows:
\begin{enumerate}
    \item[\namedlabel{con:C1}{(C1)}]\label{item:C1} We formulate the problem of deconvolving the noising probability measure and the distribution on the PDE parameters giving rise to the set of  observations, through a regularized probabilistic loss.
    \item[\namedlabel{con:C2}{(C2)}] We propose a modification of the standard 
    gradient-based optimization algorithm that identifies the same global minimizer 
    in the infinite data limit, but is more robust to the data empiricalization needed in practice.
    \item[\namedlabel{con:C3}{(C3)}] We develop a PDE surrogate model training scheme, compatible with black-box PDE simulation code, that concentrates learning efforts on the concurrently estimated underlying parameter distribution.
    \item[\namedlabel{con:C4}{(C4)}] The proposed methodology is tested on the Darcy model of porous medium flow for uncorrelated and correlated observational noise, and we demonstrate the robustness to empircalization of the proposed algorithm.
    \item[\namedlabel{con:C5}{(C5)}]
    We perform the deconvolution and distributional inversion of noise and PDE parameter distributions for a damped elastodynamic problem in three cases: (i) uncorrelated noise (sparse in space dense in time); (ii) correlated noise (sparse in space and time) learned as uncorrelated; (iii) correlated noise (dense in space and time). The inference scheme is carried-out through a concurrently learned surrogate model.
    \item[\namedlabel{con:C6}{(C6)}] The proposed methodology is adapted to chaotic dynamical systems to learn parameter distributions and noise covariances of simplified atomspheric models (L-96 single- and multi- scale) from time-averaged statistics; a surrogate model is used which extracts long-time-averages of dynamics.
\end{enumerate}
The rest of this paper is organized as follows: Section~\ref{ssec:litrev} 
overviews related works in the literature; Section~\ref{sec:background} highlights relevant mathematical background material; Section~\ref{sec:methodology} introduces the proposed methodology, addressing Contributions~\ref{con:C1}-\ref{con:C3}; Section~\ref{sec:darcy} tests the robustness to empiricalization of the proposed algorithm on a porous medium flow problem, corresponding to Contribution~\ref{con:C4}; Section~\ref{sec:elastodyn} demonstrates the scheme on an elastodynamic problem, Contribution~\ref{con:C5}; Section~\ref{sec:l96} adapts the proposed scheme for time-averaged data in the context of models of the atmosphere, which is Contribution~\ref{con:C6}; 
and Section~\ref{sec:conclusion} discusses future work.

\subsection{Related Works}\label{ssec:litrev}
The work we present in this paper is at the intersection of many fields in  computational modelling and statistical inference. 
We organize our literature review around the central topics of deconvolution, distributional inversion and surrogate modelling. 

\subsubsection{Deconvolution}
Central to the objective of this work is the task of statistical deconvolution -- disentangling the distribution of two additive sources~\cite{alexander2009deconvolution, meister2007deconvolving}. This problem has received much attention under various names, such as the blind source separation problem~\cite{moulines1997maximum, attias1998blind}, and independent component analysis~\cite{oja2000independent, hyvarinen1997fast, hyvarinen2002independent, lee1999independent}. Many works assume a
specific form for the noise distribution to be identified. But theoretical results in~\cite{gassiat2022deconvolution} demonstrate settings in which deconvolution is possible even when the noise distribution is not known; this work was continued in ~\cite{rousseau2024wasserstein}. In~\cite{halva2021disentangling} the authors relax some of the assumptions made in previous work, and propose a time-series inference scheme for deconvolution. The papers~\cite{capitao2023deconvolution, capitao2024deconvolution, 10.1214/08-AOS652, comte2010pointwise} highlight a variety of methodologies, tied to specific settings, in which deconvolution is undertaken.

\subsubsection{Distributional Inversion}
In this work, we combine the task of deconvolution {with distributional inverse problems.
In the context of this paper, we understand distributional inversion~\cite{akyildiz2024efficient, li2024stochastic, li2024differential, li2025inverse} to} be the task of recovering a distribution over parameters when observations are generated in terms of the pushforward of a non-Dirac probability distribution on the parameters -- this is in contrast to the classic Bayesian inverse problem where observations are assumed to be generated from a single parameter value~\cite{bernardo2009bayesian, gelman1995bayesian} and the recovered distribution reflects the stochasticity of the measurement process and prior uncertainty. Hence, this work is closely related to hierarchical~\cite{gelman1995bayesian} and empirical~\cite{robbins1964empirical, robbins1992empirical} Bayes approaches.
More recent takes on such inference tasks make use of Wasserstein distance between data and model distributions~\cite{bernton2019parameter, bassetti2006minimum, belili1999estimation}. It is worth mentioning that classical Bayesian views can be cast as pushforward problems~\cite{butler2018combining, butler2020optimal} and other sources of stochasticity may be present in the physical process~\cite{bingham2024inverse} giving rise to distributional inverse problems.

\subsubsection{Surrogate Modelling}
Surrogate modelling is an integral part of the proposed methodology due to the computational cost of repeated evaluations of physical simulations. Constructing efficient, less expensive, approximations of simulation code is a well-trodden field spanning different approach philosophies. These include reduced-order models~\cite{bui2008model, choi2021space} such as subspace techniques~\cite{bai2002krylov, freund2000krylov}, POD~\cite{sampaio2007remarks, fresca2022pod, chaturantabut2010nonlinear}, and reduced-basis methods~\cite{almroth1978automatic, noor1980reduced}. Early machine learning-inspired methodologies leveraged the power of Gaussian processes (GPs)~\cite{krige1951statistical, sacks1989statistical, kennedy2001bayesian, kennedy2000predicting} and
work in this vein continues -- see, for example~\cite{cleary2021calibrate}.
Recently neural network based surrogate modelling has emerged as a potential competitor  and the field of operator learning~\cite{bhattacharya2021model, lu2021learning, kovachki2023neural, li2020fourier} has emerged partially in response
to the specific use-case of surrogate modeling. Approximate models can be adaptively/actively/greedily constructed~\cite{haasdonk2008reduced, dutta2021greedy};
most relevant to our work are methods which adaptively construct surrogate models for inversion~\cite{cleary2021calibrate, yan2021stein, zhang2024bilo, akyildiz2024efficient, vadeboncoeur2023fully, li2023surrogate}.

\section{Background}\label{sec:background}
\subsection{Setup}\label{ssec:setup}

Let $F^\dagger:\cZ\mapsto\cU$ denote the solution operator for a PDE mapping parameters
in $\cZ$ to the solution space $\cU$. Furthermore let $g: \cU \to \bR^{d_y}$ denote
an observation process mapping the solution space of the PDE to a finite dimensional
Euclidean space. Consider, then, the data generating model~\cite{akyildiz2024efficient}, defined by
\begin{subequations}
\label{eq:data_model}\label{eq:data_gen}
\begin{align}
    y^{(n)} &= g\circ F^\dagger(z^{(n)}) + \xi^{(n)},\\
    \xi^{(n)}&\sim\eta^\dagger,\\
    z^{(n)}&\sim\mu^\dagger.
\end{align}
\end{subequations}
Here, for given $n \in \{1, \hdots, N\}$ $y^{(n)}\in\bR^{d_y}$ is an observation, $z^{(n)}\in \cZ$ is a PDE parameter and $\xi^{(n)}\in\Rb^{d_y}$ is a noise variable. The distribution $\mu^\dagger\in\cP(\cZ)$ denotes the unknown data-generating probability measure on the PDE parameters and $\eta^\dagger\in\cP(\bR^{d_y})$ is the unknown noise measure. In this work, we exclusively consider $\eta^\dagger:=\cN(0, \Gamma^\dagger)$.
We are interested in jointly learning $(\alpha^\star, \Gamma^\star)$ to ensure $\mu(\alpha^\star)\approx\mu^\dagger$ and $\eta(\Gamma^\star)\approx\eta^\dagger$.

\subsection{Distributional Generative Model}\label{ssec:emp_measure_pf}
We may summarize the data-generating set-up from the previous subsection using
the succint form
\begin{align}\label{eq:gen_model_data}
    y^{(n)}\sim \eta^\dagger * (g\circ F^\dagger)_\#\mu^\dagger.
\end{align}
We now explain the notation in~\eqref{eq:gen_model_data} in order to show that it indeed comprises a distributional description of~\eqref{eq:data_model}. 
Symbol ``$\#$'' denotes the pushforward operation on measures and ``$*$'' the convolution operation between pairs of measures. To define these two operations,
let the probability triplets $(\Omega_i, \cF_i, \mu_i)$, $i\in\{1,2, 3\}$  be made up of a sample spaces $\Omega_i$, sigma algebras $\cF_i$, and set valued probability functions $\mu_i$. We extend function $f:\Omega_1\mapsto\Omega_2$ to pushforward one probability measure $\mu_1$ into another $\mu_2$ by the notation $\mu_2 = (f)_\#\mu_1$;
this means that a random realization $x_2\sim\mu_2$ can be obtained by setting $x_2=f(x_1)$  where $x_1\sim\mu_1$. More formally the notion of pushforward is captured by the identity $(f)_\#\mu_1(A) = \mu_1(f^{-1}(A))$, for all $A \in \cF_2$ and for $f^{-1}(\cdot)$ computing all preimages of sets $A.$ The convolution of measures, ``$*$'',  can be defined intuitively by stating that $\mu_1 * \mu_2 =q_\#(\mu_1\otimes\mu_2)$ where $q(x_1, x_2)=x_1+x_2$ and ``$\mu_1\otimes\mu_2$'' is the product measure -- the joint distribution on pair $(x_1,x_2)$ defined by assuming independent distributions $\mu_i$ on each $x_i$. The formal definition is
that $(\mu_1 * \mu_2)(A) = \int \mathbbm{1}_A(x_1+x_2)\md\mu_1(x_1)\md\mu_2(x_2)$.

\subsection{Data Distribution} 

We view the collection of observations $\{y^{(n)}\}_{n=1}^N$ as drawn i.i.d.\thinspace from distribution $\nu$. From this data we can construct an \textit{empirical approximation} of $\nu$, denoted by $\nu^N$, and referred to as an
\textit{empirical measure}: a mixture of point masses given by
\begin{align}\label{eq:data_emp_dist}
    \nu^N(A) = \frac{1}{N}\sum_{n=1}^N \delta_{y^{(n)}}(A).
\end{align}
Here the Dirac measure\footnote{Often succintly written  as $\delta_y(A) = \mathbbm{1}_A(y)$} is
\begin{align}
    \delta_y(A)=\begin{cases}
    1,\quad y\in A,\\
    0,\quad y\notin A.
\end{cases}
\end{align}
In what follows we use the empirical measure to
characterize the data set and to learn unknown parameters in the generative model~\eqref{eq:gen_model_data} by asking that the model and the data align as
distributions.

\subsection{Statistical Divergences}\label{ssec:stat_div}
Our learning methodology is based on seeking to align two empirical measures, one defined by the data and the other by the generative model determined by unknown parameters. To measure alignment we will use a divergence\footnote{A metric is a divergence, but a divergence is a weaker concept, foregoing symmetry and the triangle inequality.} between probability measures. Because we work with empirical probability measures\footnote{More formally, we do not have absolute continuity between compared empirical probability measures; $\nu$ being absolutely continuous w.r.t $\mu$,
expressed as $\nu\ll\mu$, is defined by the condition that, for every measurable set $A$ satisfying $\mu(A)=0$, it follows that $\nu(A)=0$. Intuitively, if $z\sim\nu$ and $\nu\ll\mu$, then it is also possible to view $z$ as having been drawn from $\mu$ as well as from $\nu.$} we cannot use the Kullback–Leibler (KL) divergence -- it would either be zero or infinity when comparing two empirical measures.

The Wasserstein class of metrics is suitable for the task of comparing
empirical measures. Indeed they compare measures $\mu, \nu\in\cP(\Rb^d)$ in terms of displacements of probability mass following an optimal transport plan and do not have absolute continuity requirements~\cite{villani2008optimal, santambrogio2015optimal}. 
Let $\cY$ denote a vector space equipped with a norm $\|\cdot\|_B$ and let $\mu, \nu$ denote arbitrary pairs of measures on $\cY.$ Then the squared weighted $2$-Wasserstein metric, defined with respect to this given norm, is
\begin{align}\label{eq:weighted_w}
    \sW_{2,\B}^2(\nu, \mu) = \underset{\pi\in\Pi(\nu, \mu)}{\mathrm{inf.}}{\int_{\cY\times\cY} \|x-y\|^2_\B\md\pi(x,y)};
\end{align}
here the infimum runs over the couplings $\Pi$ of $\nu$ and $\mu$, that is to say, the set of measures $\pi\in\cP(\cY\times\cY)$ with marginals given by $\nu$ and $\mu$.
When the norm $\|\cdot\|$ on $\cY$ is the standard Euclidean norm we simply write
$\sW_{2}^2(\nu, \mu)$ for the $2$-Wasserstein metric.

The Wasserstein metric is readily computed in one dimension, for empirical measures,
using their cumulative distribution functions. When measures $\nu,\mu$ are empirical, evaluation of Wasserstein metrics in dimension greater than 1 requires solution of a linear programming problem to determine the optimal coupling. To avoid this, the sliced-Wasserstein metric~\cite{bonneel2015sliced} may be used. When implemented through
Monte Carlo approximation, this method involves computation of a set of 
1D Wasserstein distances found by pushing forward $\nu,\mu$ under random
maps into $\bR$, maps defining hyperplanes; the closed form 1D Wasserstein distances can then be used. The squared weighted 2-sliced-Wasserstein metric is expressed as
\begin{align}\label{eq:weighted_sw}
    \sSW^2_{2,\B}(\nu, \mu) = \int_{\mathbb{S}^{d-1}}\sW^2_2(P^\theta_{\B\,\#}\nu, P^\theta_{\B\,\#}\mu)\,\md\theta,
\end{align}
where $P^\theta_\B(\,\cdot\,) = \langle \B^{-1/2}\,\cdot\,,\theta\rangle$ and $\Sb^{d-1}$ is the unit sphere in $\bR^{d}$. Approximating the integration over the unit sphere via random sampling of $\theta$ leads to the desired 1D Wasserstein distances and explicit formulae for the (Monte Carlo approximated) sliced-Wasserstein distance. Thus, we now have a suitable statistical divergence for empiricalized measures, which can be efficiently computed. Other classes of divergence would also be suitable for such tasks, including Maximum Mean Discrepancy (MMD)~\cite{gretton2012kernel} and Energy Distances~\cite{szekely2013energy, szekely2003statistics}; however we do not implement these in this work.

\subsection{Distributional Inverse Problem}
In~\cite{akyildiz2024efficient} the authors propose to learn an approximation, $\mu(\alpha)$, of the distribution over model parameters $\mu^\dagger$ (in \eqref{eq:gen_model_data}) from the distribution $\nu^N$ over indirect observations $\{y^{(n)}\}_{n=1}^N$ (in~\eqref{eq:data_emp_dist}) by minimizing
\begin{align}\label{eq:epcid_loss}
    \sJ(\alpha) = \frac{d_y}{2}\sSW^2_{2, \Gamma}(\nu^N, \eta^\dagger * (g\circ F^\dagger)_\#\mu(\alpha)) + h(\alpha);
\end{align}
The authors in~\cite{akyildiz2024efficient} also propose to concurrently learn a surrogate model $F^{\phi}$. Parameter
$\phi$ is learned in a bilevel optimization scheme using PDE residuals~\cite{vadeboncoeur2023fully, rixner2021probabilistic, zhu2019physics, zhang2024bilo}. The inner optimization over $\phi$ results in a choice of $\phi^\star(\cdot)$ that is adapted to the value of $\alpha$ in the outer optimization: $\phi^\star(\alpha).$
The work in this paper makes the significant step of
building on~\cite{akyildiz2024efficient} to allow simultaneous noise deconvolution.
Furthermore we demonstrate the utility of the resulting methodology on a range
of applications from mechanics.

\subsection{Notation}
We denote by
$\langle \cdot, \cdot \rangle_A=\langle \cdot, A^{-1}\cdot \rangle$ the covariance weighted inner product,  
for any positive self-adjoint operator $A$, with induced norm $\|\cdot\|_{A}$.  
In particular, our definition of sliced-Wasserstein distance 
\eqref{eq:weighted_sw} uses this convention
for some matrix, or operator, $\B$. We refer to the PDE domain, a bounded and open set, as $D$; and we denote its boundary by $\partial D$. 
The set $\mathcal{C}^\infty(D)$ is the class of infinitely differentiable functions on $D$. The space $H^1(D)$ is the Sobolev space of once weakly integrable functions, and $H^1_0(D)$ is the subset of $H^1(D)$ in which functions take value zero
on $\partial D$, in a trace sense; this solution space is appropriate for elliptic PDEs
subject to Dirichlet boundary conditions.  
We use the $L^p(D)$ classes of Lebesgue $p$-integrable functions, $1 \le p < \infty$, with $p=\infty$ denoting essentially bounded functions on $D$.
The set of probability measures over measurable space $X$ is denoted as $\cP(X)$. 
A divergence on the space of probability measures is a mapping $\dd:\cP(X)\times\cP(X)\mapsto\Rb_+$ with property that 
$\dd(\mu, \nu)=0\;\mathrm{iff.}\;\mu=\nu$. A metric adds symmetry and the
triangle inequality to the definition.
When $z\in L^\infty(D)$ we construct $\cP(Z)$ on a separable subspace $Z\subset L^\infty(D)$. 
The functions $\exp$ and $\log$ should be understood to be applied element-wise/point-wise when applied to vectors/functions.

\section{Methodology}~\label{sec:methodology}
We now present the main methodological contributions of this work (Contributions~\ref{con:C1}-\ref{con:C3}). We first address \ref{con:C1} and present the proposed loss function for deconvolving the corrupting noise distribution and identifying the parameter distribution underlying the observations. Second, we address \ref{con:C2} by introducing two optimization algorithms to minimize the loss function of noise and input parameter distribution,
one a standard-gradient descent and the other we term {\it{cut-gradient}} descent. 
We show that both have the same global minimzizer in the $N\rightarrow\infty$ setting.  We identify a tractable setting in which the cut-gradient
approach is more robust to empiricalization ($N<\infty$) than is the standard-gradient
descent; later experiments will verify this robustness beyond the confines of this specific example. Third, we address \ref{con:C3} and explain the proposed scheme for concurrently learning a surrogate model for the PDE parameter-to-solution map -- a component which is integral to the efficient application of the deconvolution scheme in practice.

\subsection{Loss Function}

We now present the central objective function for solving the deconvolution and distributional inverse problem (Contribution~\ref{con:C1}). To this end let
$\dd_{\B}$ denote either the squared $B$-weighted 2-Wasserstein~\eqref{eq:weighted_w} or the squared $B$-weighted 2-sliced-Wasserstein metric~\eqref{eq:weighted_sw} 
metric. Then define
\begin{subequations}
\label{eq:O1}
\begin{align}
    \sL_0(\alpha, \Gamma; \Gamma') :=& \dd_{\Gamma'}\Bigl(\nu, \eta(\Gamma) * (g \circ F^\dagger)_\#\mu(\alpha)\Bigr), \tag{{O1a}} \label{eq:O1a}\\
    \sL(\alpha, \Gamma; \Gamma') :=&  \frac{d_y}{2}\sL_0(\alpha, \Gamma; \Gamma')+ h(\alpha) + r(\Gamma), \tag{{O1b}} \label{eq:O1b}
\end{align}
\end{subequations}
where $h(\alpha), r(\Gamma)$ are regularizers.
We have introduced $\Gamma'$ to highlight the fact that the properties
of the unknown distributional noise appear indirectly through the distance
used to define the Wasserstein distance, as well as directly through noise
convolution. To deconvolve noise and parameter distribution, we wish to jointly minimize one of the following loss functions
\begin{subequations}
\label{eq:O2}
\begin{align}
    \sJ_0(\alpha, \Gamma) :=& \sL_0(\alpha, \Gamma;\Gamma), \tag{{O2a}}\label{eq:O2a}\\
    \sJ(\alpha, \Gamma) :=& \sL(\alpha, \Gamma;\Gamma).  \tag{{O2b}}\label{eq:O2b}
\end{align}
\end{subequations}

One can perform minimization of \eqref{eq:O2a} by considering
time-discretization of either of the following two flows:
\begin{align}
    \tag{A1} \frac{\md (\alpha, \Gamma) }{\md t} &= -\nabla_{(\alpha, \Gamma)}\sL_0(\alpha; \Gamma;\Gamma')\Big|_{\Gamma'=\Gamma}\label{eq:A1},\\
    \tag{A2} \frac{\md (\alpha, \Gamma) }{\md t} &= -\nabla_{(\alpha, \Gamma)}\sJ_0(\alpha, \Gamma)\label{eq:A2}.
\end{align}
We tacitly assume in what follows that the vector fields defining these
flows are well-defined, locally Lipschitz functions of $(\alpha,\Gamma)$ so that
the resulting ODEs have, at least locally in time, unique solutions.
We will refer to the right hand side of~\eqref{eq:A1} as the \emph{cut-gradient}  form and \eqref{eq:A2} as the standard \emph{standard-gradient} form.
In what follows we describe the differences in \eqref{eq:A1} and \eqref{eq:A2} in two settings: (S) infinite sample loss setting with $\nu$ ($\nu^N$ as $N\rightarrow\infty$);
and (S$^\prime$) finite sample setting $\nu^N$ with $1\leq N<\infty$  (Contribution~\ref{con:C2}).\footnote{When we implement optimization in practice the gradient flow of setting (S$^\prime$) is not directly implemented; however its mathematical treatment gives key insight into the differences between the cut-gradient and standard-gradient forms that are insightful for the gradient based algorithms that we use in practice.}

We now state a central assumption used in the subsequent results.
\begin{assumption}\label{assum:1}
The data distribution admits representation
\begin{align}\label{eq:H1}
    \nu = \eta^\dagger * (g\circ F^\dagger)_\#\mu(\alpha^\dagger),
\end{align}
where $\eta^\dagger:=\cN(0, \Gamma^\dagger)$; that is, we assume there exist parameters $\alpha^\dagger$, $\Gamma^\dagger$ for which the generative model reproduces the data distribution in the infinite data limit of Setting (S). Furthermore we assume $\Gamma^\dagger \ne 0.$
\end{assumption}

\begin{algorithm}[t]
\caption{Deconvolution and Distributional Inversion with~\eqref{eq:A1}.}
\label{alg:algorithm1}
\begin{algorithmic}[1]
\STATE{Initialize $\alpha_0$, $\Gamma_0$, $\sT$ (\text{number of iterations}), $N_s$ (\text{number of samples for $\sL$}), $F^\dagger$ (\text{forward model}).}
\FOR{$\st=1, \hdots, \sT$}
\FOR{$i=1, \hdots, N_s$}
\STATE{Sample $z^{(i)}\sim\mu({\alpha_{\st-1}})$.}
\STATE{Sample $\varepsilon_\eta^{(i)}\sim\eta(\Gamma_{\st-1})$.}
\STATE{Compute ${y'}^{(i)} = g \circ F^\dagger(z^{(i)}) + \varepsilon_\eta^{(i)}$.}
\STATE{Sample $y^{(i)}\sim\nu^N$.}
\ENDFOR
\STATE{Set $\Gamma_{\st-1}'= \text{CUT-GRADIENT}(\Gamma_{\st-1})$.}
\STATE{Compute $\sL(\alpha_{\st-1}, \Gamma_{\st-1};\Gamma_{\st-1}')$ from the $N_s$ samples $\{y^{(i)}, {y'}^{(i)}\}_{i=1}^{N_s}$.}
\STATE{$(\alpha_\st, \Gamma_\st) = \text{OPTIMISER}( (\alpha_\st, \Gamma_\st), \sL(\alpha_{\st-1}, \Gamma_{\st-1};\Gamma_{\st-1}'))$.}
\ENDFOR 
\RETURN $(\alpha^\star, \Gamma^\star) \leftarrow (\alpha_{\sT}, \Gamma_{\sT})$.
\end{algorithmic}
\end{algorithm}

The value of the following theorem, for us, is that it indicates that gradient
descent algorithms based on the cut-gradient approach will, in the large data limit of setting (S), have limit points which capture the true parameters. Furthermore we will then show that the cut-gradient approach is more robust to empiricalization -- setting (S').

\begin{theorem}
\label{t:cgd}
In setting (S) and under Assumption~\ref{assum:1}, the vector field defining both (A1) and (A2) is zero at the global minimizer $(\alpha^\dagger, \Gamma^\dagger)$ of  $\sJ_0(\alpha, \Gamma).$ 
\end{theorem}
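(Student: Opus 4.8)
The plan is to show that at $(\alpha^\dagger,\Gamma^\dagger)$ both the loss $\sJ_0$ (and hence $\sL_0$, up to the smooth regularizers, which I set aside or assume stationary) attains value zero, which is its global minimum since $\dd_{\Gamma'}$ is a divergence, taking values in $\Rb_+$ and vanishing iff its two arguments coincide. Under Assumption~\ref{assum:1}, plugging $\alpha=\alpha^\dagger$, $\Gamma=\Gamma^\dagger$ into the generative model gives $\eta(\Gamma^\dagger)*(g\circ F^\dagger)_\#\mu(\alpha^\dagger) = \eta^\dagger*(g\circ F^\dagger)_\#\mu(\alpha^\dagger) = \nu$, so $\sL_0(\alpha^\dagger,\Gamma^\dagger;\Gamma') = \dd_{\Gamma'}(\nu,\nu)=0$ for \emph{every} choice of the auxiliary argument $\Gamma'$, in particular for $\Gamma'=\Gamma^\dagger$. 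Hence $\sJ_0(\alpha^\dagger,\Gamma^\dagger)=0$ is the global minimum.

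Next I would argue that $(\alpha^\dagger,\Gamma^\dagger)$ is a stationary point of the vector fields in both \eqref{eq:A1} and \eqref{eq:A2}. For \eqref{eq:A2} this is immediate from elementary calculus: a global minimizer of a differentiable function has vanishing gradient, so $\nabla_{(\alpha,\Gamma)}\sJ_0(\alpha^\dagger,\Gamma^\dagger)=0$ (using the standing smoothness/local-Lipschitz hypothesis on the vector fields, together with differentiability of $h,r$). For \eqref{eq:A1} the subtlety is the "cut": the vector field is $\nabla_{(\alpha,\Gamma)}\sL_0(\alpha,\Gamma;\Gamma')\big|_{\Gamma'=\Gamma}$, i.e. one differentiates only through the first two slots and then substitutes $\Gamma'=\Gamma$. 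By the chain rule, $\nabla_{(\alpha,\Gamma)}\sJ_0 = \nabla_{(\alpha,\Gamma)}\sL_0(\alpha,\Gamma;\Gamma')\big|_{\Gamma'=\Gamma} + \big(0,\ \partial_{\Gamma'}\sL_0(\alpha,\Gamma;\Gamma')\big|_{\Gamma'=\Gamma}\big)$, so the cut-gradient and standard-gradient differ only by the term coming from differentiating the auxiliary weighting $\Gamma'$. It therefore suffices to show that each of the two pieces vanishes separately at $(\alpha^\dagger,\Gamma^\dagger)$.

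The cleanest route is: since $\sL_0(\alpha^\dagger,\Gamma^\dagger;\Gamma')=0$ for all $\Gamma'$ in a neighborhood (indeed globally), the partial derivative $\partial_{\Gamma'}\sL_0(\alpha^\dagger,\Gamma^\dagger;\Gamma')=0$ identically; this kills the extra term, so cut-gradient and standard-gradient coincide at $(\alpha^\dagger,\Gamma^\dagger)$, and both equal $\nabla\sJ_0(\alpha^\dagger,\Gamma^\dagger)=0$ by the minimizer argument. Alternatively, and perhaps more in the spirit of the paper, one notes that $(\alpha^\dagger,\Gamma^\dagger)$ minimizes $(\alpha,\Gamma)\mapsto\sL_0(\alpha,\Gamma;\Gamma')$ for each fixed $\Gamma'$ (it still drives the divergence to zero), so the partial gradient in the first two arguments also vanishes there before the substitution $\Gamma'=\Gamma$ is made; substituting $\Gamma'=\Gamma^\dagger$ then gives that the cut-gradient is zero. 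Either way one must also handle the regularizers $h,r$: as written, Theorem~\ref{t:cgd} refers to $\sJ_0$, which does not include $h,r$, so I would simply note that $\sL_0,\sJ_0$ carry no regularizer and the argument is unaffected; if one instead wanted the statement for $\sJ$ one would need $\alpha^\dagger,\Gamma^\dagger$ to be critical points of $h,r$ as well.

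The only genuine obstacle is justifying differentiability of $\Gamma'\mapsto\sL_0(\alpha,\Gamma;\Gamma')$ and the interchange of limits implicit in the chain-rule decomposition — i.e., that the (sliced-)Wasserstein distance with a $\Gamma'$-weighted norm depends smoothly on $\Gamma'$ near $\Gamma^\dagger$, and that $\Gamma^\dagger\succ 0$ so that $\B^{-1/2}=(\Gamma')^{-1/2}$ in \eqref{eq:weighted_sw} is well-defined; here Assumption~\ref{assum:1}'s hypothesis $\Gamma^\dagger\ne 0$ (presumably $\Gamma^\dagger$ positive definite) is exactly what is needed. But the paper's standing assumption that the vector fields in \eqref{eq:A1}-\eqref{eq:A2} are well-defined and locally Lipschitz already grants this, so the proof reduces to the short calculus argument above; I would present it in essentially three lines: (i) value zero $\Rightarrow$ global min, (ii) global min $\Rightarrow$ $\nabla\sJ_0=0$ for \eqref{eq:A2}, (iii) $\sL_0(\alpha^\dagger,\Gamma^\dagger;\cdot)\equiv 0 \Rightarrow$ the cut term equals the full gradient, giving \eqref{eq:A1} too.
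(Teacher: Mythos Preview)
Your proposal is correct and follows essentially the same route as the paper's proof: both arguments note that $\sL_0(\alpha^\dagger,\Gamma^\dagger;\Gamma')=0$ for all $\Gamma'$, decompose the standard gradient via the chain rule into the cut-gradient plus $\partial_{\Gamma'}\sL_0|_{\Gamma'=\Gamma}$, and use the constancy in $\Gamma'$ to kill the extra term so that \eqref{eq:A1} and \eqref{eq:A2} agree at the global minimizer where $\nabla\sJ_0=0$. Your alternative argument (that $(\alpha^\dagger,\Gamma^\dagger)$ minimizes $\sL_0(\cdot,\cdot;\Gamma')$ for each fixed $\Gamma'$, hence the cut-gradient vanishes directly) is not in the paper but is a clean equivalent.
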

\begin{proof}
First note that, under Assumption~\ref{assum:1}, the global minimizer is zero and is achieved when the parameters $(\alpha,\Gamma)$ equal $(\alpha^\dagger,\Gamma^\dagger)$.
Noting the forms of $\dd_{\Gamma'}$, as either~\eqref{eq:weighted_w} or \eqref{eq:weighted_sw}, and $\cN(0, \Gamma)$, we observe $\nabla_\Gamma\sL_0(\alpha, \Gamma;\Gamma')$ is well-defined as tacitly assumed above. Thus, using the differentiability assumption on $\sL_0$ with respect to $\alpha$, we compute the gradients defining (A1) and (A2) respectively:
\begin{align}
\label{eq:rhs1}
    -\nabla_{(\alpha, \Gamma)}\sJ_0(\alpha, \Gamma) = 
    \left(\begin{array}{ll}
        \nabla_\alpha \sL_0(\alpha, \Gamma; \Gamma)\\
        \nabla_\Gamma \sL_0(\alpha, \Gamma; \Gamma')\Big|_{\Gamma'=\Gamma} + \nabla_{\Gamma'} \sL_0(\alpha, \Gamma; \Gamma')\Big|_{\Gamma'=\Gamma}
    \end{array}\right).
\end{align}
Secondly we note
\begin{align}
\label{eq:rhs2}
    -\nabla_{(\alpha, \Gamma)}\sL_0(\alpha, \Gamma; \Gamma')\Big|_{\Gamma'=\Gamma} = 
    \left(\begin{array}{ll}
            \nabla_\alpha \sL_0(\alpha, \Gamma; \Gamma)\\
            \nabla_\Gamma \sL_0(\alpha, \Gamma; \Gamma')\Big|_{\Gamma'=\Gamma}
    \end{array}\right).
\end{align}
Next we remark that in Setting (S), $\nabla_{\Gamma'} \sL_0(\alpha^\dagger, \Gamma^\dagger; \Gamma')\Big|_{\Gamma'=\Gamma^\star} = 0$. This follows because, in fact, for
all $\Gamma',$
$$\sL_0(\alpha^\dagger, \Gamma^\dagger; \Gamma')=0.$$ 
This implies that, for all $\Gamma',$
$$\nabla_{\Gamma'} \sL_0(\alpha^\dagger, \Gamma^\dagger; \Gamma') = 0.$$ 
This in turn implies that the right-hand sides of \eqref{eq:rhs1} and \eqref{eq:rhs2}
coincide at $(\alpha,\gamma)=(\alpha^\dagger,\gamma^\dagger)$ and completes the proof.
\end{proof}

\begin{example}
\label{ex:ill}
To exemplify and clarify the relevance of Theorem~\ref{t:cgd}, consider noise covariances which are scaled identities, $\Gamma = \gamma^2\I$, and $\Gamma'={\gamma'}^2\I$.  In this simplified setup, we will explicitly confirm that \eqref{eq:A1}-\eqref{eq:A2} have the same stationary points in the infinite sample Setting (S), but show that, for $\nu^N\neq\nu$, these differ in the finite sample Setting (S'). 
We focus on the case where $\dd_{B}$ is the weighted squared 2-Wasserstein distance. For conciseness we define $\Pi_{\alpha, \gamma} := \Pi(\nu, \eta(\gamma^2\I) * (g\circ F^\dagger)_\# \mu(\alpha))$, then
\begin{align*}
    \sL_0(\alpha, \gamma^2\I;{\gamma'}^2\I) &= \sW^2_{2, {\gamma'}^2\I}(\nu, \eta(\gamma^2\I) * (g\circ F^\dagger)_\# \mu(\alpha)) \\
     &= \underset{\pi\in\Pi_{\alpha, \gamma}}{\inf}\int_{\Rb^d\times\Rb^d} \|x-y\|^2_{{\gamma'}^2\I}\, \md\pi(x,y)\\
    &= \underset{\pi\in\Pi_{\alpha, \gamma}}{\inf}\int_{\Rb^d\times\Rb^d} \frac{1}{{\gamma'^2}}\|x-y\|^2\, \md\pi(x,y)\\
    &= \frac{1}{{\gamma'}^2}\sW^2_{2}(\nu, \eta(\gamma^2\I) * (g\circ F^\dagger)_\# \mu(\alpha)) \\
    &= \frac{1}{{\gamma'}^2}\sL_0(\alpha, \gamma^2\I;\I).
\end{align*}
We note that 
\begin{equation}
    \label{eq:chk}
\sL_0\bigl(\alpha^\dagger, (\gamma^\dagger)^2\I;\I\bigr)=0.
\end{equation}
With this, let us look at the stationary points of flows ~\eqref{eq:A1} and \eqref{eq:A2} in the infinite sample Setting (S). 
From \eqref{eq:rhs1} and \eqref{eq:rhs2} we see that it suffices to study the
vector field with respect to $\gamma$ and not $\alpha.$ Under assumption~\ref{assum:1}, using~\eqref{eq:O2a}, and Theorem~\ref{t:cgd}, both flows have zero
vector in $\gamma$ co-ordinate at $(\alpha, \gamma)=(\alpha^\dagger, \gamma^\dagger)$,
by \eqref{eq:chk}:
\begin{align*}
    \quad\frac{\md\gamma}{\md t}&=-\nabla_{\gamma} \sL_0(\alpha, \gamma^{2}\I; {\gamma'}^2\I) = - \frac{1}{{\gamma'}^2}\nabla_\gamma\sL_0(\alpha, \gamma^2\I;\I) = 0,\tag{{{$\gamma$-flow A1}}}\label{eq:gflow1}\\
    \frac{\md\gamma}{\md t}&=-\nabla_{\gamma} \sJ_0(\alpha, \gamma^{2}\I) =   -\frac{1}{{\gamma}^2}\nabla_\gamma\sL_0(\alpha, \gamma^2\I;\I) - \frac{1}{{\gamma}^3}\sL_0(\alpha, \gamma^2\I;\I) = 0,\tag{{{$\gamma$-flow A2}}}\label{eq:gflow2}
\end{align*}
    \diabox0{cblack}
\end{example}

In the preceding example we explicitly use setting (S); the same result cannot be expected in setting (S'). Indeed in
Section~\ref{sec:darcy} we will show empirical evidence indicating that, when implemented with a time-stepping algorithm, and in the finite data setting (S')
where empiricalization  plays a role, the cut-gradient approach is more robust than standard-gradient descent. Thereafter, in Sections~\ref{sec:elastodyn} and~\ref{sec:l96}, we will exclusively use algorithms based
on time-discretization of the cut-gradient methodology.

The flows described in~\eqref{eq:A1}-\eqref{eq:A2} can be discretized using standard explicit time-stepping schemes through discrete pseudo-time $\st=1 \hdots \sT$, using adaptive steps given by an optimizer and using $N_s$ samples to evaluate the distribution matching loss function. This procedure is described in Algorithm~\ref{alg:algorithm1}. The algorithm makes use of the CUT-GRADIENT\footnote{Also commonly refered to as STOP-GRADIENT.} operation, common in automatic differentiation software, to update $\Gamma'$ with $\Gamma$ without propagating gradient computation through $\Gamma'$. In the rest of this work, we will use $\dd_\B$ to be $\sSW^2_{2, \B}$ as defined by~\eqref{eq:weighted_sw}. 
Before moving on to numerical examples, we introduce Contribution~\ref{con:C3}, an active-learning surrogate modelling methodology compatible with the learning of~\eqref{eq:O1a}-\eqref{eq:O1b}.

\subsection{Surrogate Training on Cumulative Empirical Measures}

\begin{algorithm}[t]
\caption{Concurrent Surrogate Learning}
\label{alg:algorithm2}
\begin{algorithmic}[1]
\STATE{Initialize: Algorithm~\ref{alg:algorithm1}, $\sT_a$ ({number of iterations where training pairs are aquired for $\bP^\st_{z,u}$}),  $\sT_{\mathrm{pre.}}$ number of gradient update pre-training steps,  $\sT_{\mathrm{inner}}$ number of gradient update steps on $\phi$ per step on $(\alpha, \Gamma)$, $N_\mathrm{pre.}$ (number of pre-train samples), $N_F$ (the batch size for the surrogate loss).}
\STATE{Sample $z^{(1-N_\mathrm{pre.}:1)}\sim\mu({\alpha_{0}})$.}
\STATE{Compute $u^{(1-N_\mathrm{pre.}:1)}=F^\dagger(z^{(-N_{\mathrm{init.}:1})})$.}
\STATE{Construct $\bP^0_{z,u}$.}
\STATE{Compute $\phi^\star_0$ with $\sT_{\mathrm{pre.}}$ steps and $N_F$ batch size (Pre-Training).}
\FOR{$\st=1 \hdots \sT$ }
\STATE{Increment $(\alpha_{\st-1}, \Gamma_{\st-1})$ following Algorithm~\ref{alg:algorithm1} lines  3-10 replacing $F^\dagger$ with $F^{\phi^\star_{\st-1}}$.}
\IF{$\st < \sT_a$}
\STATE{Sample $z^{(\st)}\sim\mu({\alpha_{\st}})$.}
\STATE{Compute $u^{(\st)}=F^\dagger(z^{(\st)})$.}
\STATE{Update $\bP^\st_{z,u}$.}
\ENDIF
\STATE{Compute $\phi^\star_\st$ initializing $\argmin_\phi$ at $\phi^\star_{\st-1}$ with  $\sT_{\mathrm{inner}}$ steps and $N_F$ batch size.}
\ENDFOR 
\RETURN{$(\alpha^\star, \Gamma^\star) \leftarrow (\alpha_{\sT}, \Gamma_{\sT})$ and $\phi^\star\leftarrow\phi^\star_{\sT}$.}
\end{algorithmic}
\end{algorithm}
%
% %

Numerical PDE solution operators ($F^\dagger$) are challenging to incorporate in inference schemes, especially when one needs to differentiate model solutions with respect to model parameters. These operations are often computationally expensive, and in many cases models may not be differentiable. Additionally, the maps themselves may be discontinuous (as we will see in an example later).

To accelerate computation and enable the seamless use of automatic differentiation we can replace $F^\dagger$ in \eqref{eq:O1a}-\eqref{eq:O1b} with $F^\phi$, a surrogate model with parameters $\phi$  (Contribution~\ref{con:C3}). The question is then how to learn the parameters $\phi$ of this surrogate model. In this work, we want to concurrently learn $\phi$ and $\alpha$ to have high surrogate accuracy under the parameter measure $\mu(\alpha)$, implying a coupled learning scheme is necessary. Thus, given a warm-up sample number $N_\mathrm{pre.}$, we propose to perform~\eqref{eq:A1} and concurrently compute
\begin{subequations}\label{eq:learn_phi}
\begin{align}
     \phi^\star_\st &= \underset{\phi}{\argmin}\;\bE_{z, u\sim\bP^{\st}_{z,u}}\left[\|F^\phi(z) - u\|^2_2\right],\\
     \bP^\st_{z,u} &= \frac{1}{\st + N_\mathrm{pre.} }\sum_{\sm=1-N_\mathrm{pre.}}^{\st} \delta_{(z^{(\sm)}, F^\dagger(z^{(\sm)})},\quad \begin{cases}
         z^{(\sm)}\sim\mu(\alpha_0),\quad \mathrm{for}\; \sm\leq0,\\
         z^{(\sm)}\sim\mu(\alpha_\sm),\quad \mathrm{for}\; 0<\sm\leq \st,\end{cases}\label{eq:cummul_emp_dataset}
\end{align}
\end{subequations}
remembering $\st=1\hdots\sT$ indexes discrete gradient flow pseudo-time.
With this formulation we can replace $F^\dagger$ with $F^{\phi^\star_\st}$ in~\eqref{eq:A1} as we learn $(\alpha, \Gamma)$.

\begin{remark}
    In using~\eqref{eq:learn_phi}, we concentrate the training data-pairs acquisition, and learning efforts, of the surrogate model on regions of the parameter space which are most relevant for the inference task at the current step of the algorithm where we have estimate $\mu(\alpha_\st)$.
    \diabox0{cblack}
\end{remark}

Algorithm~\ref{alg:algorithm2} shows how to implement the proposed scheme and how it relates to learning $(\alpha, \Gamma)$ with \eqref{eq:A1}. We note that one can opt to withhold lines 8-11 until a sufficient batch size is reached in order and take advantage of the speed-up offered by parallelization -- the benefits of this are hardware and problem dependent.

\section{Porous Medium Flow}\label{sec:darcy}
This section addresses Contribution~\ref{con:C4}.
To test the proposed schemes in a simple setting we formulate the deconvolution and distributional inverse problem for the Darcy model of steady porous medium flow, given as
\begin{subequations}\label{eq:darcy}
\begin{align}
    -\nabla\cdot(z\nabla u) &= f, \quad x\in D\\
    u &= 0, \quad x\in\partial D.
\end{align}
\end{subequations}
We assume that the permeability field is constant throughout $D$,
and defined by scalar $z\in\Rb_+$. The spatially varying field $u\in H^1_0(D)$ is the piezometric pressure head, and $f\in L^2(D)$ is a source term. Let $F^\dagger$ denote the map from $z \in \Rb_+$ to $u \in H_0^1(D)$, and $g:H_0^1(D)\mapsto\bR^{d_y}$ be a set of linear functionals collecting observations of $u$. In what follows we choose $D=(0,1)$ and then $F^\dagger$ is available to us in closed form; we have no need  of recourse to a numerical solver or surrogate model. In the numerics shown in this section $f(x)=10$ and we employ $d_y = 50$ pointwise
evaluations of the solution in $D$ to define the linear observation functionals.

In the following subsections we focus on contrasting the numerical behaviors of the cut-gradient~\eqref{eq:A1} and standard-gradient~\eqref{eq:A2} forms.
\begin{itemize}
    \item In Section~\ref{sssec_4_1_1} we compare the losses~\eqref{eq:O1a},~\eqref{eq:O2a}, and in particular the location of their minimizers in relation to the true parameter values, 
    under varying amounts of data; we
    demonstrate the greater accuracy achieved by~\eqref{eq:O1a}, 
    particularly in small data regimes. 
    \item In Section~\ref{ssec:4_1_2} we compare convergence of algorithms based on discretization of the gradient-flows~\eqref{eq:A1},~\eqref{eq:A2}; we show  improved robustness to empiricalization in~\eqref{eq:A1} for estimation of the noise distribution variance.
    \item Section~\ref{ssec:4_2_1} generalizes the conclusions of~\ref{ssec:4_1_2} by considering non-diagonal covariances and estimating noise lengthscale along with parameter $\alpha$ defining the distribution of input $z.$
    \item Section~\ref{ssec:4_2_2} concludes the Darcy experiments by demonstrating the efficacy of the cut-gradient flow~\eqref{eq:A1} under challenging inference conditions: we concurrently estimate not only the noise lengthscale, as in the preceding subsection, but also the noise variance.
\end{itemize}

We omit the use of the regularizers $h(\alpha), r(\Gamma)$ as in~\eqref{eq:O1a} for the following porous medium flow examples.  To ease readability, reproducibility, and minimize information duplication, in this (and subsequent) numerics sections the experimental setup information is developed with increasing specificity as we descend into sub-sections; information pertaining only to individual experiments are listed after the bolded paragraphs labelled as \textbf{Data Specifics}, \textbf{Learning Specifics}, \textbf{Solver Specifics} (when applicable), and \textbf{Results}.

\subsection{Uncorrelated Noise}\label{ssec:4_1} 
\begin{figure}[t]
    \centering
    {\includegraphics[width=0.6\linewidth]{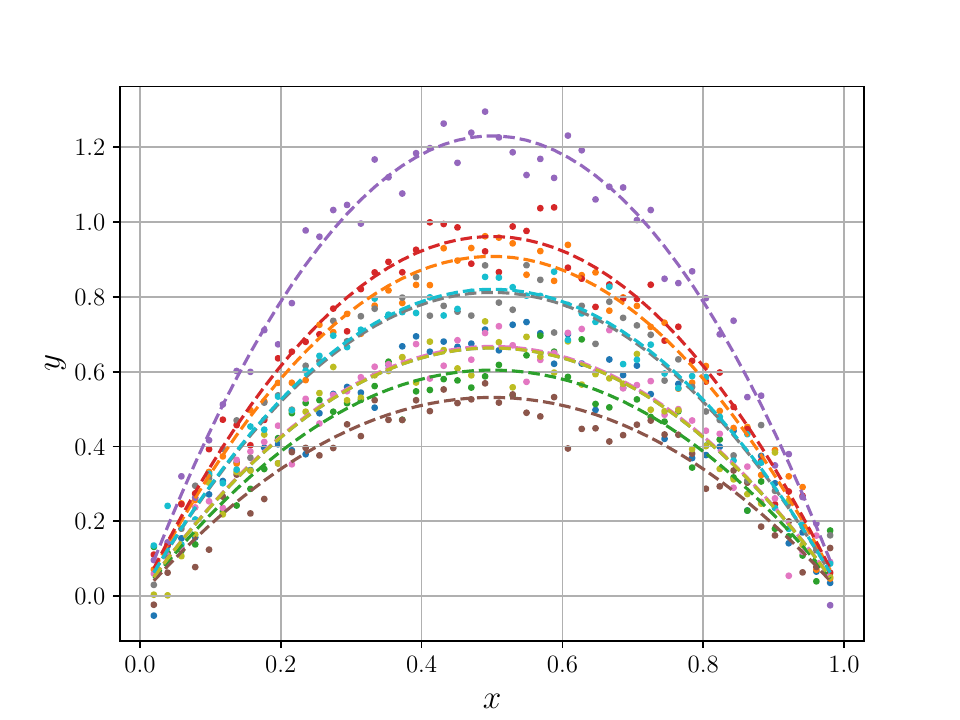}}
    \caption{Five solutions $u$ to the porous medium flow problem (dashed lines) and corresponding observation vectors $y$ with 50 observation locations (scatter points).}\label{fig:Darcy1D_data}
    \label{fig:five_sln_1D_Darcy}
\end{figure}
We consider the case where data is corrupted with uncorrelated Gaussian noise with a scaled identity in the form $\Gamma=\gamma^2\I$. We will first illustrate the effect of removing the $\gamma'$ dependence in the loss function in terms of minimizers of the loss for varying data set sizes, thereby affecting the empiricalizaion error. After that study we then vigorously assess the convergence properties of the two gradient descent algorithms: cut-gradients~\eqref{eq:A1}, and standard-gradients~\eqref{eq:A2}, using loss function~\eqref{eq:O1a} and \eqref{eq:O2a} respectively.  Figure~\ref{fig:five_sln_1D_Darcy} shows five example solutions to~\eqref{eq:darcy} and corresponding sparse observations.
\subsubsection{Loss Function Comparison}\label{sssec_4_1_1}
\begin{figure}[t]
    \centering
    \subfloat[$N=50$]{\includegraphics[width=0.45\linewidth]{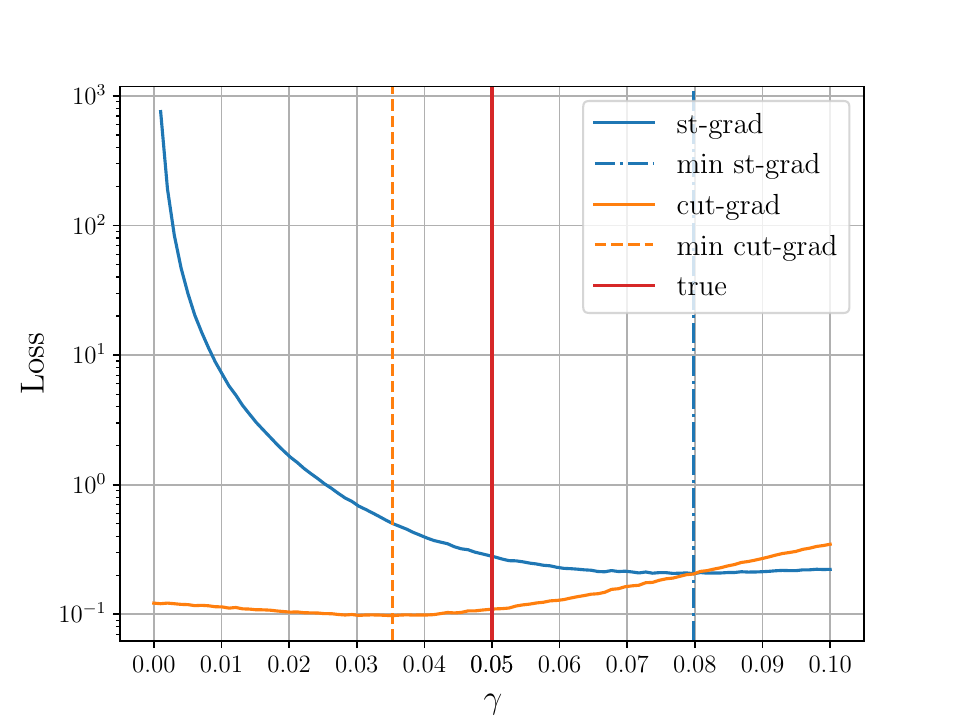}}
    \subfloat[$N=100$]{\includegraphics[width=0.45\linewidth]{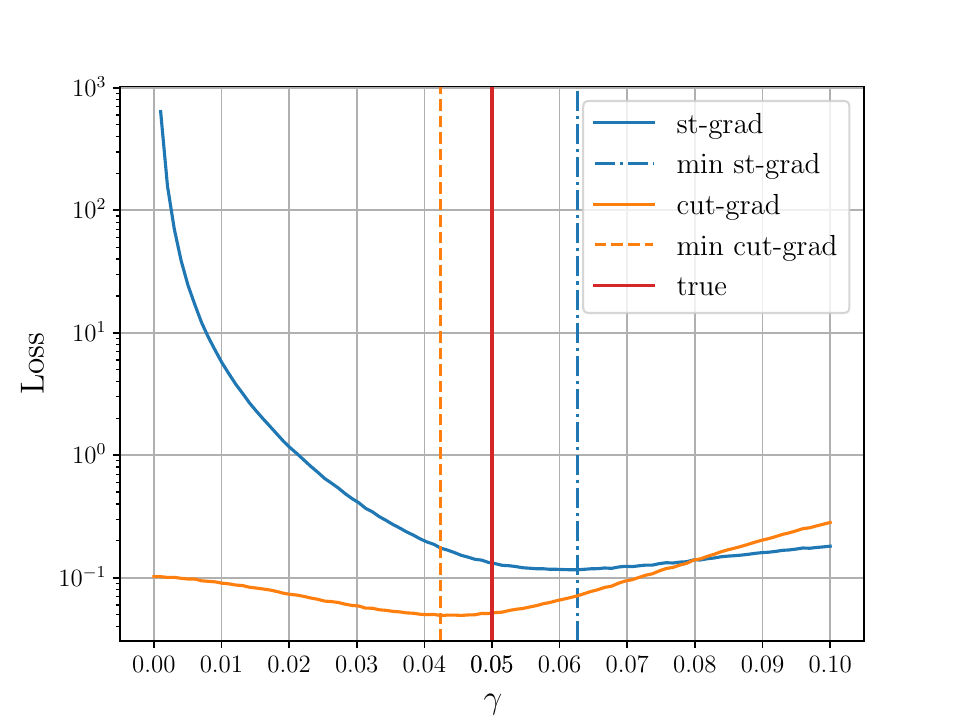}}\\   
    \subfloat[$N=500$]{\includegraphics[width=0.45\linewidth]{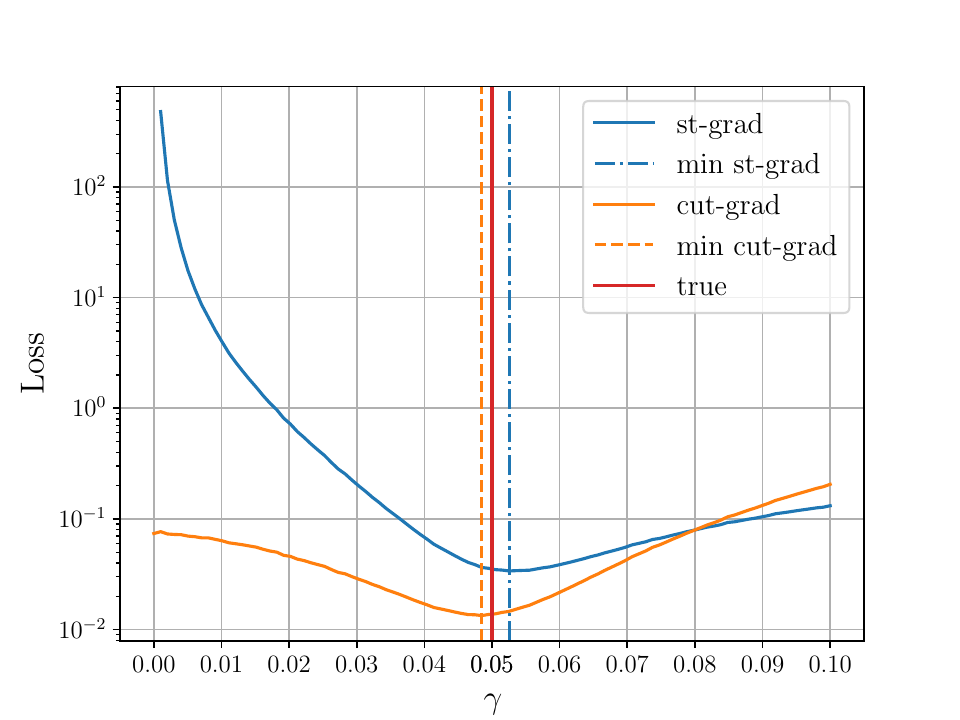}}  
    \subfloat[$N=1000$]{\includegraphics[width=0.45\linewidth]{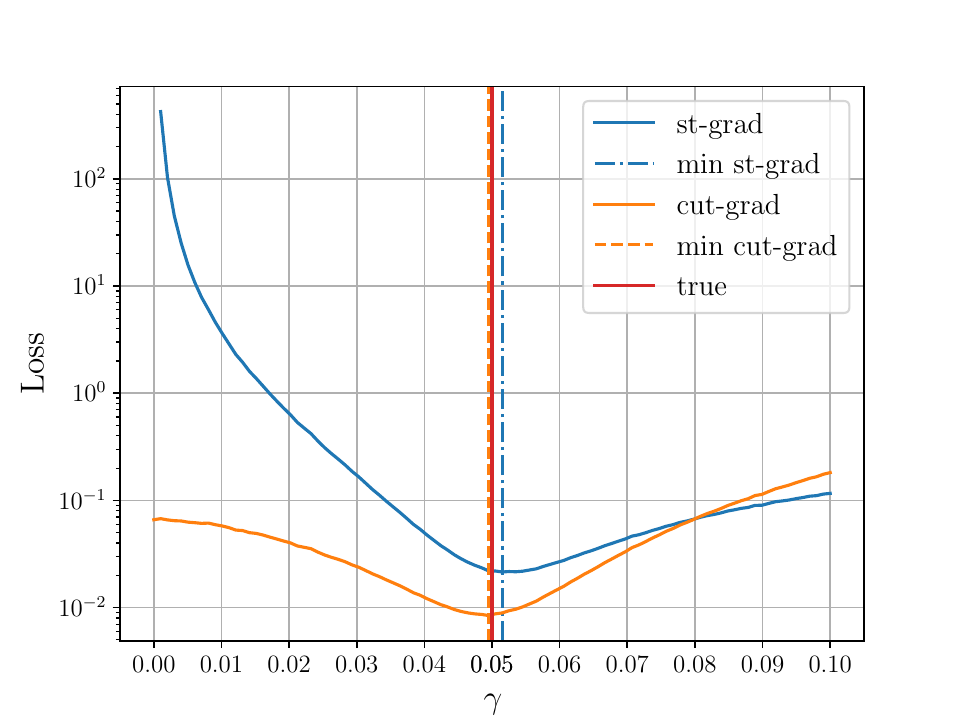}}
    \caption{Loss functions \eqref{eq:O1a} (cut-gradient) with $\gamma'$ fixed at 0.08, and \eqref{eq:O2a} (standard-gradient). We average the loss function over $100$ sets of 100 projection angles $\theta\sim\U(\Sb^{d_y-1})$.}\label{fig:darcy_loss_comparison}
\end{figure}
We will compare the effect of varying the available data between the standard-gradient and cut-gradient approaches.

\noindent\textbf{Data Specifics.} The data for this example is generated using a log-normal distribution on $z\in\Rb_+$ as $\mu(\alpha^\dagger)=\log\cN(m^\dagger, (\sigma^\dagger)^2)$ with values ($m^\dagger, \sigma^\dagger)=(0.5, 0.25)$. In this example the noise distribution $\eta=\cN(0, (\gamma^\dagger)^2\I)$ with $\gamma^\dagger=0.05$.

\noindent\textbf{Learning Specifics.}
To demonstrate the different behaviours of~\eqref{eq:A1} and~\eqref{eq:A2} as the number of available system realizations, $N$, grows, we plot in Figure~\ref{fig:darcy_loss_comparison} the loss function~\eqref{eq:O1a} (cut-gradient) with fixed ${\gamma'}=0.08$, the loss \eqref{eq:O2a} (standard-gradient), and the true data generating parameter $\gamma^\dagger$. We also show the loss minimizers $\gamma^\star$ for both losses.
Of course, gradients are not involved in computing the losses for values of $\gamma$, however both losses nonetheless behave differently from one another with varying $N$ because of the undesirable scaling dependence of \eqref{eq:O2a} on $\gamma$.

\noindent\textbf{Results.}
From Figure~\ref{fig:darcy_loss_comparison} we can see both estimates for $\gamma$ are close to the ground truth when $N$ is large, however the minimizer of~\eqref{eq:O1a} is closest to the ground truth for smaller $N$. Notice also the loss function values intersect at the value $\gamma'=0.08$ (kept fixed in the cut-gradient loss and hence, the curves intersect at this value).
The undesirable scaling dependence on $\gamma$ in~\eqref{eq:O2a} is then reflected in the numerical solution to the optimization problem formulated through the pseudo-time discretized \eqref{eq:A2}, a trait not present in the cut-gradient version~\eqref{eq:A1}, as will be shown in the following subsection.

\subsubsection{Convergence Comparison}\label{ssec:4_1_2}
To establish more strongly the superior robustness to empiricaliztion of the cut-gradient gradient method, in Figure~\ref{fig:darcy_convergenceplots1} we setup a series of learning problems.

\noindent\textbf{Data Specifics.} We vary the amount of data  available from $N$ from $10^1$ to $10^4$. The data is generated with different values of $\gamma$ in the uncorrelated noise covariance; we generate data with $z^{(n)}\sim \mu(\alpha^\dagger)$ as in Subsection~\ref{sssec_4_1_1}. 

\noindent\textbf{Learning Specifics.}  We perform the optimization 50 times using the time discretized flows~\eqref{eq:A1},~\eqref{eq:A2} randomizing initialization and data. We study both approaches, while varying the $N$ from $10^1$ to $10^4$, for various ground truth values of $\gamma^\dagger$. We note that we also learn $\alpha=(m, \sigma)$ during this procedure. We use the Adam oprimizer~\cite{kingma2014adam} with learning rate of $10^{-1}$.

\noindent\textbf{Results.} We plot the mean and standard deviations of the relative error in estimating $\gamma^\dagger$.
\begin{figure}[t]
    \centering
    \subfloat[$\gamma^\dagger=0.01$]{\includegraphics[width=0.32\linewidth]{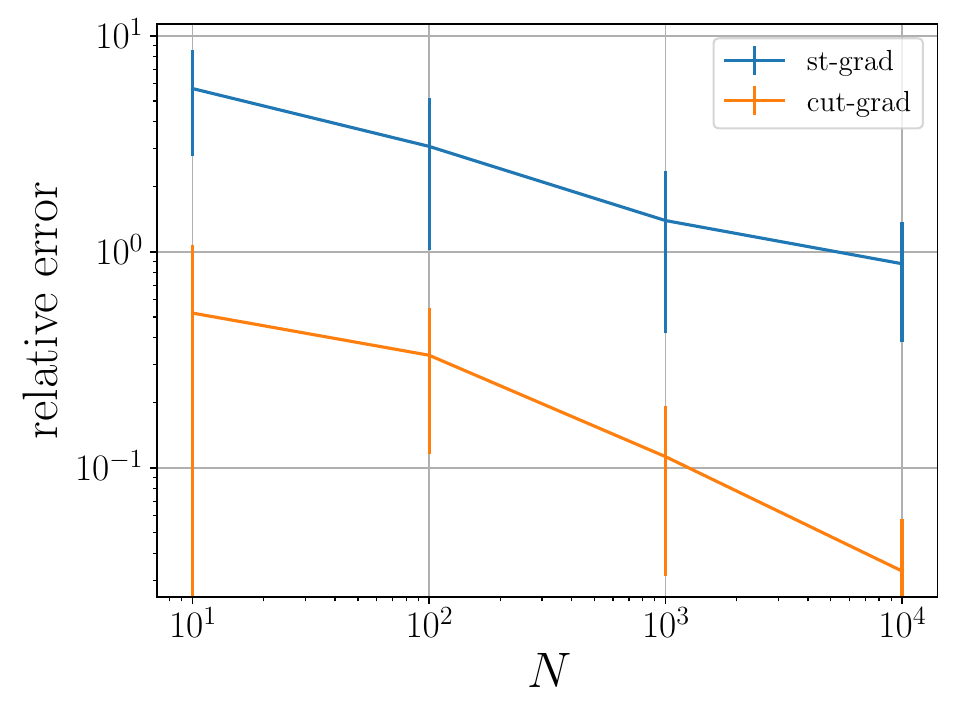}}
    \subfloat[$\gamma^\dagger=0.025$]{\includegraphics[width=0.32\linewidth]{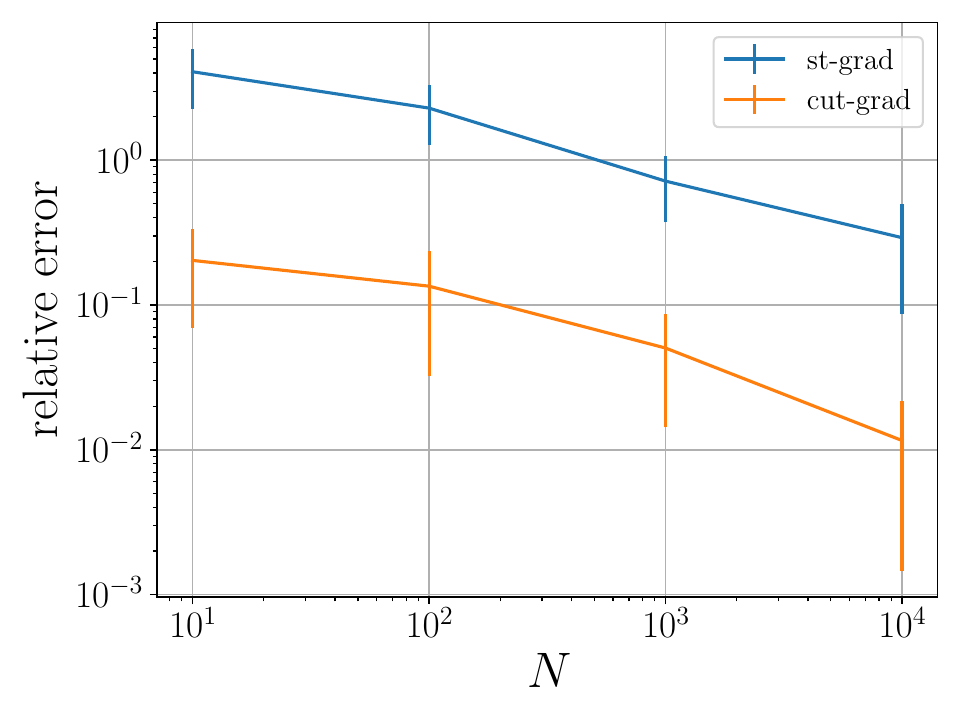}}   
    \subfloat[$\gamma^\dagger=0.063$]{\includegraphics[width=0.32\linewidth]{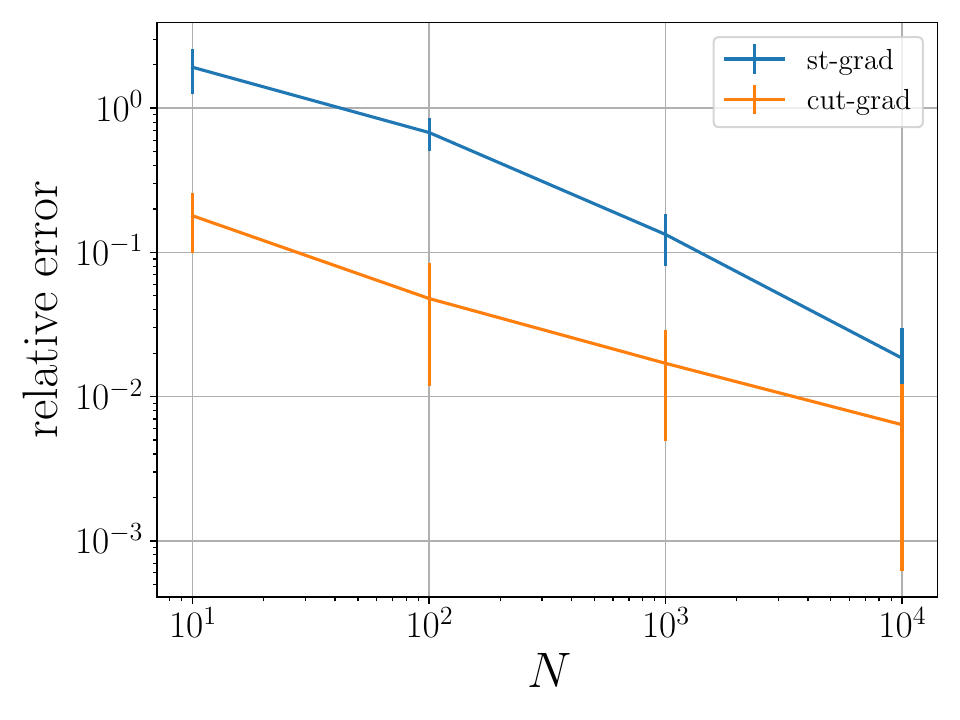}}\\
    \subfloat[$\gamma^\dagger=0.158$]{\includegraphics[width=0.32\linewidth]{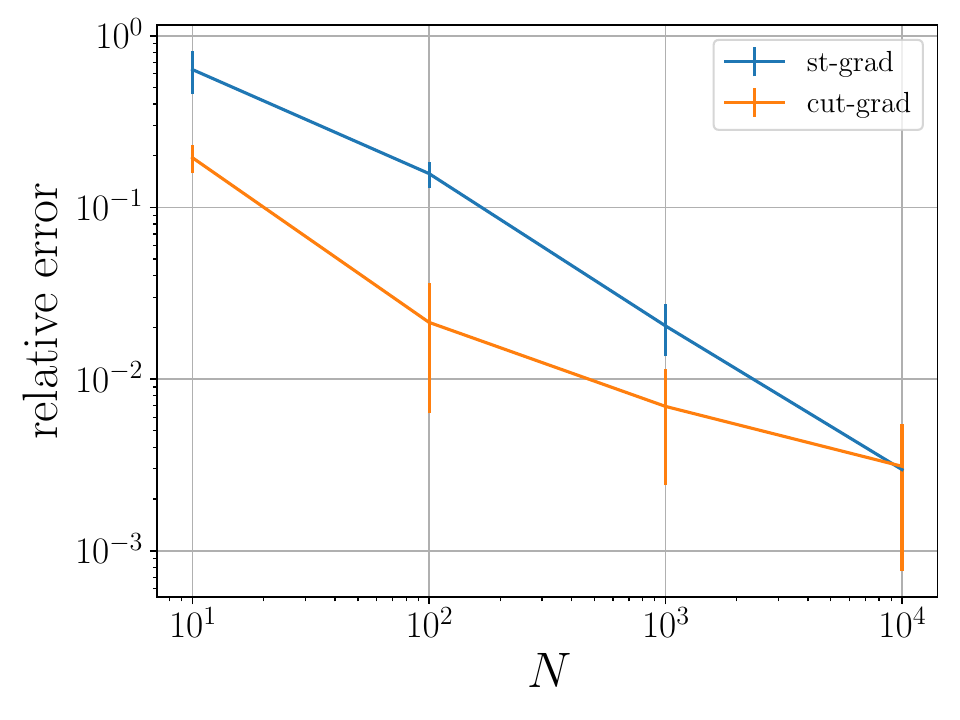}}
    \subfloat[$\gamma^\dagger=0.398$]{\includegraphics[width=0.32\linewidth]{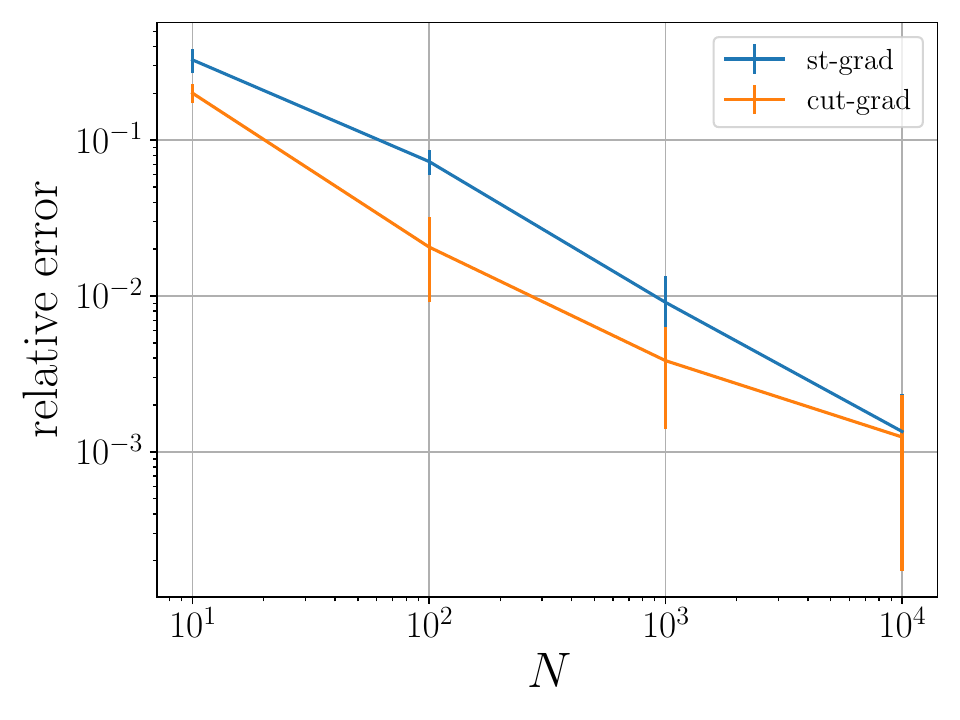}}
    \subfloat[$\gamma^\dagger=1.$]{\includegraphics[width=0.32\linewidth]{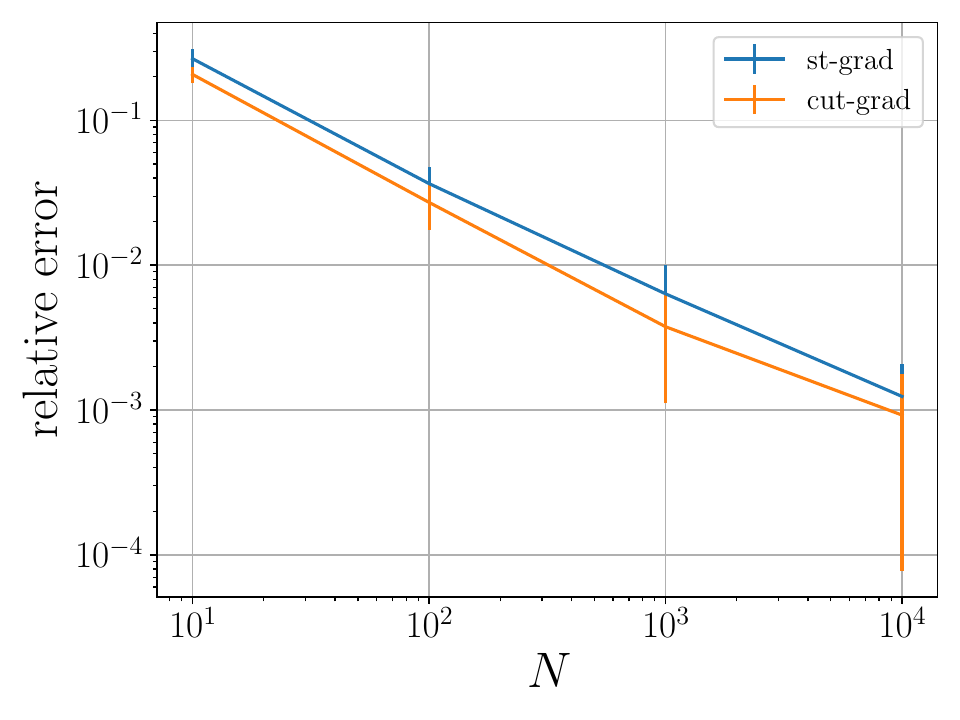}}
    \caption{Scaled diagonal noise estimation for the porous medium flow problem obtained though stochastic optimisation over $(\alpha, \gamma)$ averaged over 50 runs using \eqref{eq:A1} (cut-gradient) and \eqref{eq:A2} (standard-gradient).}\label{fig:darcy_convergenceplots1}
\end{figure}
Having established the improved empiricalization performance of~\eqref{eq:A1} in the scaled diagonal noise case, we move on to more involved covariance specifications.

\subsection{Correlated Noise}
\begin{figure}[t]
\centering
    \subfloat[5 data samples]{\includegraphics[width=0.48\linewidth]{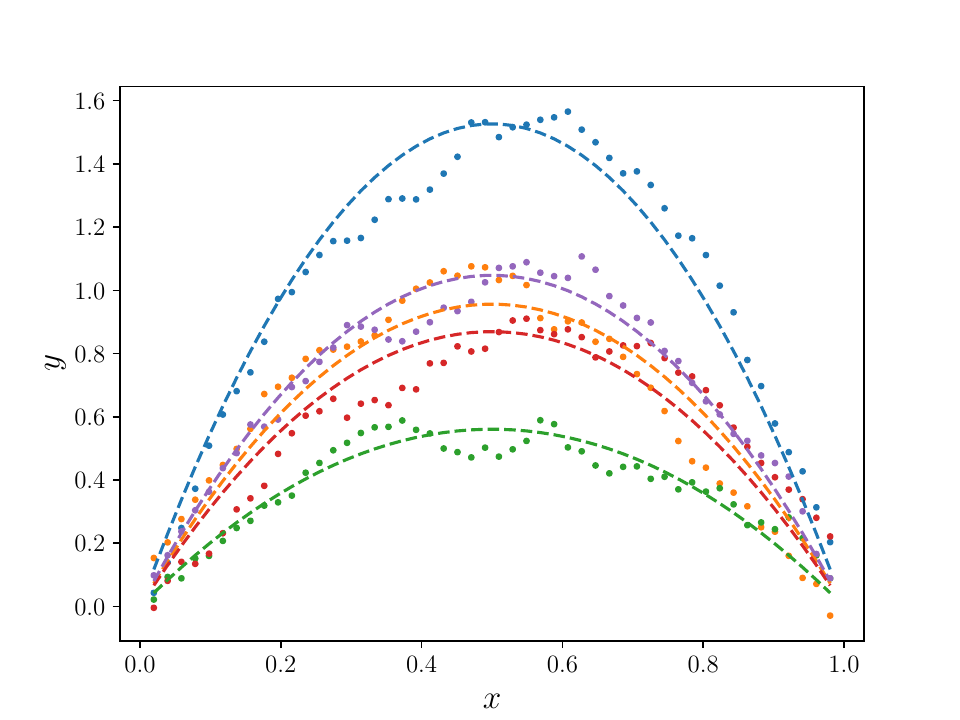}\label{fig:Darcy1D_data_corrlated_A}}
    \subfloat[Covariance]{\includegraphics[width=0.48\linewidth]{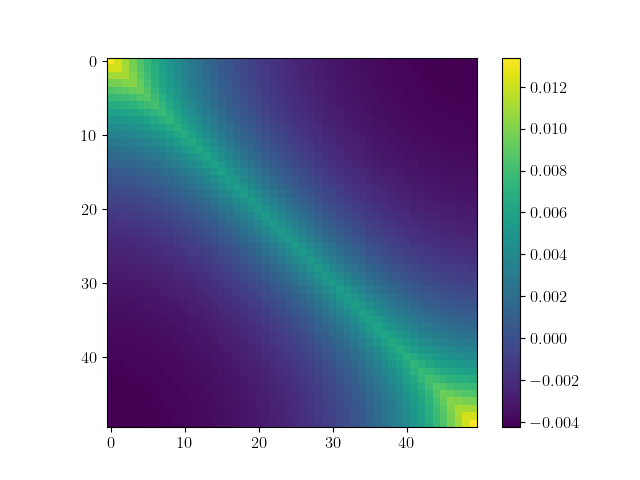}\label{fig:Darcy1D_data_corrlated_B}}
    \caption{Five solutions of the porous medium  flow problem with Wittle-Mat\'ern process noise with $\beta^\dagger$ parameters $\ell^\dagger=0.25$, $\gamma^\dagger=0.1$, $\upsilon^\dagger=0.5$ and $\alpha$ parameters $m=0.5$, $\sigma=0.5$. (a) shows 5 solution along with observations with spatially correlated corruptions, (b) the data generating noise covariance matrix $\Gamma(\beta^\dagger)$.\label{fig:Darcy1D_data_corrlated}}
\end{figure}

To build on the previous subsection we now look at a similar setting as in Subsection~\ref{ssec:4_1} but with a correlated noise distribution.
To do this, we parametrize $\xi^{(n)}\sim\eta(\Gamma):=\cN(0, \Gamma(\beta))$ with $\beta=(\gamma, \ell, \upsilon)$ representing amplitude, lengthscale, and roughness respectively. We specify $\Gamma(\beta)$ to be a Whittle-Mat\'ern process~\cite{LassiRoininen2014InverseProblemsandImaging, dunlop2017hierarchical} 
\begin{subequations}\label{eq:wm_covariance}
\begin{align}
    C_{\gamma, \ell, \upsilon} &= \sigma\ell^d(I-\ell^2\Delta)^{-\upsilon-\frac{d}{2}},\\
    \sigma &= \gamma^2\frac{2^d\pi^{d/2}\sf{Gamma}(\upsilon+d/2)}{{\sf{Gamma}}(\upsilon)}.
\end{align}
Here $\Delta$ is the Laplacian,  applied to functions satisfying homogenous Neumann boundary conditions
and spatial mean zero. Notation $\sf{Gamma}$ is for the standard Gamma function~\cite{abramowitz1965handbook} -- written in this way to avoid confusion with our covariance matrix $\Gamma$.
\end{subequations}
The covariance operator is diagonalized as $\Gamma(\beta)=Q\Lambda Q^{-1}$, where $\Lambda$ is a diagonal matrix of eigenvalues and $Q$ is a matrix of eigenfunctions $\psi(x)$ specified by the Helmholtz problem~\cite{gottlieb1985eigenvalues}
\begin{subequations}
\begin{align}
    \Delta\psi + \lambda\psi &= 0, \quad x\in D, \\
    \frac{\partial\psi}{\partial \hat{n}} &= 0, \quad x\in\partial D,
\end{align}
\end{subequations}
where $\hat{n}$ is the normal to $\partial D$. For $D=(0,1)$ we have solutions $\lambda_j=j^2\pi^2$ and $\psi_j(x)=\cos(j\pi x)$; the resulting eigenvalues of 
$C_{\gamma, \ell, \upsilon}$ are $\sigma\ell^d(\ell^2j^2\pi^2+1)^{-\upsilon-d/2}$. 
We write the expansion for $\xi^{(n)}\sim\eta(\Gamma)$ as a pushforward of a standard normal through independent identically distributed (i.i.d) draws $\xi^{(n)}_{1,j}$ as a Karhunen-Lo\`eve expansion
\begin{align}\label{eq:wm_kl_expansion}
    \xi^{(n)} = \sum_{j=1}^{J+1} \sqrt{\sigma\ell^d(\ell^2j^2\pi^2+1)^{-\upsilon-d/2}}\psi(x) \xi^{(n)}_{1,j};\quad \xi^{(n)}_{1,j}\sim \cN(0,1).
\end{align}
\subsubsection{Convergence Comparison}\label{ssec:4_2_1}
\begin{figure}[t]
    \centering
    \subfloat[$\ell^\dagger=0.01$]{\includegraphics[width=0.32\linewidth]{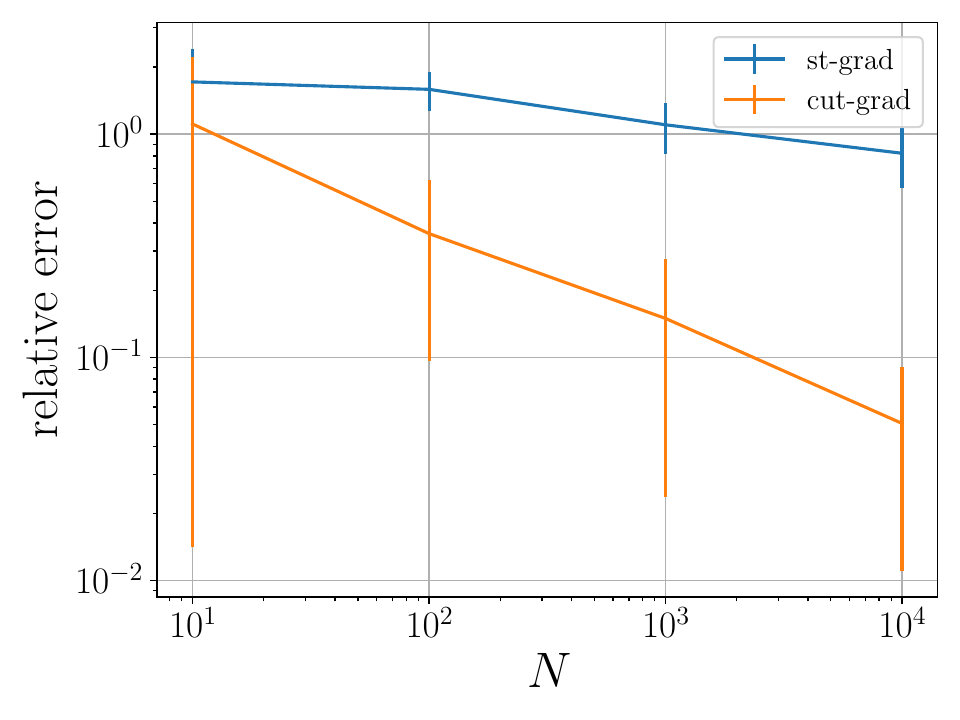}}
    \subfloat[$\ell^\dagger=0.035$]{\includegraphics[width=0.32\linewidth]{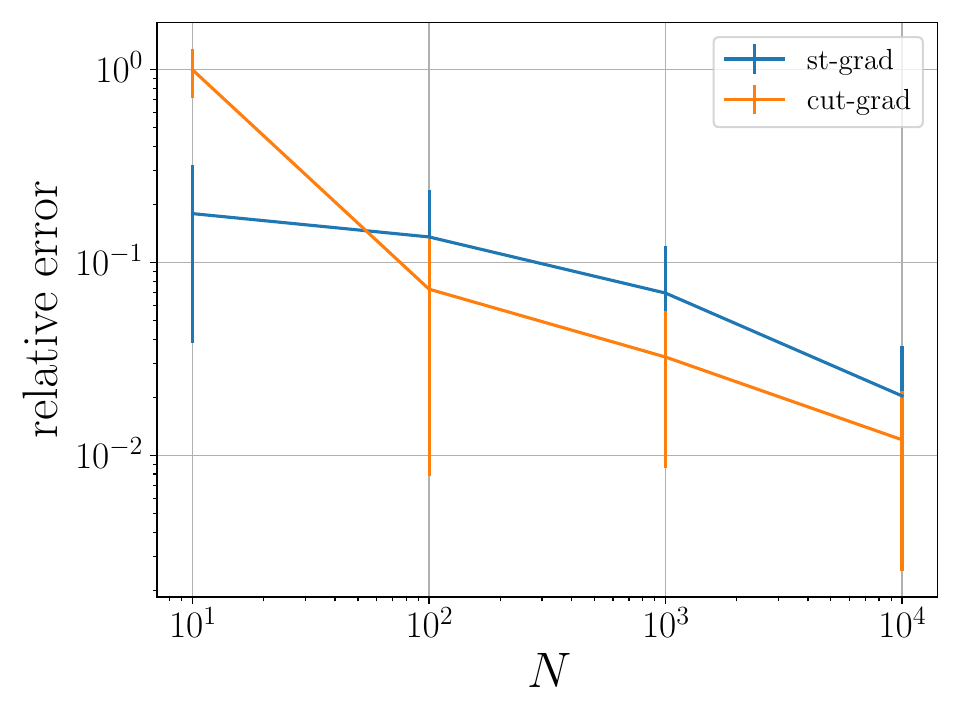}}   
    \subfloat[$\ell^\dagger=0.120$]{\includegraphics[width=0.32\linewidth]{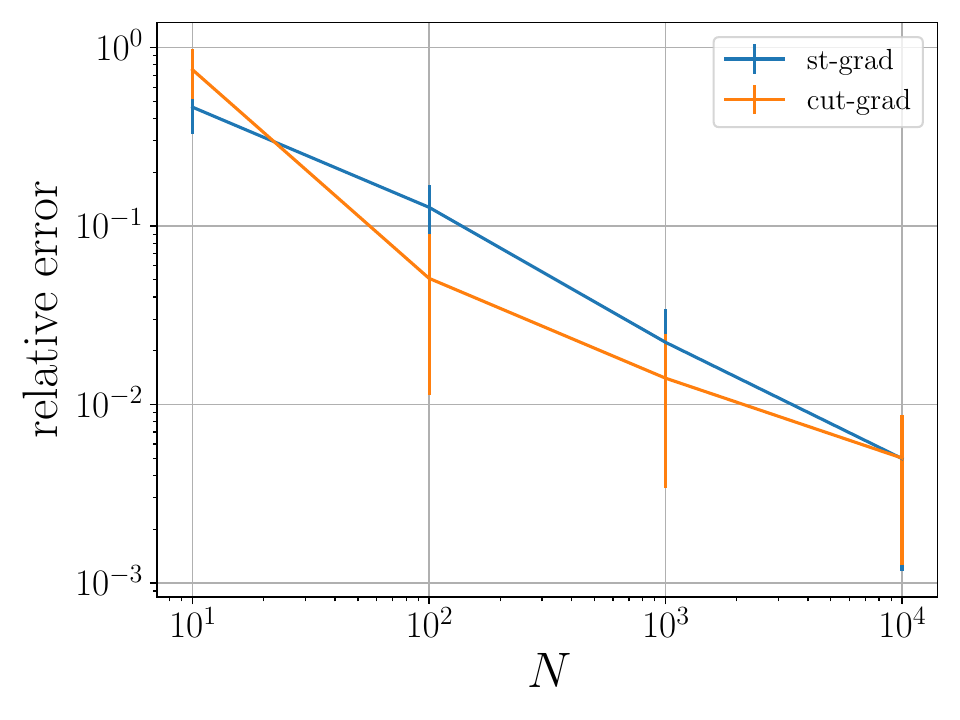}}\\
    \subfloat[$\ell^\dagger=0.416$]{\includegraphics[width=0.32\linewidth]{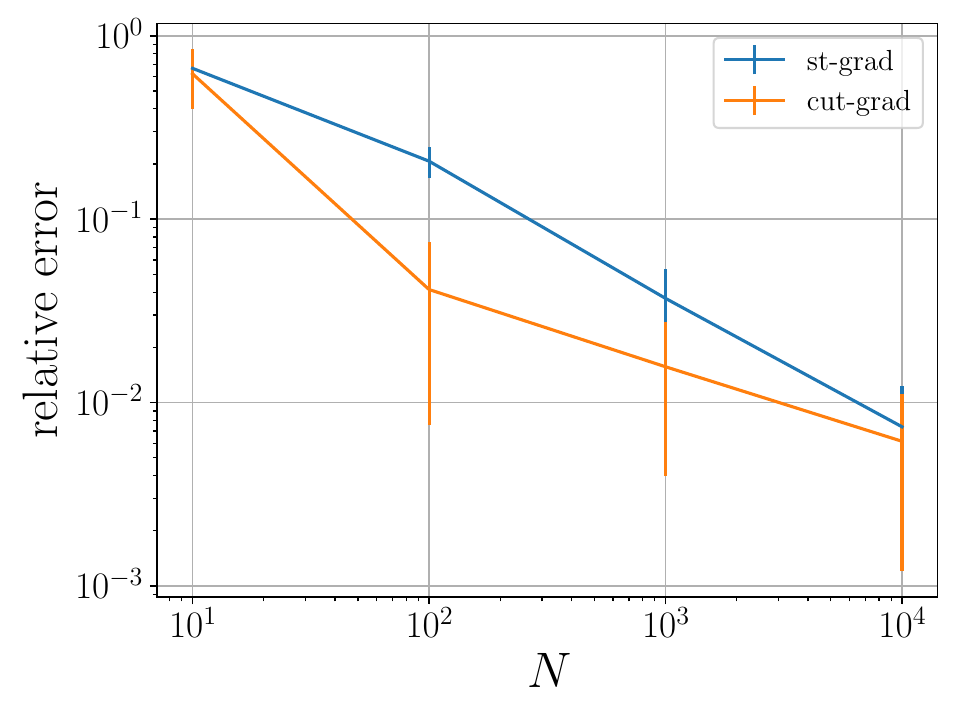}}
    \subfloat[$\ell^\dagger=1.443$]{\includegraphics[width=0.32\linewidth]{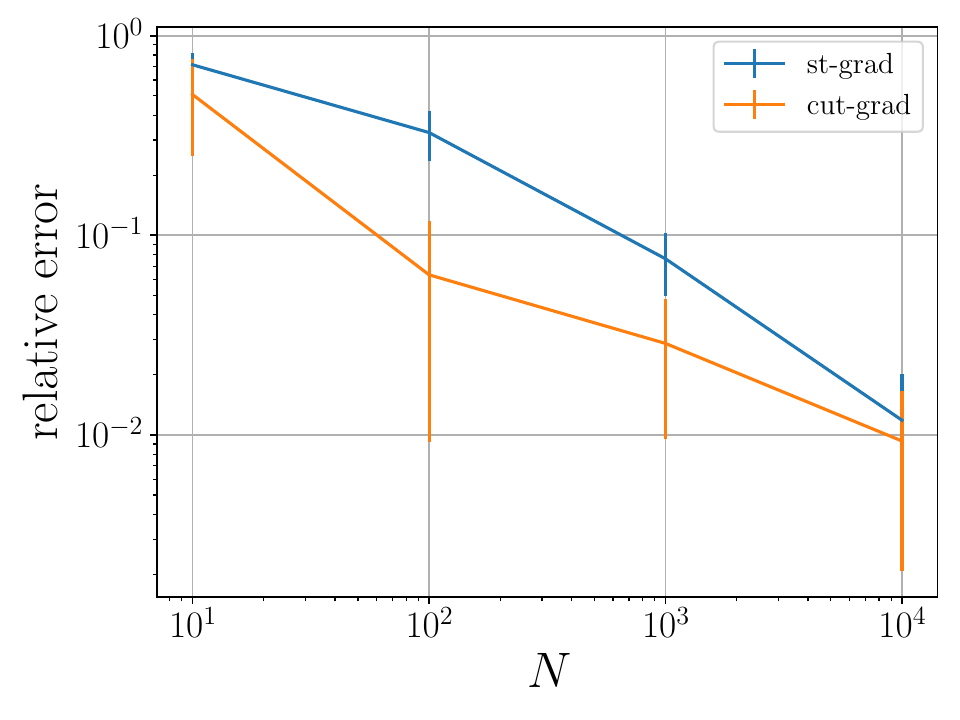}}
    \subfloat[$\ell^\dagger=5.0$]{\includegraphics[width=0.32\linewidth]{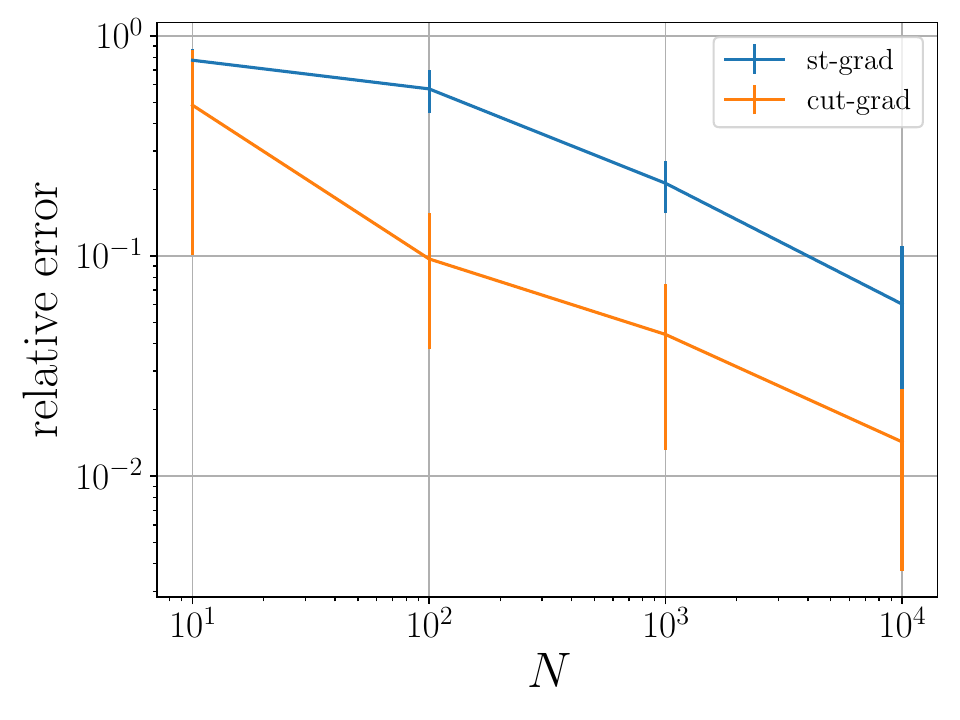}}
    \caption{Whittle-Mat\'ern noise estimation for the porous medium flow problem obtained though stochastic optimisation over $(\alpha, \beta)$ averaged over 100 runs with $\beta=( \gamma^\dagger, \ell, \upsilon^\dagger)$ ($\gamma^\dagger, \upsilon^\dagger$ fixed) using \eqref{eq:A1} (cut-gradient) and \ref{eq:A2} (standard-gradient) for varying $\ell^\dagger$ and $N$.\label{fig:Darcy_learn_ell}}
\end{figure}
We now make use of the aforementioned correlated covariance parametrization for the noise deconvolution and populational inversion problem on the porous medium flow problem. As in Subsection~\ref{ssec:4_1_2} we run a series of repeated optimizations and plot predictive errors for the true data generating parameters.

\noindent\textbf{Data Specifics.}
We specify $\mu(\alpha)$ similarly to Subsection~\ref{sssec_4_1_1}, however we now set $\Gamma(\beta)$ to be the correlated covariance in~\eqref{eq:wm_covariance} with $\beta=(\gamma, \ell, \upsilon)$ for different values of $\ell$.
In Figure~\ref{fig:Darcy1D_data_corrlated_A} we show 5 example sampled solutions $u$ along with the observations with \textit{correlated} noise. In Figure~\ref{fig:Darcy1D_data_corrlated_B} we show an example of the data-generating covariance matrix $\Gamma(\beta^\dagger)$. 

\noindent\textbf{Learning Specifics.}
To facilitate the computation of the weighted norm in the sliced-Wasserstein term needed to compute $\Gamma'^{-1/2}$, we use the normalized diagonalized form of the covariance operator
$\Gamma'(\beta)^{-1/2} = \Q\Lambda_\beta^{-1/2}\Q^\top$ where
$\Q_{ij} =\sqrt{2/d_y} \cos(j \pi x_i)$. 
One could also consider using a modified discrete cosine transform, or explicitly construct $\Gamma(\beta)$~\cite{solin2020hilbert} in order to compute the square root and solve against $y$.
We run the convergence study  100 times randomizing over the data and initialization. 

\noindent\textbf{Results.}
We solve the series of deconvolution and populational inversion problems by simulating~\eqref{eq:A1} (cut-gradient) and~\eqref{eq:A2} (standard-gradient) over $(\alpha, \beta)$, where $\alpha=(m, \sigma)$ is specified in~\ref{ssec:4_1} and $\beta=(\gamma^\dagger, \ell, \upsilon^\dagger)$ with $\gamma^\dagger=0.1, \upsilon^\dagger=0.5$ fixed, for various $N$ from $10^1$ to $10^4$. The relative errors for both approaches for varying $N$ and data generating lengthscale values $\ell^\dagger$ are shown in Figure~\ref{fig:Darcy_learn_ell}. 
\begin{figure}[t]
    \centering
    \subfloat[Loss]{\includegraphics[width=0.32\linewidth]{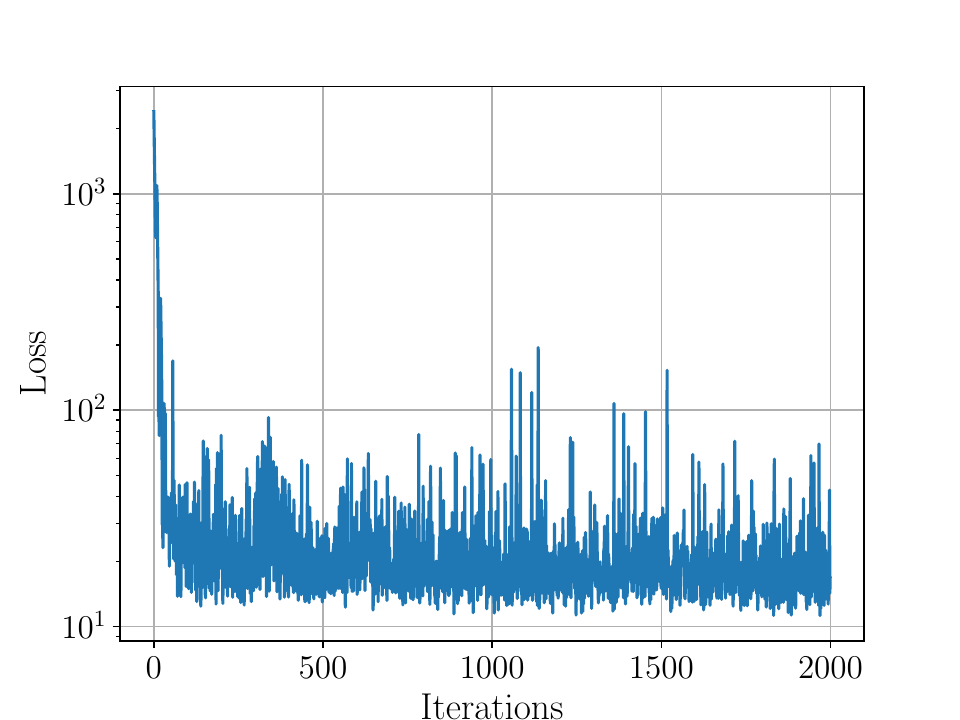}}
    \subfloat[Convergence $\beta$]{\includegraphics[width=0.32\linewidth]{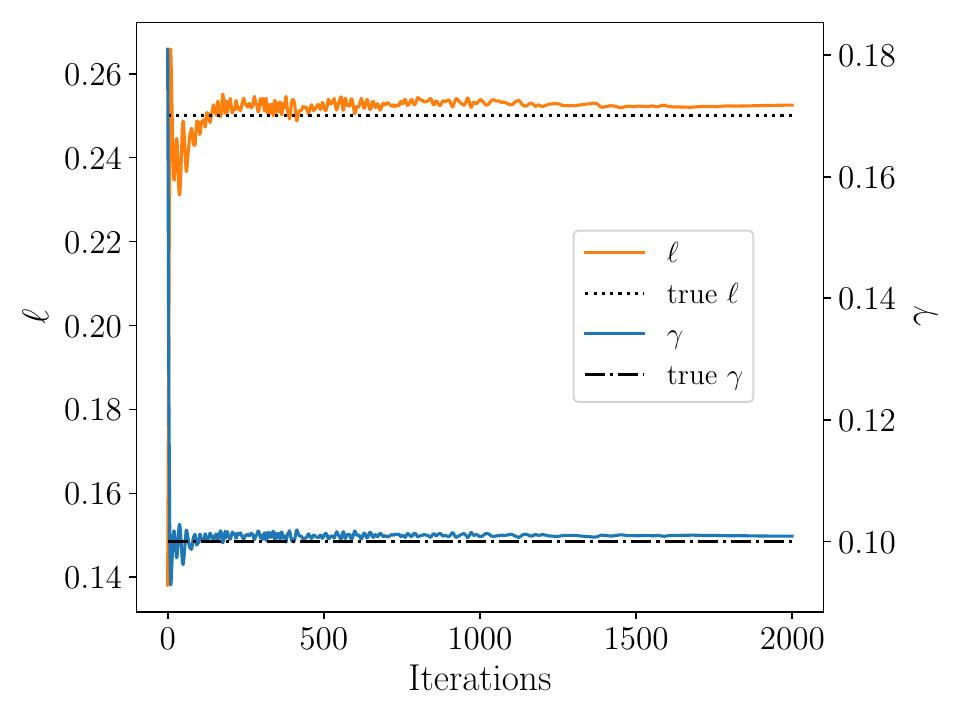}}
    \subfloat[Convergence $\alpha$]{\includegraphics[width=0.32\linewidth]{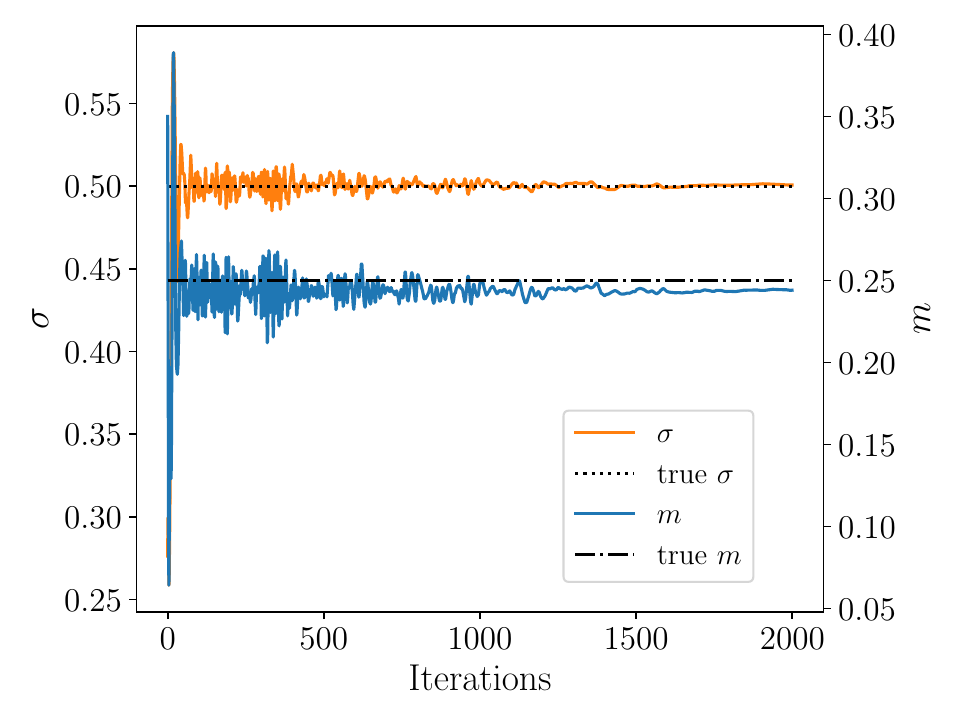}}
    \caption{Deconvolution and distributional inversion for the 1D porous medium flow problem with Wittle-Mat\'ern process noise and $10^4$ data vectors $y$. (a) shows the loss function, (b) the convergence of $\beta$, (c) the convergence of $\alpha$.\label{fig:Darcy1D_ConvergenceKL}}
\end{figure}

\subsubsection{Combined learning}\label{ssec:4_2_2}
We conclude the numerical examples with the porous medium flow problem by solving a deconvolution and distributional inversion problem where we jointly learn the noise amplitude $\gamma$, noise lengthscale $\ell$, and the permeability field log-normal distribution parameters $m$, $\sigma$. 

\noindent\textbf{Data Specifics.} The parameters $\beta=(\gamma, \ell, \upsilon^\dagger)=(0.1, 0.25,0.5)$, with $\upsilon^\dagger$ fixed. We set the number of data and number of samples used in the loss to $N=10^4$ 
and we generate data with $z^{(n)}\sim \mu(\alpha^\dagger)$ as in Subsection~\ref{sssec_4_1_1}. 

\noindent\textbf{Learning Specifics.} We set the number of samples $N_s$ used in the loss to also be $10^4$. Having previously established the superiority of the cut-gradient form, we focus solely on using~\eqref{eq:A1} for inference in this subsection.

\noindent\textbf{Results.}
We show the convergence plot over iterations in Figure~\ref{fig:Darcy1D_ConvergenceKL}. We use the Adam optimizer~\cite{kingma2014adam} an initial learning rate of $10^{-1}$ decayed by half 10 times over the 2k iterations. 
The relative errors on the last 100 iterations are: 0.21\% for $\sigma$, 2.25\% for $m$, 0.88\% for $\gamma$, and 0.99\% for $\ell$.

\section{Elastodynamics}\label{sec:elastodyn}

In relation to Contribution~\ref{con:C5}, we investigate the effectiveness of the proposed methodology when applied to a model of two-dimensional
elastodynamics, subject to Rayleigh damping~\cite{zhang2021novel, bleyer2024comet}.
Let $u=(u_1, u_2)^\top$ denote the two dimensional displacement field and
\begin{align}
    &\varepsilon = \frac{1}{2}\bigl(\nabla u + (\nabla u )^\top \bigr),
\end{align}
the induced strain. Balance of linear momentum gives the following closed model for the evolution of $u=u(x,\kt):$\footnote{We note the gradient flow pseudo-time is $t$, the discretized algorithm pseudo-time is $\st$, and the physical systems real time is $\kt$}
\begin{subequations}\label{eq:damped_elastodyn_sf}
    \begin{align}
    &\rho \ddot{u}  = \nabla \cdot \bsigma -\rho R_m\dot{u} - f;\quad x, \kt\in D\times [0,T],\\
    &\bsigma = \mathsf{C} : (\varepsilon+R_k\dot{\varepsilon});
\end{align}
\end{subequations}
here $\rho$ is the density, $\bsigma$ the Cauchy stress tensor, $\mathsf{C}$ the fourth order stiffness tensor, $f$ the body force, $R_m, R_k$ the Rayleigh coefficients associated to the mass and stiffness respectively. We consider the equations on a two dimensional physical domain $D=[0,l]\times[-h/2,h/2]$, $x=(x_1, x_2)\in D$. The equations are subject to the boundary and initial conditions
\begin{subequations}
\begin{align}
    &\bsigma((x_1, -h/2),\kt)\cdot \hat{n} = \bsigma((x_1, h/2),\kt)\cdot \hat{n} = 0,\\
    &u((0,x_2),\kt) = u((l, x_2),\kt) = 0,\\
    &u(x, 0) = 0,\\
    &\dot{u}(x,0) = 0.
\end{align}
\end{subequations}
Thus there is initially no displacement, the body is at rest, the body is clamped at the edges in the $x_1$ direction and free at the edges in the $x_2$ direction. The body force $f$ sets the beam in motion and will be specified in each numerical example subsection. For an isotropic inhomogeneous (piece-wise constant or weakly inhomogeneous) material we take
\begin{align}
    \bsigma(u) = \lambda_e(x) \mathrm{tr}(\varepsilon)\I + 2\mu_e(x)\varepsilon,
\end{align}
where $\lambda_e(x), \mu_e(x)$ are the two lam\'e parameters, which we specify in terms of the Young's modulus $z(x)$ and the Poisson ratio, $\nu_e$, in two spatial dimensions as~\cite{salenccon2012handbook, slaughter2012linearized}
 \begin{align}
     \lambda_e(x) = \frac{z(x)\nu_e}{(1+\nu_e)(1-\nu_e)}, \quad\quad
     \mu_e(x) = \frac{z(x)}{2(1+\nu_e)}.
 \end{align}
 We model the random spatially varying Young's modulus $z(x)\sim\mu$ as a shifted lognormal field
\begin{subequations}
\label{eq:youngsrandomfield}
\begin{align}
    z((x_1,x_2)) &= E_e(\exp(a(x_1)) + b),\\
     a(x_1)&\sim\cN(0, C^l_{\gamma_\mu, \ell_\mu, \upsilon_\mu}).
\end{align}    
\end{subequations}
 Here $E_e$ is a base Young's modulus, and $b$ is a shift factor ensuring the sampled spatially varying Young's modulus is not so small as to incur numerical instability in the elastodynamics simulation -- and it is set to $10^{-1}$. The covariance $C^l_{\gamma_\mu, \ell_\mu, \upsilon_\mu}$ is defined as in~\eqref{eq:wm_covariance} with domain-adapted eigenfunctions $\cos(j\pi x/l)$ replacing each
 $\psi_j(x):=\cos(j\pi x)$ found there.
 We highlight that the material properties vary only in the $x_1$ direction and
 not the $x_2$ direction.

We may formulate the PDE weakly.
To be specific we consider the following setting: $z \in L^\infty(D; \bR_{> b E_e})$, $u\in C([0,T]; H^1(D;\bR^2))$, $f\in L^\infty([0,T]; L^2(D; \bR^2))$.\footnote{Spaces for $\dot{u}, \ddot{u}$ are the same as for $u$.}
The solution operator maps a 2D Young's modulus field to a 2D time varying field, however the Young's only varies in one dimension and we only expect small variation of the solution field in the $x_2$ direction. Hence, we specify $F^\dagger$ appearing in~\eqref{eq:data_gen} as a map from a 1D function to the horizontal mid-plain of the time-varying solution. This has the form $F^\dagger:L^\infty([0,l]; \bR_{> b E_e})\mapsto C([0,T]; H^1_0([0,l];\bR^2))$. 
In Figure~\ref{fig:beamdefo} we show the beam at the peak of its dynamic deformation. 
\begin{figure}[t]
    \centering
    \includegraphics[width=1\linewidth]{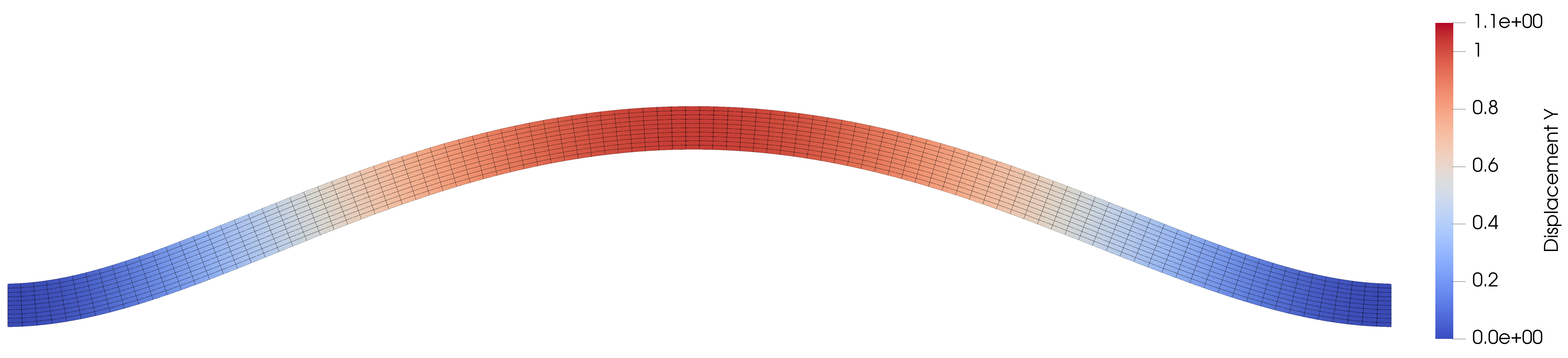}
    \caption{Snapshot of displacement of a sampled elastodynamic model with $100\times 10$ FE elements along the length and height of the domain.}
    \label{fig:beamdefo}
\end{figure}

For the surrogate modelling task to be compatible with a standard neural operator setup we replicate the Young's modulus across time as $z(x,\kt)=z(x), \forall \kt\in [0, T]$ with $z(x)$ defined in~\eqref{eq:youngsrandomfield}. Thus, we can use a 2D FNO~\cite{li2020fourier}, $F^\phi:C([0,T]; L^\infty([0,l]; \bR_{> b E_e})\mapsto C([0,T]; H^1_0([0,l];\bR))$ and map the 1D Young's modulus function to the vertical displacement of the beam along its mid-height across time.
We learn this surrogate model while concurrently learning $(\alpha, \Gamma)$ using~\eqref{eq:learn_phi} and Algorithm~\ref{alg:algorithm2}.

In the following subsections, three observation and inference setups are considered.
\begin{itemize}
    \item The first of these, Subsection~\ref{ssec:5_1}, considers the case where we measure accelerations at various spatial locations along the beam at very high frequnecy in time -- we may view the data as continuous time; the signal is corrupted with uncorrelated Gaussian noise; from this corrupted data we infer the noise standard deviation $\gamma_\eta$, the Young's modulus random field amplitude $\gamma_\mu$, and the Young's modulus random field lengthscale $\ell_\mu$.
    \item  The second example, Subsection~\ref{ssec:5_2}, considers the measurement of displacements sparsely in time and in space; the data generating noise is correlated in time and space with a Whittle-Mat\'ern field; we learn a misspecified noise model to learn the standard deviation of a Gaussian with a scaled diagonal noise $\gamma_\eta$, along with the same $\alpha$ parameters $\gamma_\mu, \ell_\mu$ as in Subsection~\ref{ssec:5_1}.
    \item  The last elastodynamic example, Subsection~\ref{ssec:5_3}, considers displacement measurements collected densely in space and in time; the noise is correlated in space and time as in Subsection~\ref{ssec:5_2}; we recover both the amplitude $\gamma_\eta$, and the lengthscale $\ell_\eta$ of the correlated noise random field, along with the same Young's modulus random field information as in Subsection~\ref{ssec:5_1}. 
\end{itemize}

We now detail the setup information common to all three elastodynamic examples, further information specific to each setup is explained in the individual example subsections.

\noindent\textbf{Data.}
We use $N=10^4$ data realizations.
We set in all three examples, $\alpha=(\gamma_\mu^\dagger, \ell_\mu^\dagger, \upsilon_\mu^\dagger) = (0.25$, $1$, $2$) in~\eqref{eq:youngsrandomfield}; $\upsilon^\dagger_\mu$ is kept fixed at its true value and not learned in the inference scheme. We collect data from time $\kt=0$ to $\kt=T=1$.

\noindent\textbf{Learning.}
We use $N_s=500$ samples per evaluation of~\eqref{eq:O1b}.
We set the regularization in~\eqref{eq:O1b} to have the form $h(\alpha) = 1/(2\cdot10^2)\|\log(\ell_\mu, \gamma_\mu)-\log(1, 2)\|^2$.
We learn the surrogate model FNO with $N_F=100$ batch-size to learn $\phi^\star_\st$ in~\eqref{eq:learn_phi} with a pretraining number of iterations of $\sT_\mathrm{pre.} = 10^3$ with $N_\mathrm{pre.}=100$, and $\sT_\mathrm{inner} = 10$ step on $\phi$ minimizing $\eqref{eq:learn_phi}$ per step on $(\alpha, \Gamma)$. We terminate the training data-pairs acquisition after $\sT_a = 2500$ steps. On $(\alpha, \Gamma)$ we use a decaying learning rate initialized at $10^{-1}$ halved 8 times over training; for $\phi$ we use a decaying learning rate initialized at $10^{-3}$ decayed by half 8 times; both with the Adam optimizer~\cite{kingma2014adam}. 

\noindent\textbf{Solver.}
For all three examples in this section we specify the number of elements in the FE model to be $100\times10$ in $x_1,x_2$ repectively and the nuerical time-integration step is $\Delta_\kt=0.01$. We also set $\rho=7.8\cdot 10^{-3}\, U_mU_d^{-3}$, $\nu_e=0.3$, $E_e=10^4 \,U_f U_d^{-2}$, where $U_m$ are units of mass, $U_d$ are units of distance, $U_f=U_m U_d U_\kt^{-2}$ are units of force, $U_\kt$ are units of time. The $N_\beta, N_\gamma$, parameters for the Newmark scheme are $1/4, 1/2$, and we set $R_m, R_k$,  to $10^{-1}, 5\cdot10^{-3}$, respectively.
For computational efficiency, we do not update $\bP^\st_{z,u}$ every step (Algorithm~\ref{alg:algorithm2} lines 8-11) -- but withhold computing new training pairs $u, z$ until a sufficient batch is collected in order to take advantage of parallelization capabilities of modern multi-core hardware;  this batch-size is set to 60. We use FEniCSx code~\cite{baratta2023dolfinx, alnaes2014unified,bleyer2024comet, langtangen2017solving} for the solver which uses an implicit Newmark-$\beta$ time integrator~\cite{newmark1959method}. 

\subsection{Observing Acceleration}
\label{ssec:5_1}
\begin{figure}[t]
    \centering
    \subfloat[Solution and observations]{\includegraphics[width=0.32\linewidth]{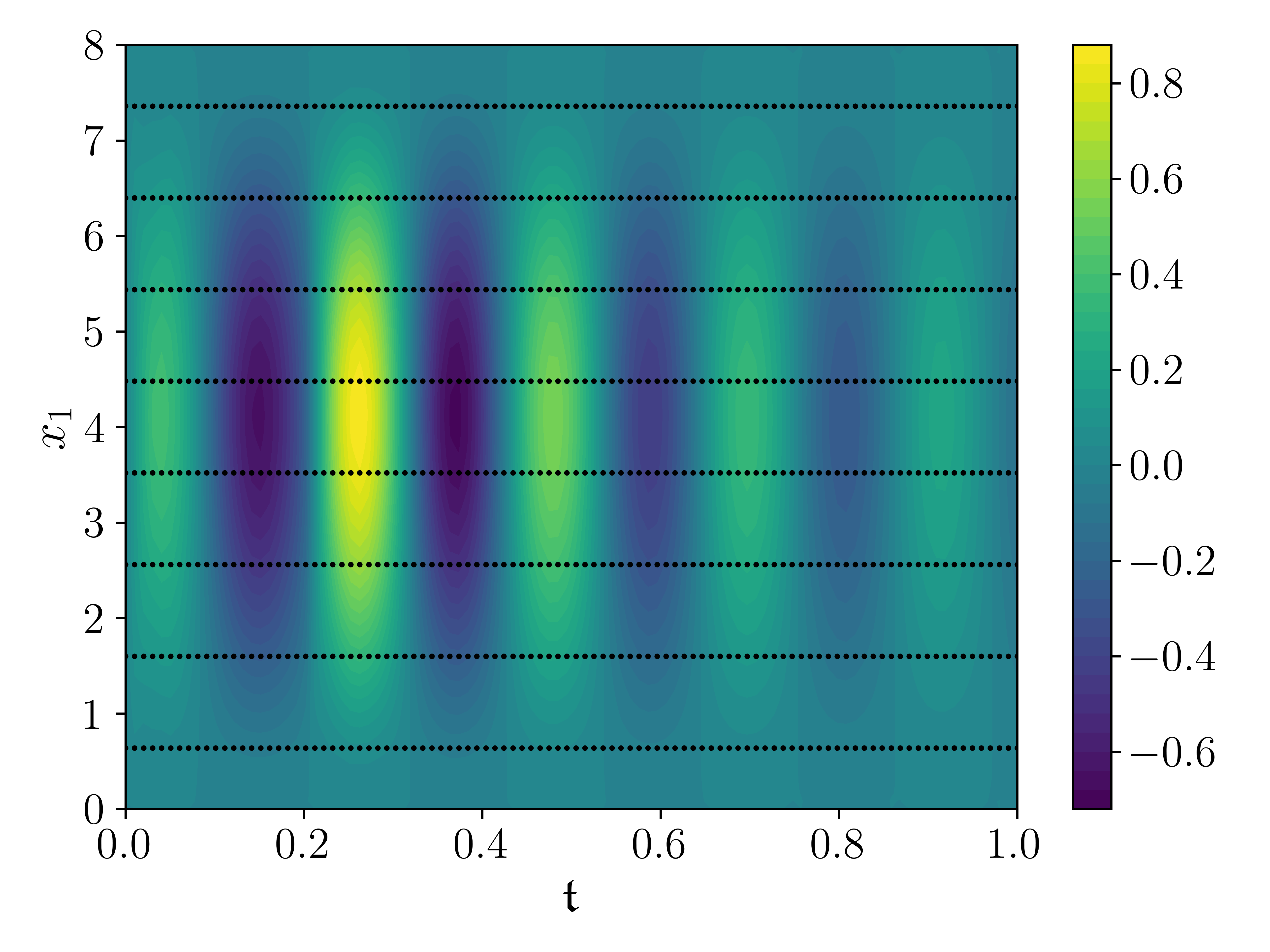}\label{fig:elasto:1:U0}}
    \subfloat[data at locations]{\includegraphics[width=0.32\linewidth]{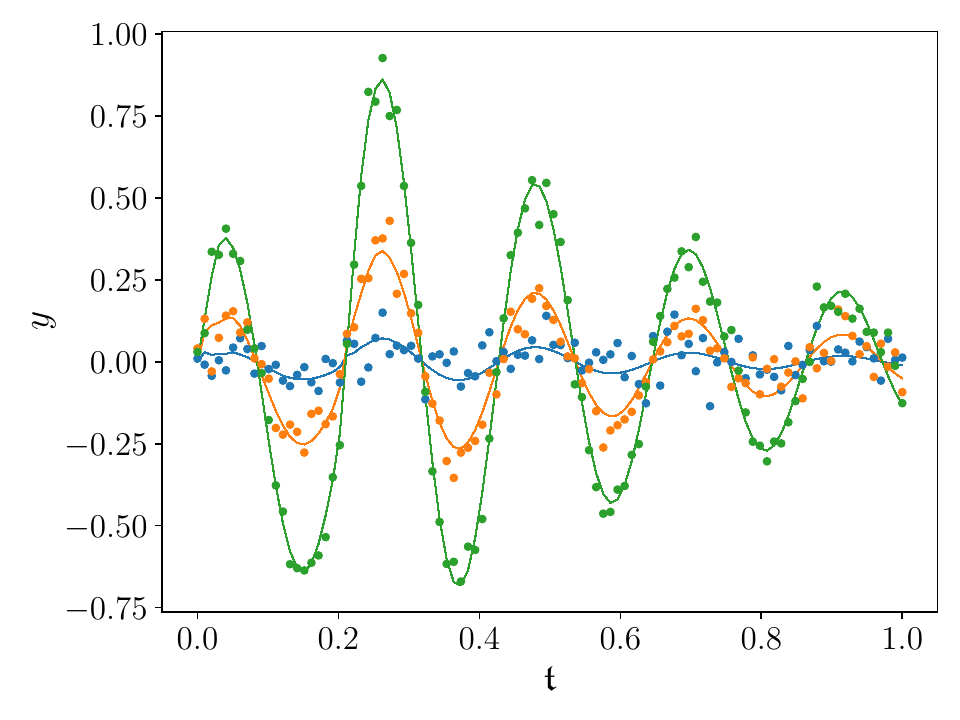}\label{fig:elasto:1:obsT}}
     \subfloat[data at times]{\includegraphics[width=0.32\linewidth]{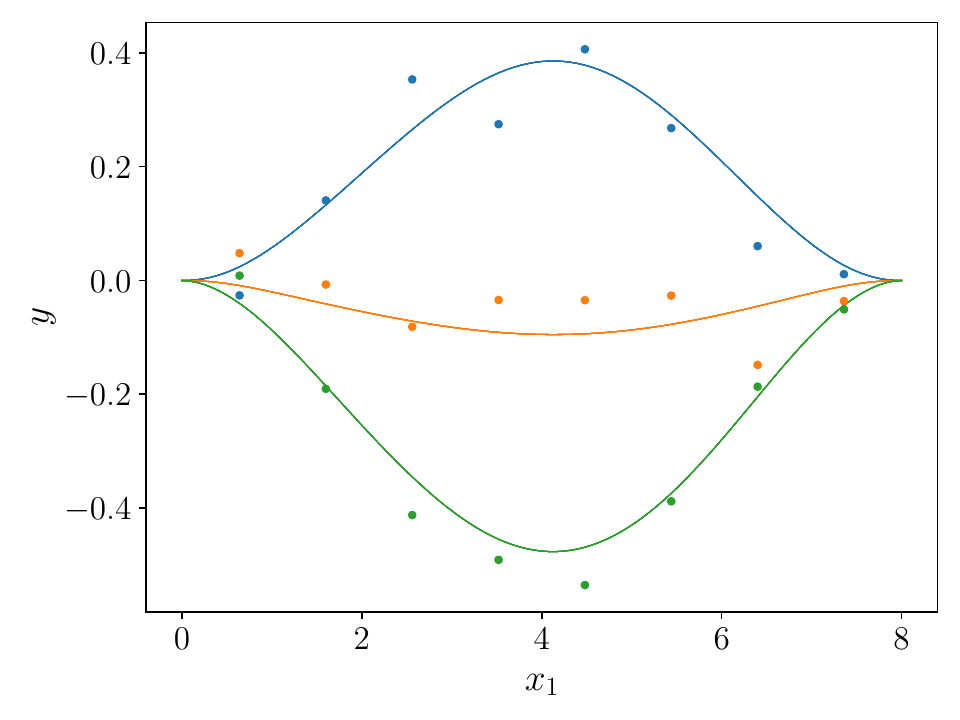}\label{fig:elasto:1:obsX}}
    \caption{(a) Sample solution with dotted observation locations. (b) Solution with continuous time observation lines. (c) Solution and observational data across beam at different time instances.}
\end{figure}
\begin{figure}[t]
    \centering
    \subfloat[Data Loss]{\includegraphics[width=0.48\linewidth]{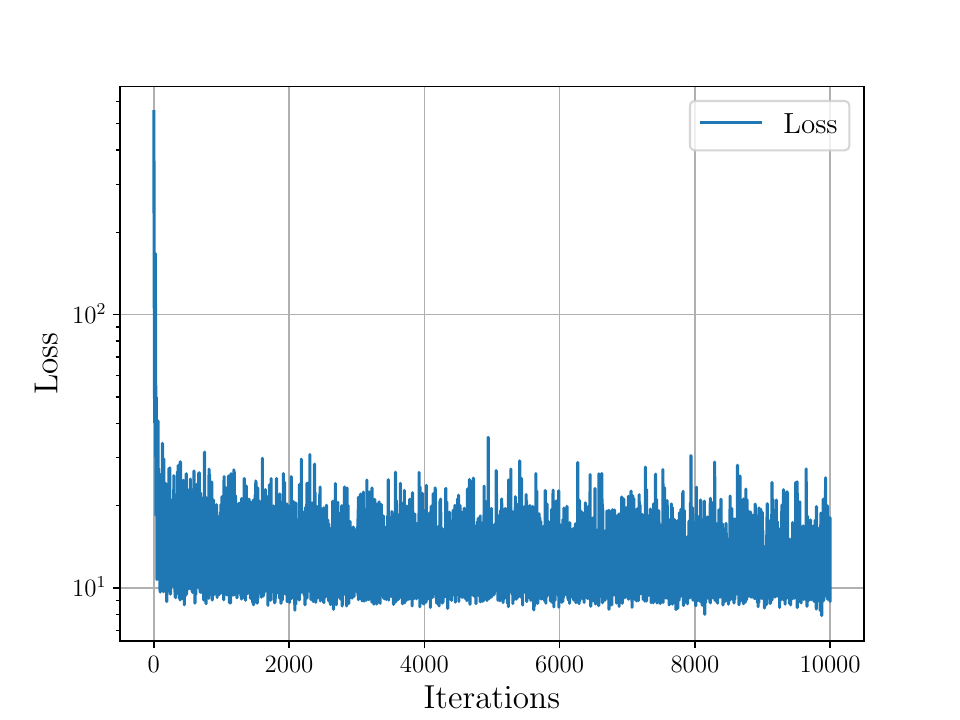}}
    \subfloat[Surrogate Loss]{\includegraphics[width=0.48\linewidth]{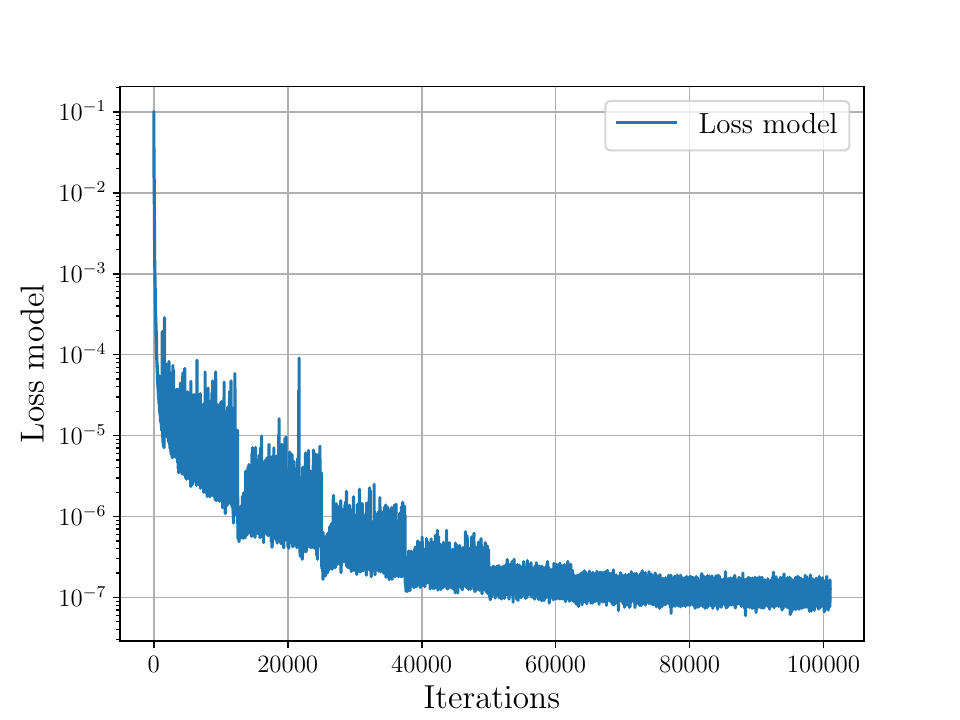}}
    \caption{Data-fit Loss and Surrogate model supervised loss on empirical measure. Only the first 2500 iterations collect PDE solver solutions into $\bP_{z,u}^\st$~\eqref{eq:cummul_emp_dataset}.}
    \label{fig:elasto:1:losses}
\end{figure}
\begin{figure}[t]
    \centering
    \subfloat[$\gamma_\mu$-convergence]{\includegraphics[width=0.32\linewidth]{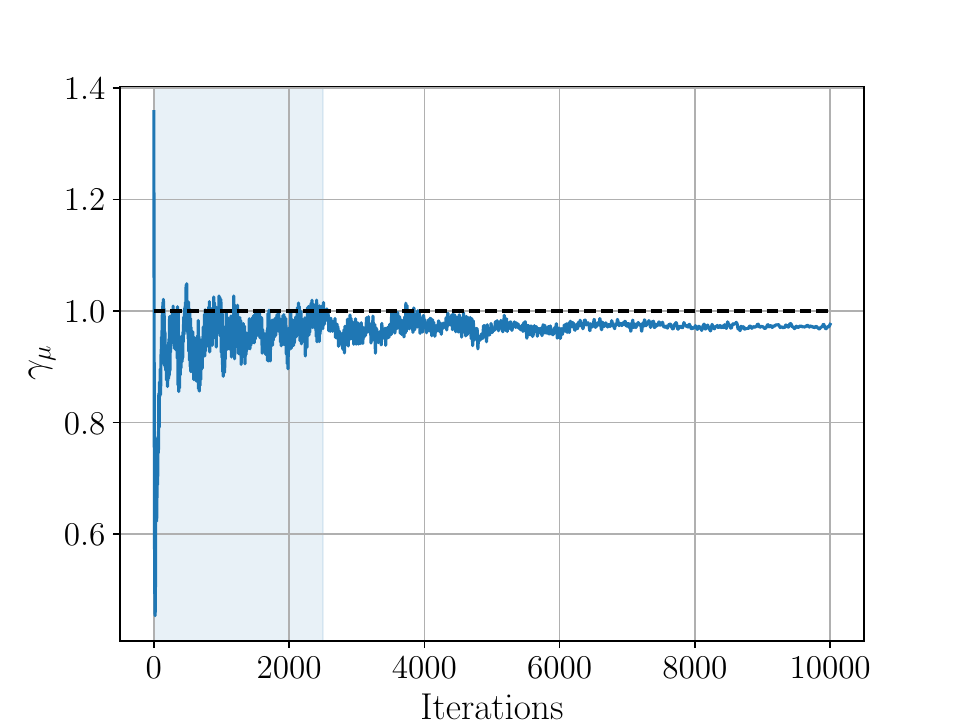}}
    \subfloat[$\ell_\mu$-convergence]{\includegraphics[width=0.32\linewidth]{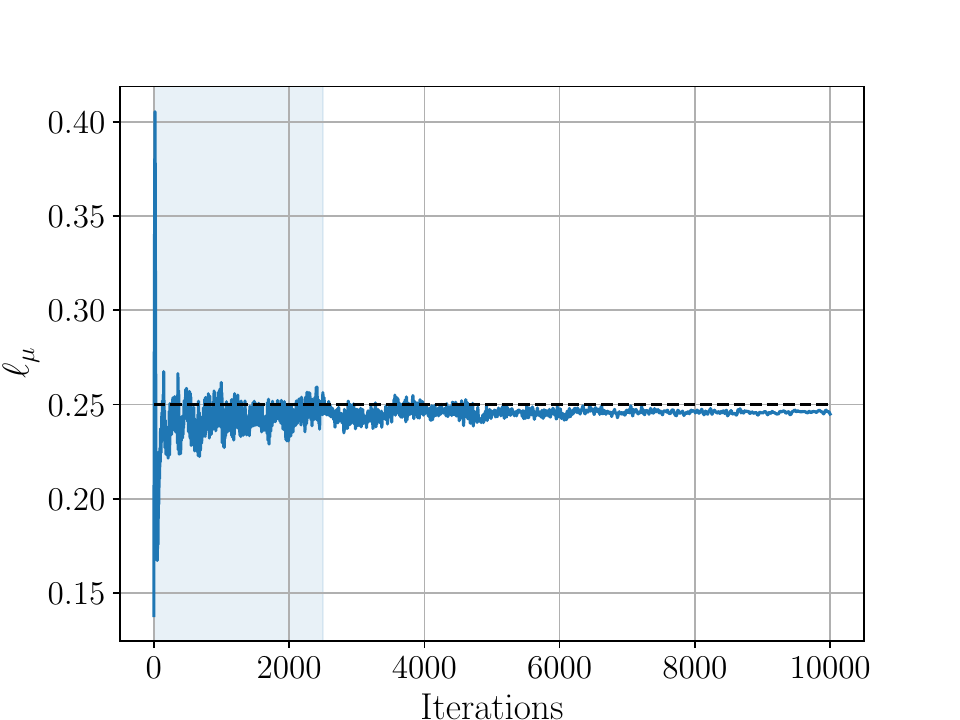}}
    \subfloat[$\gamma_\eta$-convergence]{\includegraphics[width=0.32\linewidth]{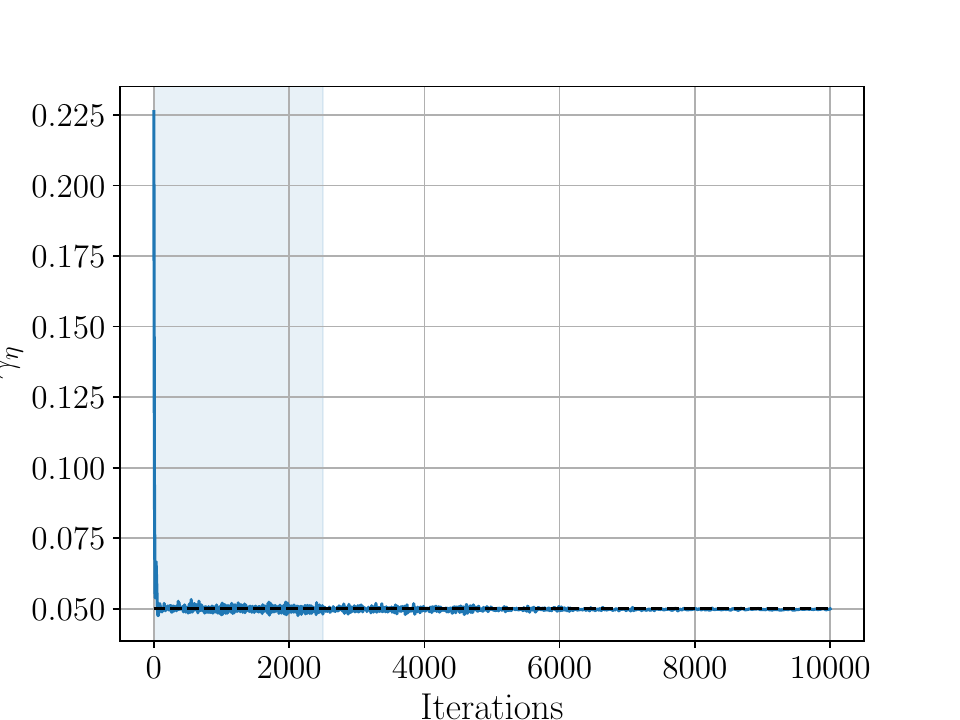}}
    \caption{Convergence of the material properties' distribution for (a) material random field amplitude $\gamma_\mu$ (b)  lengthscale $\ell_\mu$, and (c) the noise standard-deviation $\gamma_\eta^2$; learned parameter (solid line) along with true value (black dashed line).}
    \label{fig:elasto:1:convergence}
\end{figure}
In this example we measure beam accelerations sparsely in space and densely in time. 

\noindent\textbf{Data Specifics.}
We generate the data with uncorrelated noise with $\Gamma^\dagger = {\gamma_\eta^2}^\dagger\I$ which we learn as $\Gamma(\beta)=\gamma_\eta^2\I$ (where $\beta=(\gamma)$), and the observation operator $g$ collects data in space at 8 locations and continuously in time (100 time steps). 
Accelerations are scaled by a factor of $10^{-3}$.
The standard deviation of the noise is 0.05 (corresponding to a continuous-time white noise process  with standard deviation of 0.005 ($0.05/\sqrt{\Delta_\kt}$ with $\Delta_\kt=0.01\,U_\kt$)). 
In Figure~\ref{fig:elasto:1:U0} we show a sampled spatiotemporal solution field with the observation locations given as black dots. 
In Figures~\ref{fig:elasto:1:obsT} and~\ref{fig:elasto:1:obsX} we show three cross-sections of the spacetime solutions at differente spatial locations and time instances, along with the observations corresponding to each solution instance. The uncorrelated nature of the noise is visible.

\noindent\textbf{Learning Specifics.}
The regularizer over $\beta$ in~\eqref{eq:O1b} is $r(\beta)=1/(2\cdot10^2)\|\log(\gamma_\eta)-\log(1)\|^2$.

\noindent\textbf{Solver Specifics.}
The vector valued forcing function setting the beam in motion for this example is constant in space and quadratic in time with the form 
\begin{align}
    f(x, \kt)\begin{cases}
        (0, 5-500(\kt-0.1)^2),\quad & 0\leq \kt \leq\kt_f\\
        0, \quad & \kt>\kt_f
    \end{cases}
\end{align} 
where $\kt_f=0.2\,U_\kt$ is the cutoff time of the forcing, thereafter the beam is in free vibration; the forcing is positive and reaches a peak of $5\,U_f$.

\noindent\textbf{Results.}
In Figure~\ref{fig:elasto:1:losses}, we show the regularized data-fit~\eqref{eq:O1b} and in the surrogate model loss~\eqref{eq:learn_phi}. Note the ten-fold increase in steps on the surrogate model loss due to $\sT_\mathrm{inner}=10.$ We can also see the drops in the loss, most visible in the surrogate loss (b), where the learning rate is decayed by half.
In Figure~\ref{fig:elasto:1:convergence} we show the convergence of the parameters $\alpha=(\gamma_\mu, \ell_\mu)$ and $\beta=(\gamma_\eta)$. The data-acquisition phase $0\leq\st<\sT_{a}$ is shown as a shaded light blue region.
The relative errors averaging over the last 100 iterations are: 2.80\% for $\gamma_\mu$, 1.56\% for $\ell_\mu$, and 0.36\% for $\gamma_\eta$.

\subsection{Observing Displacements Sparsely}
\label{ssec:5_2}

In this example, the set of observations for each beam is the displacement measured at sparse spatial and temporal locations. The true noise distribution is correlated but is learned a misspecified uncorrelated approximation. 
\begin{figure}[t]
    \centering
    \subfloat[Solution and observations]{\includegraphics[width=0.32\linewidth]{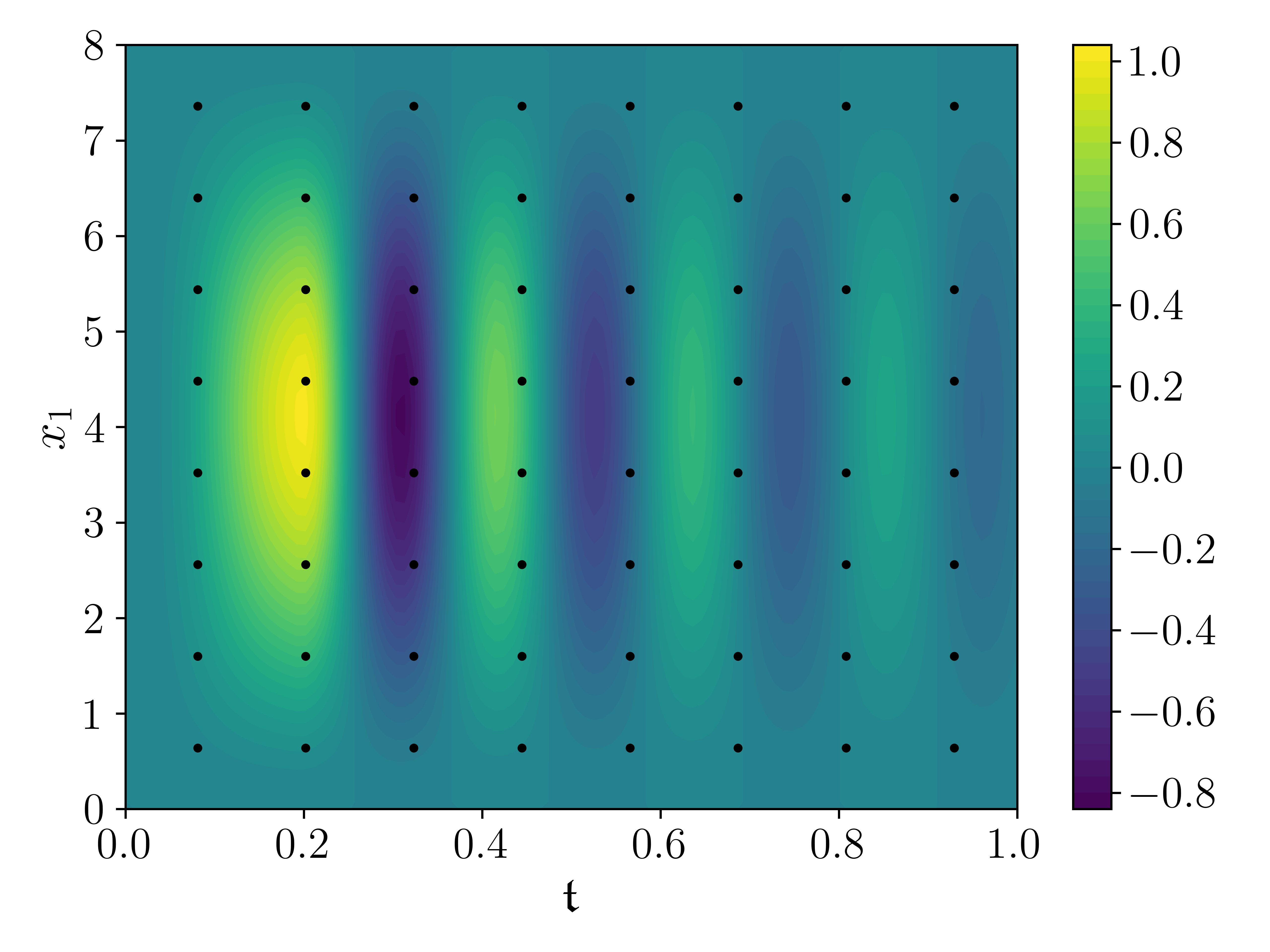}\label{fig:elasto:2:U0}}
    \subfloat[data at locations]{\includegraphics[width=0.32\linewidth]{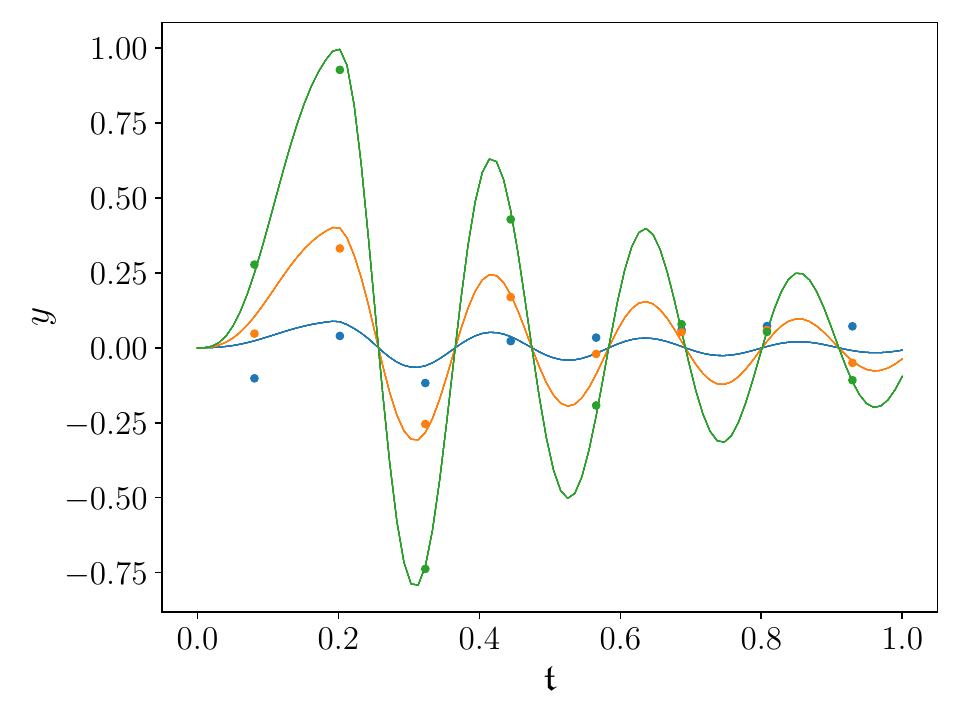}\label{fig:elasto:2:obsT}}
     \subfloat[data at times]{\includegraphics[width=0.32\linewidth]{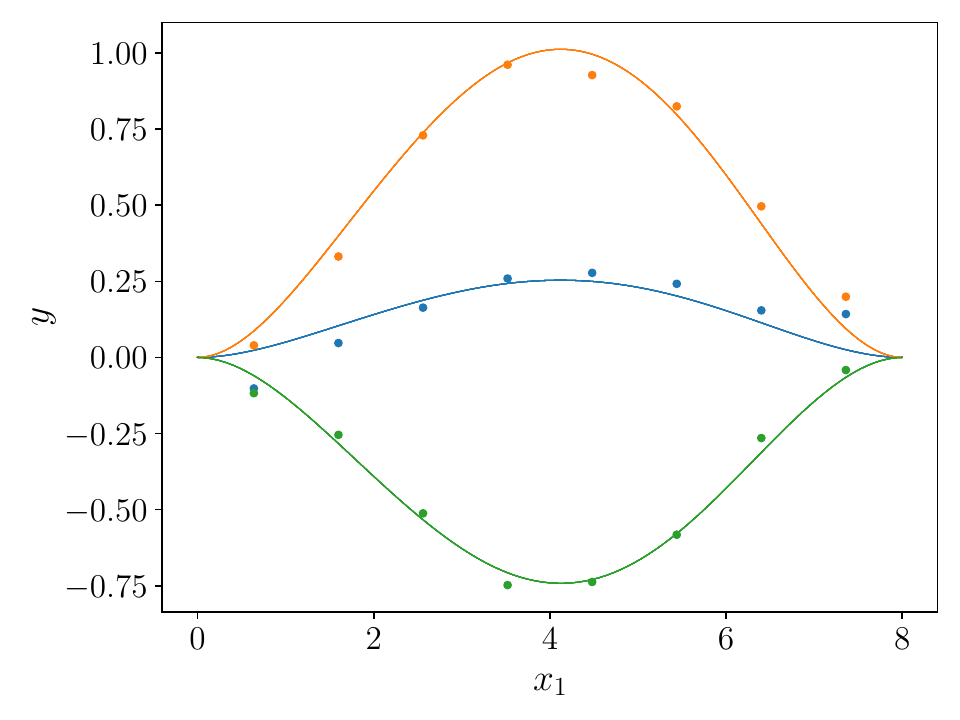}\label{fig:elasto:2:obsX}}
    \caption{(a) Sample solution with dotted observation locations. (b) Solution and observational data across time at different locations. (c) Solution and observational data across beam at different time instances.}
\end{figure}
\begin{figure}[t]
    \centering
    \subfloat[Data Loss]{\includegraphics[width=0.48\linewidth]{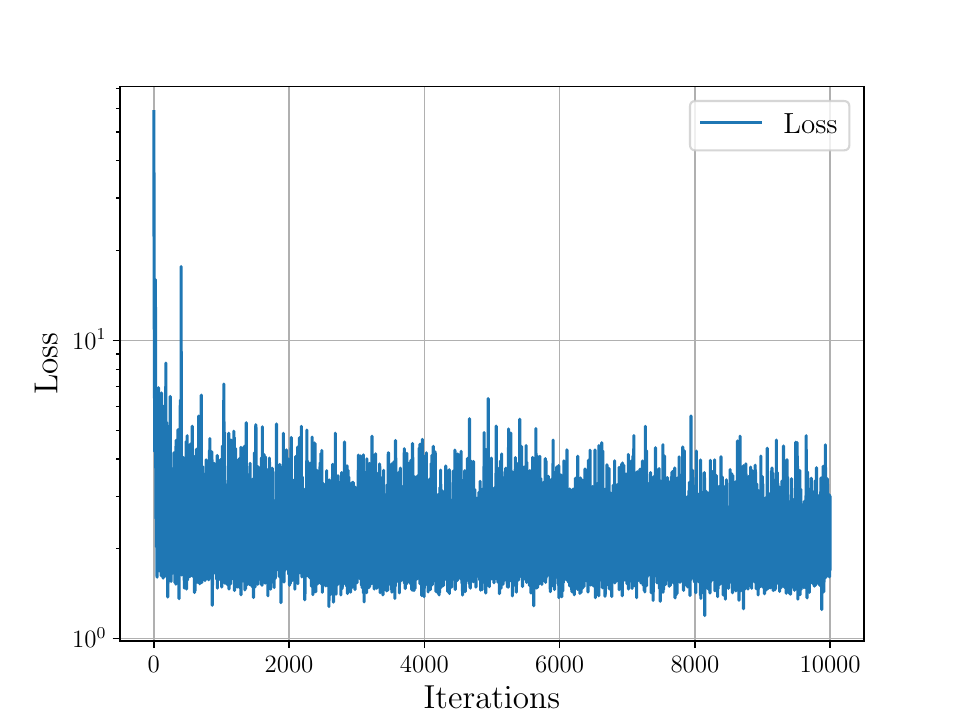}}
    \subfloat[Surrogate Loss]{\includegraphics[width=0.48\linewidth]{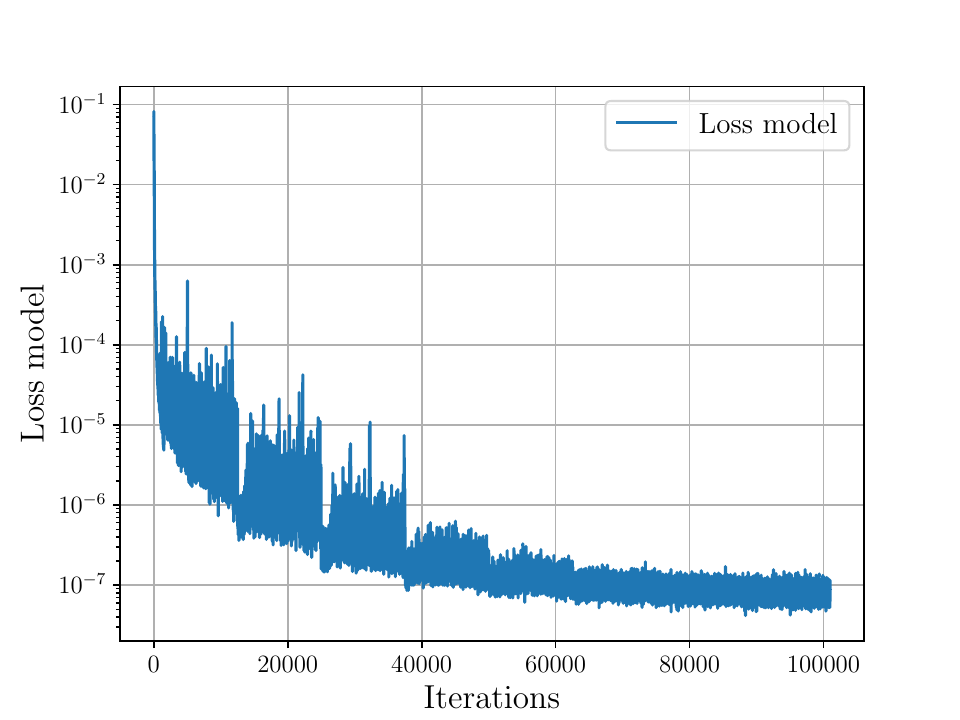}}
    \caption{Data-fit Loss and Surrogate model supervised loss on empirical measure.}
    \label{fig:elasto:2:losses}
\end{figure}
\begin{figure}[t]
    \centering
    \subfloat[$\gamma_\mu$-convergence]{\includegraphics[width=0.32\linewidth]{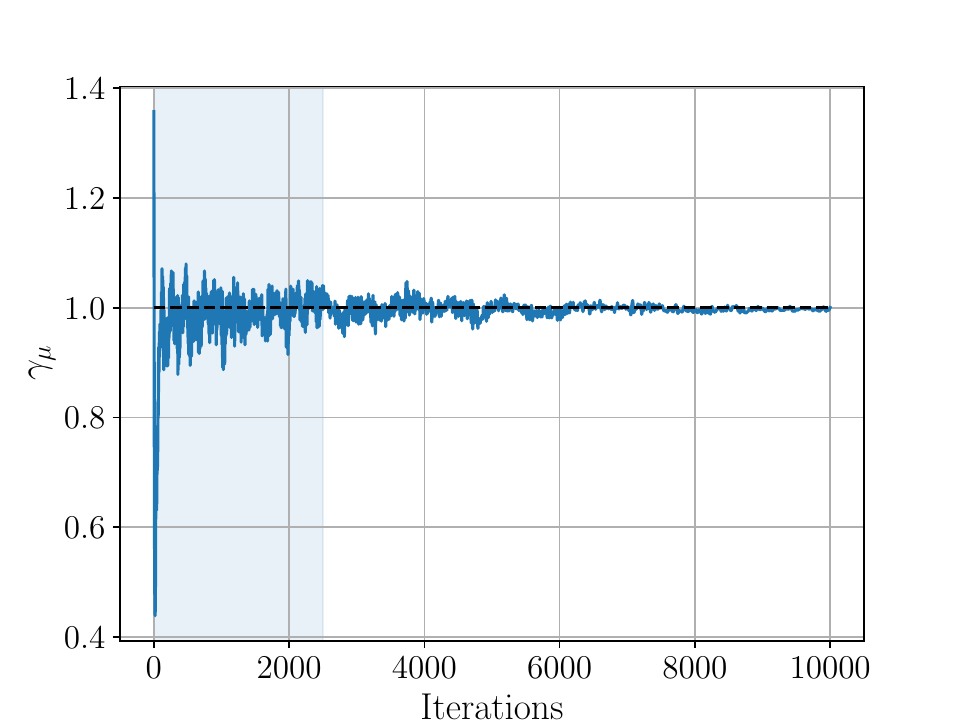}}
    \subfloat[$\ell_\mu$-convergence]{\includegraphics[width=0.32\linewidth]{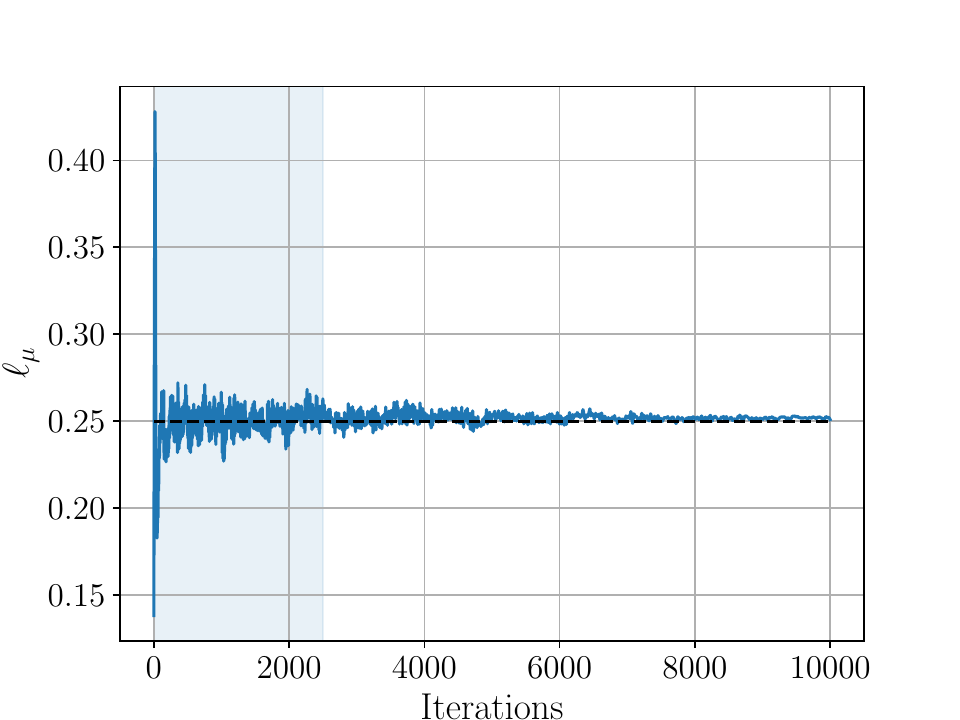}}
    \subfloat[$\gamma_\eta$-convergence]{\includegraphics[width=0.32\linewidth]{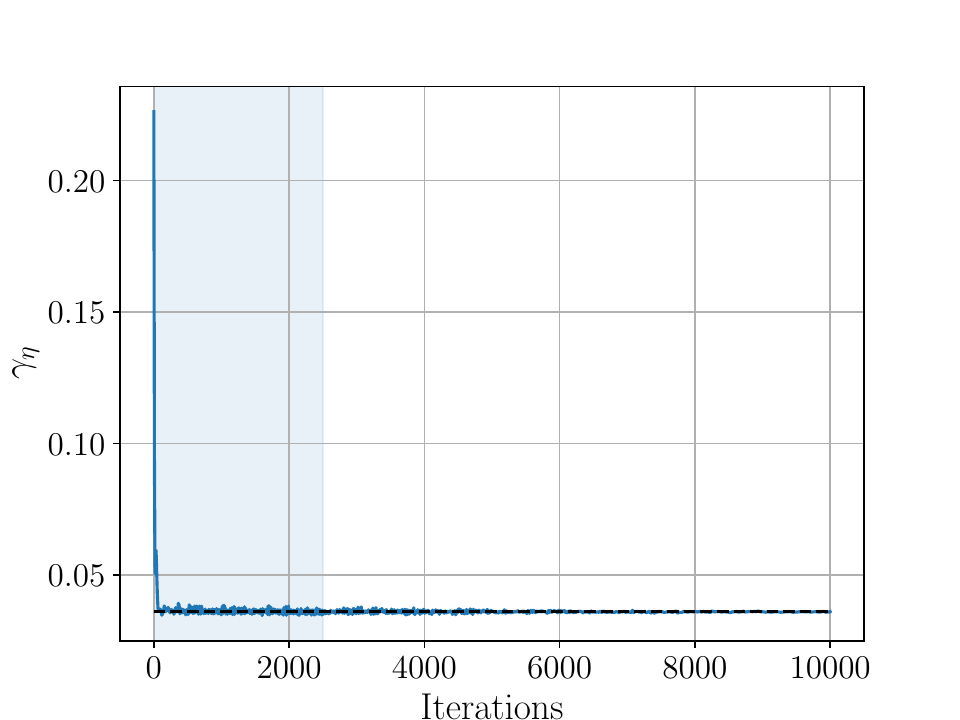}}
    \caption{Convergence of the material properties' distribution (a,b) $\alpha = (\gamma_\mu, \ell_\mu)$ (blue line), against the ground truth (black dashed line) and (c) noise standard-deviation $\gamma_\eta^2$ where the dashed line for (c) is the empirical marginal standard-deviation of the true random noise field.}
    \label{fig:elasto:2:convergence}
\end{figure}

\noindent\textbf{Data Specifics.}
We set the ground truth noise model to be correlated  $\eta^\dagger:=\cN(0, \Gamma(\beta^\dagger))$ with $\beta^\dagger=(\gamma_\eta^\dagger, \ell_\eta^\dagger, \upsilon^\dagger_\eta)$ with a Whittle-Mat\'ern field similar to~\eqref{eq:wm_kl_expansion}, but we now define the expansion for a two-dimensional field over $(x_1, \kt)$
\begin{subequations}\label{eq:wm_kl_expansion_2D_ST}
\begin{align}
    \xi^{(n)} &= \sum_{j=1}^{J+1}\sum_{k=1}^{K+1} \sqrt{\sigma\ell^d(\ell^2(j^2+k^2)\pi^2+1)^{-\upsilon-d/2}}\psi_{j,k}(x_1, \kt) \xi^{(n)}_{1,j,k},\quad \xi^{(n)}_{1,j,k}\sim \cN(0,1);\\
    \psi_{j,k}(x_1, \kt) &= \cos(j\pi x_1/l)\cos(k\pi \kt).
\end{align}
\end{subequations}
We set the true values $(\gamma_\eta^\dagger, \ell_\mu^\dagger,\upsilon^\dagger_\eta)=(0.1, 0.25, 0.5)$. We observe displacements on an $8\times8$ spatiotemporal grid.
In Figures~\ref{fig:elasto:2:obsT} and~\ref{fig:elasto:2:obsX} we show three cross-sections of the spacetime solutions at different spatial locations, at different times, along with the observations corresponding to each solution cross-section. 
Figure~\ref{fig:elasto:2:U0} shows an example solution field with the sparse observation locations.

\noindent\textbf{Learning Specifics.}
We do not learn the full correlated covariance matrix $\Gamma^\dagger$ in this example, rather we learn a misspecified noise model of the form $\Gamma(\beta)=\gamma^2_\eta\I$. We aim for the recovered quantity to correspond to the standard deviation of the spatiotemporaly marginalized signal in the data, it has a value of 0.036. This is the ground truth value against which we compare our recovered $\gamma_\eta$. The regularizer $r(\beta)$ used in~\eqref{eq:O1b} is $1/(2\cdot10^2)\|\log(\gamma_\eta)-\log(0.5)\|^2$.

\noindent\textbf{Solver Specifics.}
The body force setting the beam in motion is
\begin{align}\label{eq:elastodyn_bcic}
    f(x, \kt) =
    \begin{cases}
        (0, 25)^\top\kt,&\quad 0\leq\kt\leq \kt_f,\\
        (0, 0)^\top , &\quad \kt > \kt_f,
    \end{cases}
\end{align}
where $\kt_f=0.2\,U_\kt$ is the end-time for the action of the force -- after which the beam is in free vibration.  The forcing is positive and reaches a peak of $5\,U_f$. 

\noindent\textbf{Results.} 
In Figure~\ref{fig:elasto:2:losses}, we show the regularized data-fit~\eqref{eq:O1b} and the surrogate model loss~\eqref{eq:learn_phi}.
In Figure~\ref{fig:elasto:2:convergence} we show the convergence of the parameters $\alpha=(\gamma_\mu, \ell_\mu)$ and $\beta=(\gamma_\eta)$. The data-acquisition phase $0\leq\st<\sT_{a}$ is shown as a shaded light blue region.
The relative errors averaging over the last 100 iterations are: 0.33\% for $\gamma_\mu$, 0.72\% for $\ell_\mu$. The relative difference between $\gamma_\eta$ and the marginal amplitude of the signal is 0.20\%.

\subsection{Observing Displacements Densely}
\label{ssec:5_3}
\begin{figure}[t]
    \centering
     \subfloat[Noise random field 1]{\includegraphics[width=0.35\linewidth]{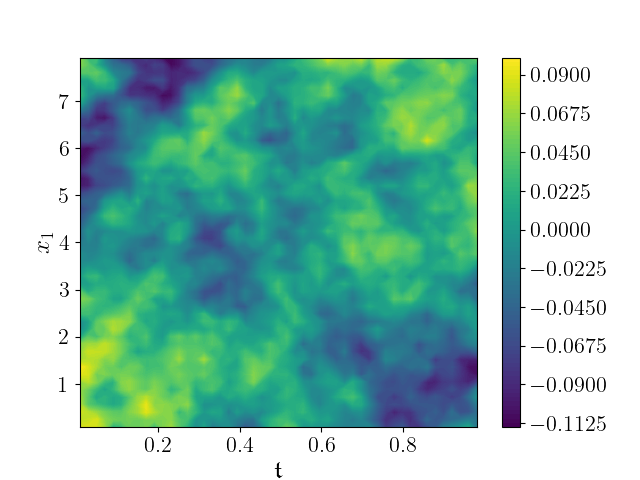}\label{fig:elasto:3:elastodyn_randfield_noise}}
    \subfloat[Data at locations]{\includegraphics[width=0.32\linewidth]{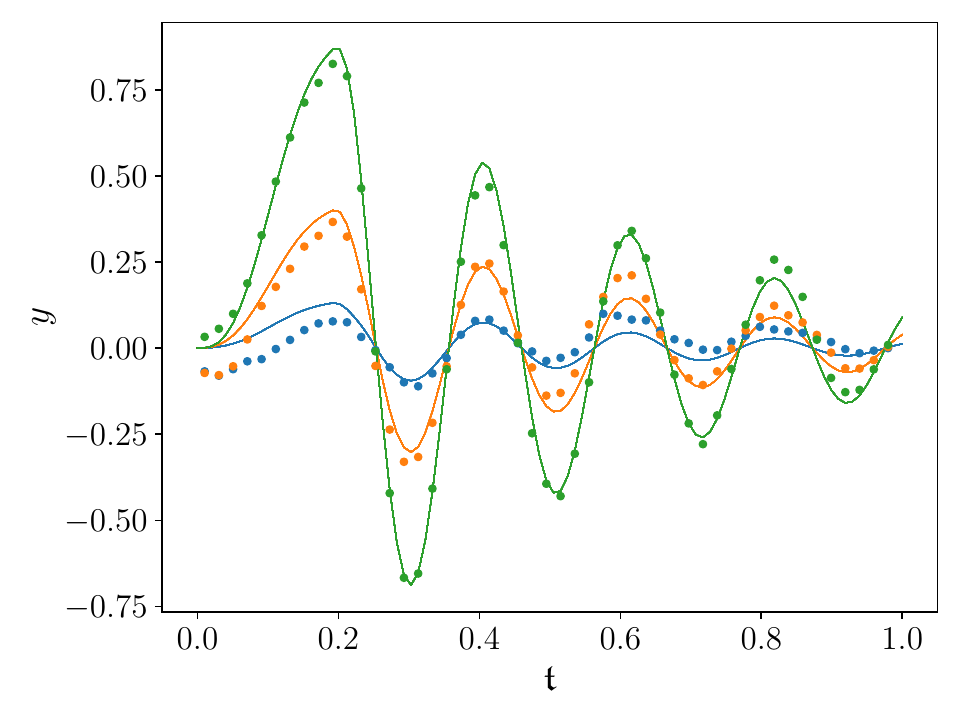} \label{fig:elasto:3:obsYT}}
    \subfloat[data  at times]{\includegraphics[width=0.32\linewidth]{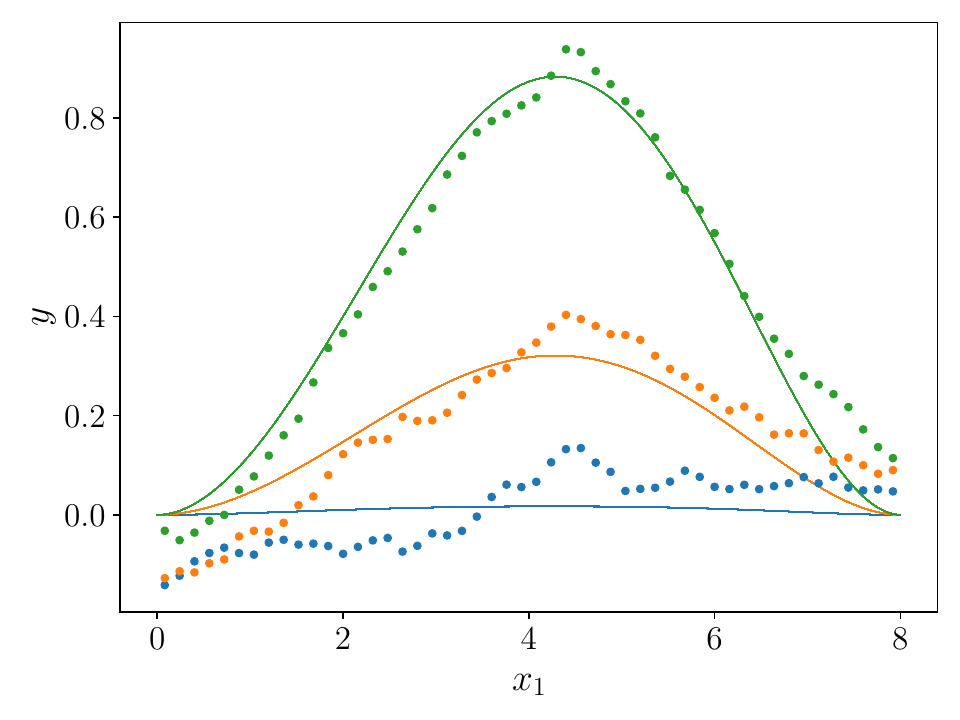}\label{fig:elasto:3:obsYX}}
    \caption{(a) Sample noise random field. (b) Solution and observational data across time at different locations. (c) Solution and observational data across beam at different time instances.}
\end{figure}
\begin{figure}[t]
    \centering
    \subfloat[Data Loss]{\includegraphics[width=0.48\linewidth]{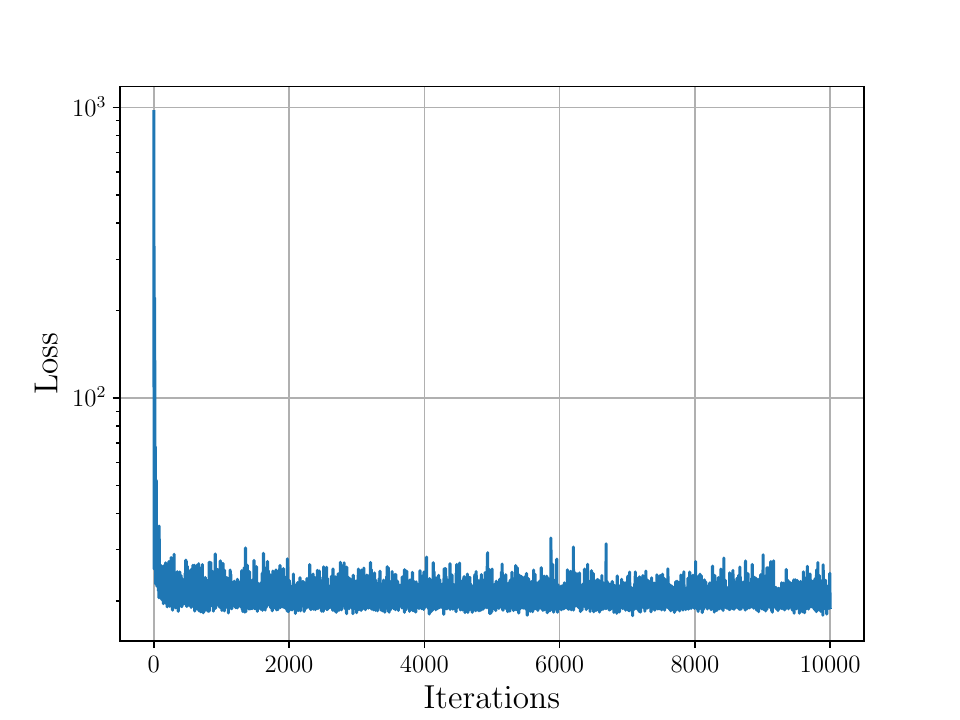}}
    \subfloat[Surrogate Loss]{\includegraphics[width=0.48\linewidth]{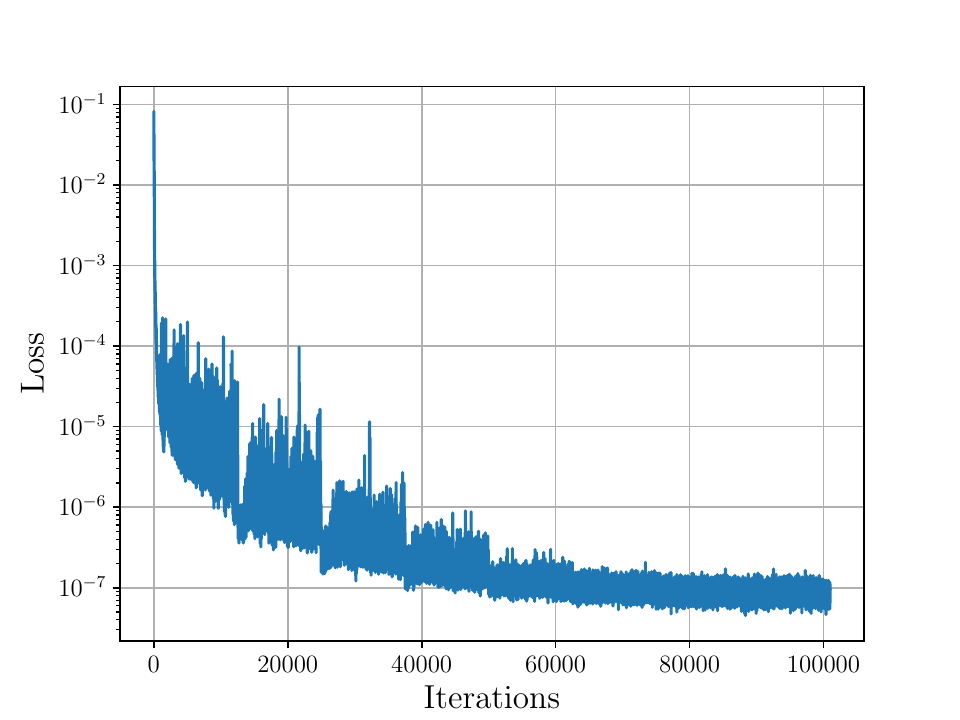}}
    \caption{Data-fit Loss and Surrogate model supervised loss on empirical measure.}
    \label{fig:elasto:3:losses}
\end{figure}
\begin{figure}[t]
    \centering
    \subfloat[$\gamma_\mu$-convergence]{\includegraphics[width=0.48\linewidth]{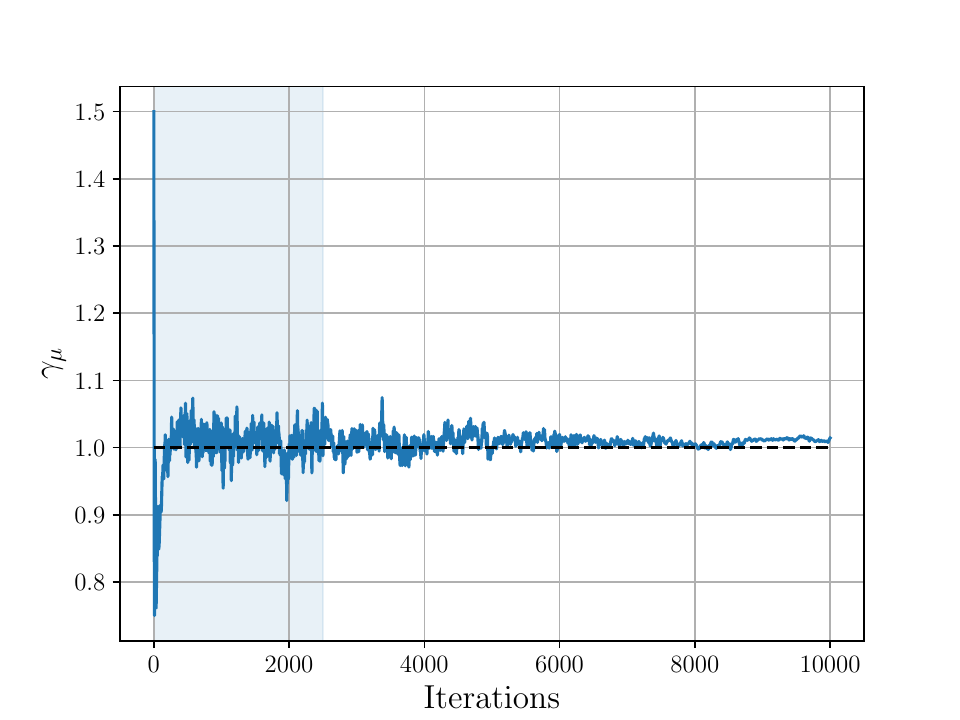}}
    \subfloat[$\ell_\mu$-convergence]{\includegraphics[width=0.48\linewidth]{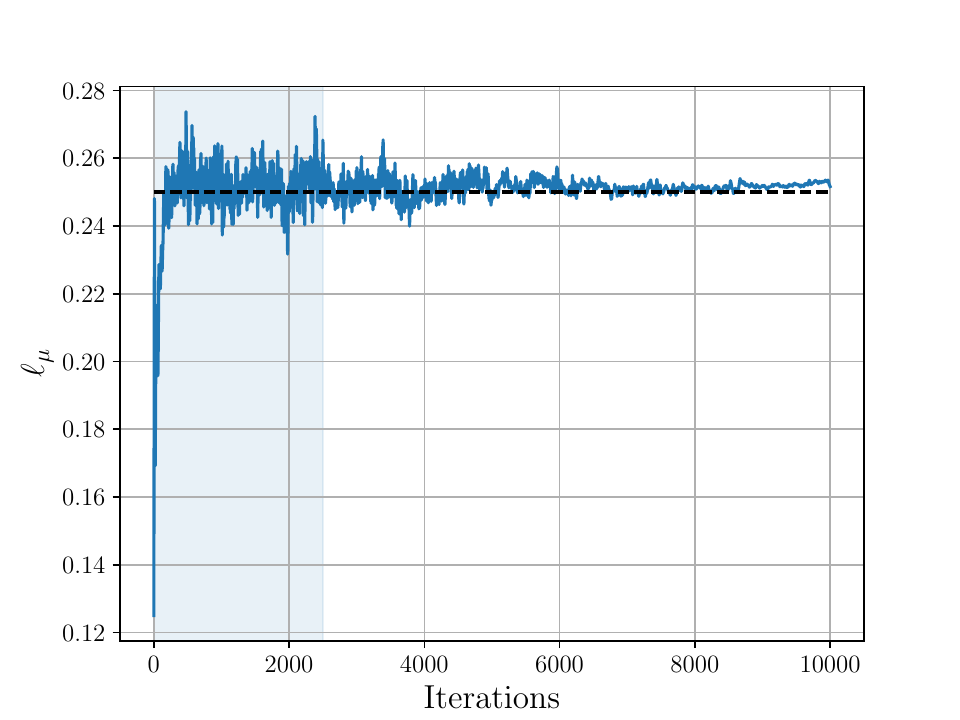}}\\
    \subfloat[$\gamma_\eta$-convergence]{\includegraphics[width=0.48\linewidth]{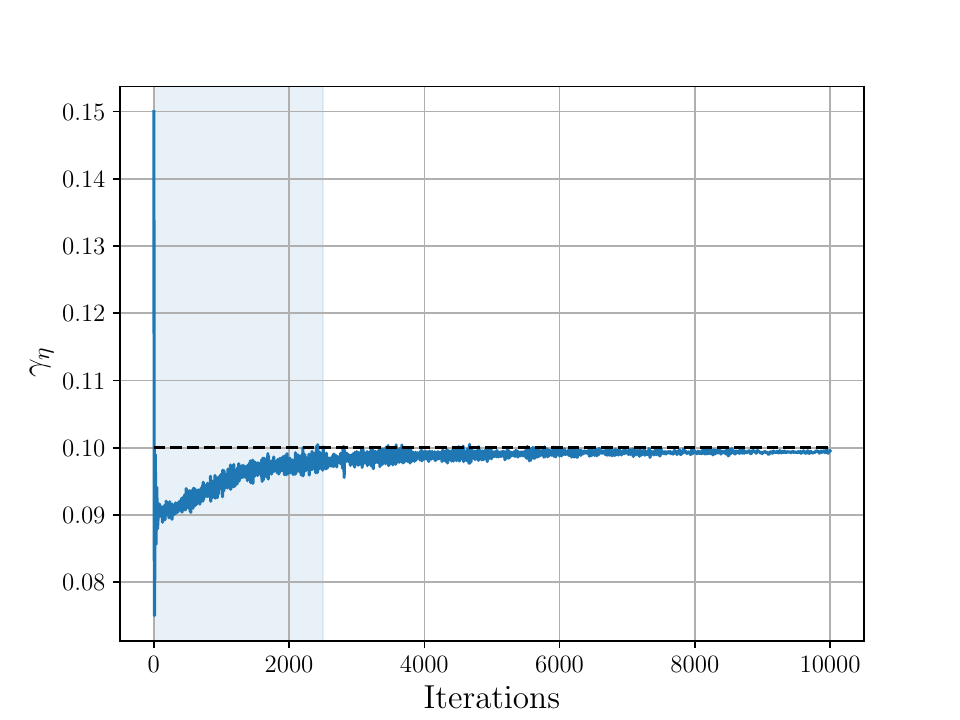}}
    \subfloat[$\ell_\eta$-convergence]{\includegraphics[width=0.48\linewidth]{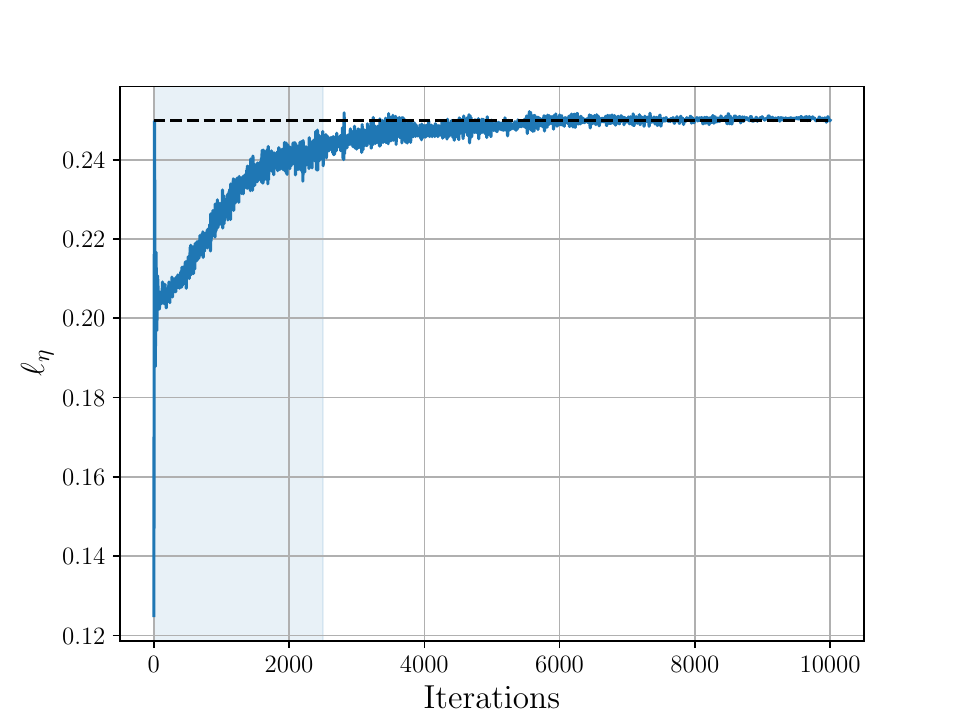}}
    \caption{Convergence of material properties measure $\alpha = (\gamma_\mu, \ell_\mu)$ and noise measure covariance $\beta= (\gamma_\eta, \ell_\eta)$. The inferred parameters are the solid blue lines, and the dashed black line is the ground truth.}
    \label{fig:elasto:3:convergence}
\end{figure}
We now look at an elastodynamic problem where noise is correlated in space and time, and we observe every other spatiotemporal solution node. 

\noindent\textbf{Data Specifics.}
In this example, we observe $50\times 49$ data-points per solution.
The noise is generated from the same Whittle-Mat\'ern field~\eqref{eq:wm_kl_expansion_2D_ST} as in Subsection~\ref{ssec:5_2} with parameters $\beta= (\gamma_\eta^\dagger, \ell_\mu^\dagger,\upsilon^\dagger_\eta)=(0.1, 0.25, 0.5)$; $\upsilon^\dagger_\eta=0.5$ is kept fixed. 
Figure~\ref{fig:elasto:3:elastodyn_randfield_noise} shows a random field generated from the noise distribution $\eta^\dagger$.
Figure~\ref{fig:elasto:3:obsYT} and~\ref{fig:elasto:3:obsYX} show the solutions and corresponding observations at various time and space cross-sections. We can see the correlation structure in the noise corruption.

\noindent\textbf{Learning Specifics.}
The $\Gamma$ regularizer in~\eqref{eq:O1b} is $r(\beta) = 1/(2\cdot10^2)\|\log(\ell_\eta, \gamma_\eta)-\log(0.1, 2.)\|^2$.

\noindent\textbf{Solver Specifics.} The body force is the same as in~\ref{ssec:5_2}.

\noindent\textbf{Results.}
Figure~\ref{fig:elasto:3:losses} shows the loss function for the data fit and the FNO loss.
In Figure~\ref{fig:elasto:3:convergence} we show the convergence of the $(\alpha, \beta)$ parameters.
The relative errors averaging over the last 100 iterations are: 0.99\% for $\gamma_\mu$, 1.19\% for $\ell_\mu$, 0.55\% for $\gamma_\eta$, and 0.07\% for $\ell_\eta$. 
We note that fewer observations can be used to successfully recover the $(\alpha, \beta)$ parameters, the current setup exemplifies the method's ability to deal with high-dimensional observation vectors ($y\in\Rb^{2450}$).

\section{Time-Averaged Data}\label{sec:l96}
In this section, we adapt the proposed methodology to study the learning of
parameters in chaotic dynamical systems using time averaged statistics of trajectories, thereby addressing Contribution~\ref{con:C6}. In this setting
we rely on the central limit theorem to define a noise error covariance, arising
from the use of finite time averaging of ergodic systems. This covariance is
not explicitly known and so learning it is of paramount importance.

Consider time-averaged data obtained, after burn-in time $\mathfrak{T_0}$, as 
\begin{subequations}
\begin{align}
    y^{(n)} &= \cG_\tau(z^{(n)}; s_0^{(n)}) = \frac{1}{\tau}\int_{\mathfrak{T_0}}^{\mathfrak{T_0}+\tau}\varphi(s^{(n)}(\kt;z^{(n)}))\md \kt,\\
    z^{(n)}&\sim\mu^\dagger,\\
    s^{(n)}_0&\sim \bP^\dagger_{s_0},
\end{align}
\end{subequations}
where $s_0^{(n)}\in\Rb^{d_s}$ denotes the initial state at time 0; $s^{(n)}(\kt; z^{(n)})\in C([0,T];\Rb^{d_s})$ is the $n^\text{th}$ system's trajectory; $\varphi:\Rb^{d_s}\rightarrow \Rb^{d_y}\times [0,T]$ is a measurement operator collecting features of $s^{(n)}$. The map $\cG_\tau$ returns time-averaged statistics of trajectories, which for $\varphi$ assembling polynomial features of $s$, returns time-averaged moments of $s^{(n)}(\kt;z^{(n)})$.

We make the standing assumption that the dynamical system generating system
trajectories is ergodic.
\begin{assumption}\label{assum:2}
We assume that time averages over
time-interval of length $\tau$ converge as $\tau \to \infty$ to a value independent
of initialization; and we assume that a central limit theorem (CLT), with randomness defined through drawing initial condition from the invariant measure of the dynamics, controls the error
at finite time. Specifically the CLT gives us
\begin{align}\label{eq:clt}
    \cG_\tau(z; s_0)& = \cG_\infty(z) + \frac{1}{\sqrt{\tau}}\cN(0, \Gamma).
    % \tag{H2}
\end{align}
Thus randomization of the initial condition leads to zero-mean Gaussian deviations in the time-averaged statistics of the system. 
\end{assumption}
The map $\cG_\tau:\cZ\times\cU\mapsto \bR^{d_y}$ may not be differentiable w.r.t to the model parameters in $\cZ$. We would like to construct a differentiable and initial-condition-independent surrogate model, $\cG^\phi:\cZ\rightarrow \bR^{d_y}$,
approximating $\cG_\infty$. Under Assumption~\ref{assum:2}, a suitable learning objective is then
\begin{align}\label{eq:ideal_loss_CLT_surrogate}
    \bE_{s_0\sim\bP_{s_0}}[\|\cG_\tau(z; s_0) - \cG^\phi(z)\|^2]
\end{align}
viewed as a function of $\cG^\phi$ (non-parametric) or of $\phi$ entering 
$\cG^\phi$ (parametric):
\begin{lemma}
    Under Assumption \ref{assum:2}, a minimizer of \eqref{eq:ideal_loss_CLT_surrogate} is attained at $\cG^\phi(z) = \cG_\infty(z)$.
\end{lemma}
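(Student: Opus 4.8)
The plan is to use the standard bias–variance decomposition of the mean-squared error, exploiting the additive Gaussian structure supplied by Assumption~\ref{assum:2}. First I would fix $z$ and substitute the CLT representation~\eqref{eq:clt}, writing $\cG_\tau(z;s_0) = \cG_\infty(z) + \tfrac{1}{\sqrt\tau}\zeta$ where $\zeta\sim\cN(0,\Gamma)$ is the initialization-induced fluctuation, with the randomness over $s_0\sim\bP_{s_0}$ being exactly the randomness producing $\zeta$. Then for any candidate value $v:=\cG^\phi(z)\in\bR^{d_y}$ I would expand
\begin{align*}
\bE_{s_0\sim\bP_{s_0}}\bigl[\|\cG_\tau(z;s_0) - v\|^2\bigr]
 &= \bE\bigl[\|\cG_\infty(z) + \tfrac{1}{\sqrt\tau}\zeta - v\|^2\bigr]\\
 &= \|\cG_\infty(z) - v\|^2 + \tfrac{2}{\sqrt\tau}\langle \cG_\infty(z) - v, \bE[\zeta]\rangle + \tfrac{1}{\tau}\bE\bigl[\|\zeta\|^2\bigr].
\end{align*}

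Next I would observe that $\bE[\zeta]=0$ since $\zeta$ is zero-mean Gaussian, so the cross term vanishes and $\bE[\|\zeta\|^2] = \Tr(\Gamma)$ is a constant independent of $v$ (hence independent of $\cG^\phi$). This leaves
\begin{align*}
\bE_{s_0\sim\bP_{s_0}}\bigl[\|\cG_\tau(z;s_0) - \cG^\phi(z)\|^2\bigr] = \|\cG_\infty(z) - \cG^\phi(z)\|^2 + \tfrac{1}{\tau}\Tr(\Gamma),
\end{align*}
so the objective is minimized precisely when the first term is zero, i.e. when $\cG^\phi(z)=\cG_\infty(z)$, which is attainable (trivially in the non-parametric view, or under the tacit assumption that the surrogate class is rich enough to represent $\cG_\infty$). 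If one wants the statement to hold simultaneously for all $z$ — as the integrated/expected-over-$z$ loss in~\eqref{eq:ideal_loss_CLT_surrogate} would require — I would note the decomposition holds pointwise in $z$ and then take expectation over $z\sim\mu$, so the minimizer is still characterized by $\cG^\phi(\cdot)=\cG_\infty(\cdot)$ $\mu$-a.e.

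There is essentially no hard step here; the result is a one-line consequence of orthogonality of the mean and the fluctuation in an $L^2$ decomposition. The only point requiring minor care is the interpretation: in the parametric view one should either assume $\cG_\infty$ lies in the hypothesis class $\{\cG^\phi\}$, or restate the conclusion as "any $\phi$ with $\cG^\phi=\cG_\infty$ is a global minimizer," and one should also implicitly assume $\Tr(\Gamma)<\infty$ so the expression is finite. I would flag this modelling caveat in a sentence rather than belabor it, since the substantive content — that the finite-$\tau$ sampling noise contributes only an additive constant to the regression objective and therefore does not bias the minimizer away from $\cG_\infty$ — is fully captured by the display above.
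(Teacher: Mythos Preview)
Your proof is correct and takes a genuinely different route from the paper. The paper works parametrically: it differentiates the objective with respect to $\phi$, substitutes the CLT form~\eqref{eq:clt} to replace $\cG_\tau(z;s_0)$ by $\cG_\infty(z)+\xi$ with $\bE[\xi]=0$, obtains the first-order condition $-2(\cG_\infty(z)-\cG^\phi(z))^\top\nabla_\phi\cG^\phi(z)=0$, and then checks that the Hessian at $\cG^\phi=\cG_\infty$ is $2\nabla_\phi\cG^\phi(z)^\top\nabla_\phi\cG^\phi(z)\succeq 0$. Your bias--variance decomposition is more elementary and in fact stronger: by writing the objective as $\|\cG_\infty(z)-\cG^\phi(z)\|^2+\tfrac{1}{\tau}\Tr(\Gamma)$ you exhibit a nonnegative term plus a constant, which immediately gives a \emph{global} minimizer without any differentiability in $\phi$, whereas the paper's second-order check (PSD Hessian at a stationary point) is only a necessary condition for local minimality. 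Your remark about the parametric caveat---that one must assume $\cG_\infty$ lies in the hypothesis class $\{\cG^\phi\}$---is also well taken and is implicit in both arguments.
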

\begin{proof}
    First differentiating w.r.t $\phi$ at a minimizer, we replace $\cG_\tau$ in expectation with $\cG_\infty$ and drop the initial condition, $s_0$, via \eqref{eq:clt} as
    \begin{align*}
         \nabla_\phi \bE_{s_0\sim\bP_{s_0}}[\|\cG_\tau(z; s_0) - \cG^\phi(z)\|^2] \nonumber
         % &= 0\\
         &= -2\bE_{s_0 \sim\bP_{s_0}}[(\cG_\tau(z; s_0) - \cG^\phi(z) )^\top\nabla_\phi\cG^\phi(z)]\nonumber\\
         &=-2\bE_{\xi\sim \cN(0,\Gamma)}[(\cG_\infty(z) + \xi - \cG^\phi(z) )^\top\nabla_\phi\cG^\phi(z)].\nonumber
         \end{align*}
Setting this equal to zero to obtain a stationary point, and noting that $\Eb_\xi[\xi]=0$, we obtain
    \begin{align}
        -2(\cG_\infty(z)-\cG^\phi(z))^\top\nabla_\phi\cG^\phi(z) = 0.\nonumber
    \end{align}
    Hence $\cG^\phi(z) = \cG_\infty(z)$ is a stationary point of of~\eqref{eq:ideal_loss_CLT_surrogate}. At  $\cG^\phi(z) = \cG_\infty(z)$
    \begin{align*}
        \nabla^2_\phi \bE_{s_0\sim\bP_{s_0}}[\|\cG_\tau(z; s_0) - \cG^\phi(z)\|^2] &= \nabla_\phi( -2(\cG_\infty(z)-\cG^\phi(z))^\top\nabla_\phi\cG^\phi(z) )\nonumber\\  
        &= 2\nabla_\phi\cG^\phi(z)^\top\nabla_\phi\cG^\phi(z) - 2(\cG_\infty(z)-\cG^\phi(z))\nabla_\phi^2\cG^\phi(z)\nonumber\\
        &= 2\nabla_\phi\cG^\phi(z)^\top\nabla_\phi\cG^\phi(z)\succeq0.
    \end{align*}
     Hence $\cG^\phi(z) = \cG_\infty(z)$ is a minimizer of~\eqref{eq:ideal_loss_CLT_surrogate}.
\end{proof}
\begin{remark}\label{rem:gamma_indpt}
    Note that $\Gamma$ depends on $z$ through the dynamical system of interest. In this work, we make the assumption $\Gamma(z)\approx\Gamma$. Empirically, the proposed algorithm tends to recover $\Gamma^\star\approx\Gamma(z^\star)$ where $z^\star$ is the parameter with highest density under $\mu^\dagger$.
    \diabox0{cblack}
\end{remark}
\noindent
To obtain an implementable loss function to learn $\cG^\phi$ for various $z$ values we randomize objective function~\eqref{eq:ideal_loss_CLT_surrogate} over $z, y$ pairs from $\bP^\st_{z, y}$ in~\eqref{eq:cummul_emp_dataset} where $y^{(\st)} = \cG_\tau(z^{(\st)}; s_0^{(\st)})$ with $s_0\sim\bP_{s_0}$, a guess of $\bP_{s_0}^\dagger$. We then obtain the surrogate parameters 
\begin{align}
    \label{eq:Gphi_loss}
    \phi^\star_\st = \argmin\;\Eb_{z,y\sim \bP^\st_{z,y}}[\|y - \cG^\phi(z)\|^2].
\end{align}

We now turn our attention to the deconvolution and populational inversion framework. To formulate the problem at hand in a form amenable to~\eqref{eq:O1b}
we make a series of approximations in the following steps:
\begin{align}
    \label{expr:E1}\tag{{E1}}  &(\cG_\tau(z; s_0))_\#(\bP_z \otimes \bP_{s_0})\nonumber;\\
    \label{expr:E2}\tag{{E2}}   &(\cG_\infty(z) + \Gamma(z)^{\frac{1}{2}}\xi)_\#(\bP_z \otimes \cN(\xi; 0,\I))\nonumber ;\\
    \label{expr:E3}\tag{{E3}}    &\cN(0, \Gamma) * (\cG_\infty(z))_\#\bP_z\nonumber .
\end{align}
Expression~\eqref{expr:E1} describes the exact data-generating measure for $\bP_z, \bP_{s_0}$. Going from~\eqref{expr:E1} to~\eqref{expr:E2} involves Assumption~\ref{assum:2}. Going from~\eqref{expr:E2} to~\eqref{expr:E3} uses Remark~\ref{rem:gamma_indpt}, $\Gamma(z)\approx\Gamma$, thereby allowing us to factorize $\Gamma$ and learn it as a single matrix. 
Replacing $\bP_z$ in~\eqref{expr:E3} with $\mu(\alpha)$ and $\cG_\infty$ with $\cG^{\phi^\star_\st}$, with ${\phi^\star_\st}$ concurrently learned with~\eqref{eq:Gphi_loss} as per Algorithm~\ref{alg:algorithm2}, we can write the objective
\begin{align}
    \sL(\alpha, \Gamma;\Gamma') &= \dd_{{\D(\Gamma')}}\left(\nu,  \cN(0, \Gamma) * (\cG^{\phi^\star_\st}(z))_\#\mu(\alpha)\right) + h(\alpha) + r(\Gamma).
\end{align}
Every update on $\Gamma$ we set the precondition matrix $\D(\Gamma')$ to the diagonal matrix, $\D$, of the LDL decomposition, $L_d\D L_d^\top=\Gamma'$. As, algorithmically, $\Gamma$ is expressed in terms of it's Cholesky factor $L$, $\D^{-1/2}$ is efficiently computed as the reciprocal of the diagonal of $L$. We have found this to empirically be a good preconditioner. The term $r(\Gamma)$ will be specified in Section~\ref{sssec:cov_reg}. We again use $\dd_\B$ to be $\sSW^2_{2, \B}$ as defined by~\eqref{eq:weighted_sw}.

\subsection{Notes on Regularization}\label{ssec:LorenzReg}
Special care is taken in time-averaged inference with regards to regularization in terms of covariances and surrogate models.
\subsubsection{Covariance Regularization}\label{sssec:cov_reg}
In both experiments that follow we attempt to learn $\Gamma$ as an approximation of $\Gamma^\dagger(z)$. We find the true empirical covariance to be very poorly conditioned -- leading to learning difficulties and numerical instabilities. An effective solution implemented in both experiments is to regularize the condition number of the estimated $\Gamma$~\cite{won2013condition, dey1985estimation, james1992estimation}. We do this by using the covariance regularizer in~\eqref{eq:O1b} to be $r(\Gamma) = \epsilon\kappa_2(\Gamma)$ where $\kappa_p(\Gamma) := \|\Gamma\|_p\|\Gamma^{-1}\|_p$ is the condition number of $\Gamma$ which we compute in the 2-norm by the ratio of largest and smallest eigenvalues of $\Gamma$, a more stable computation than that which arises from the general $p$-norm definition. We find this value for $\epsilon$ minimally interferes with inference whilst ensuring stability of the learning of $\Gamma$. For consistency, in both examples we fix $\epsilon=10^{-5}$. This regularizer mirrors using a regularizer on $\beta$ in the previous sections. We note, for future work, condition numbers under other norms could be used, such as $p=\infty$ or $p=1$.

\subsubsection{Neural Network Surrogate Model Regularization}\label{sssec:nn_reg}
In both the following examples, we make use of a Lipschitz constrained MLP for $\cG_\tau$ to enforce smoothness in the predictions~\cite{gouk2021regularisation}. This has been found to be useful due to the high stochasticity of the data-pairs  as we randomize over initial conditions. This is similar to using a long lengthscale in GP regression to recover the mean of the outputs. Each layer of a 5-layer, Gelu activation, 100 neuron network is constrained to have a Lipschitz number under 10 in the $p=\infty$ operator norm. 

\subsection{Lorenz 96 Single-Scale}
\begin{figure}[t]
    \centering
    \subfloat[Full 100 time units]{\includegraphics[width=0.44\linewidth]{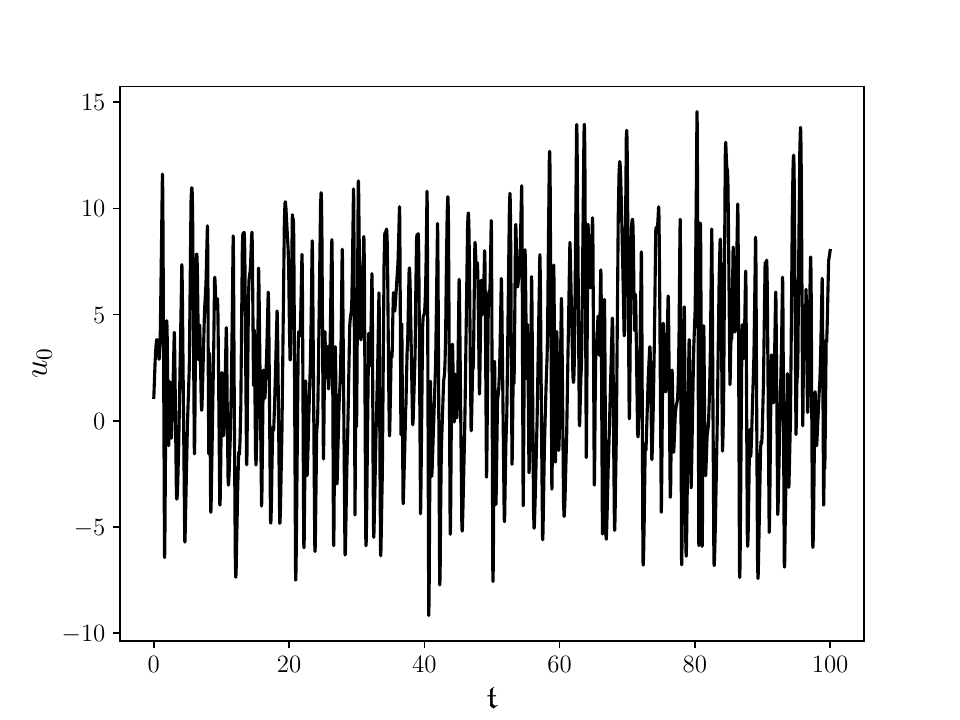}}
    \subfloat[First 20 time units]{\includegraphics[width=0.44\linewidth]{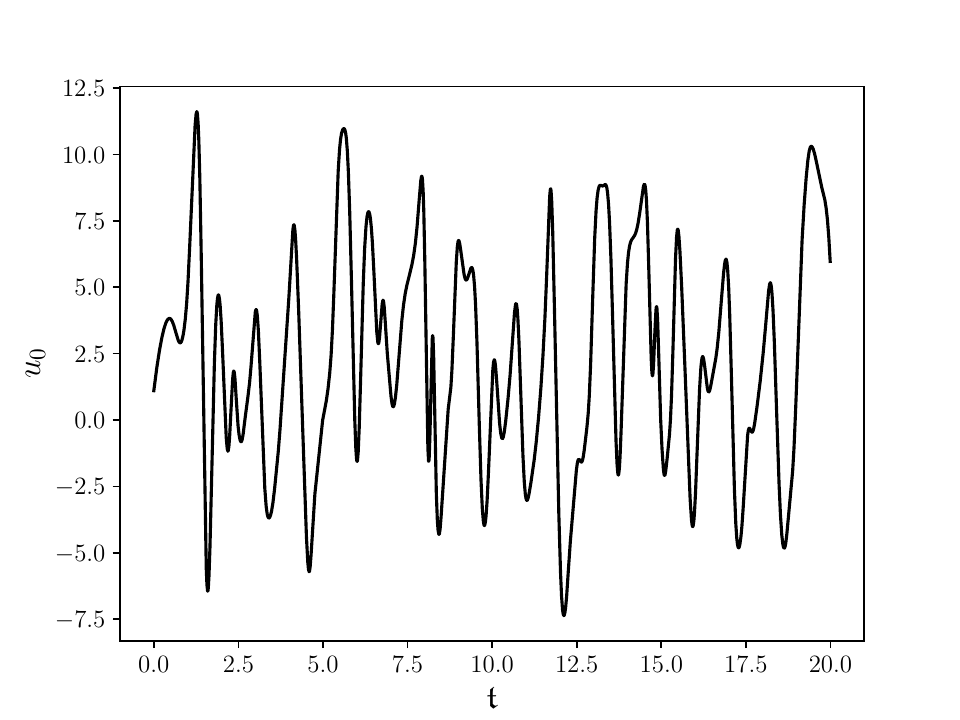}}
    \caption{The first dimension of the Lorenz 96 single scale model for (a) 100 time unites, and (b) the first 20 time units.}
    \label{fig:trajectoryLorenz96ss}
\end{figure}
\begin{figure}[t]
    \centering
    \subfloat[Data Loss]{\includegraphics[width=0.32\linewidth]{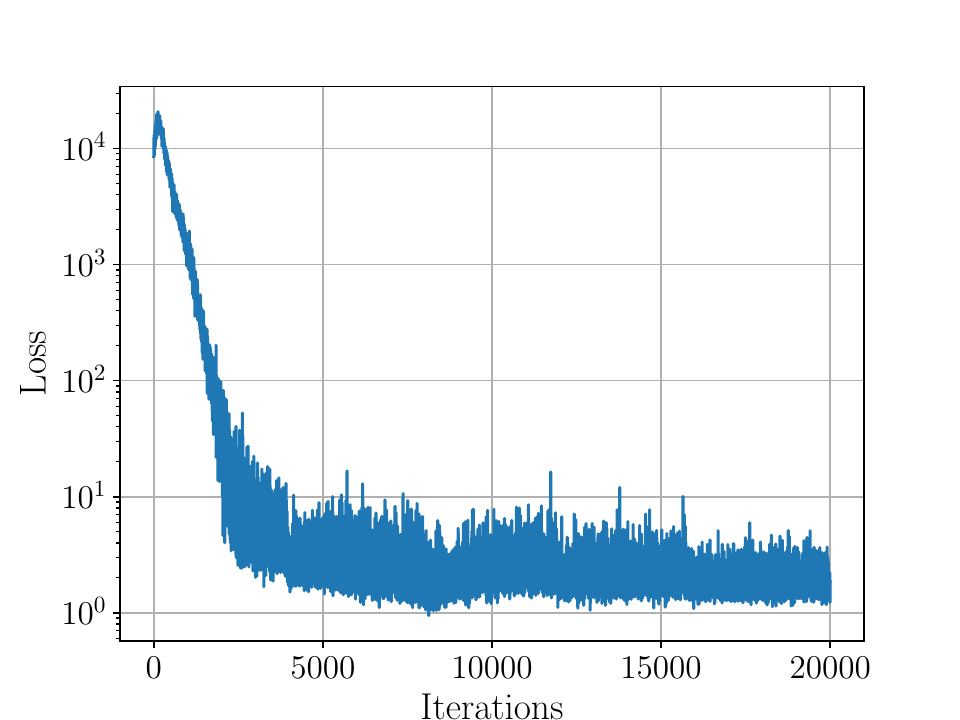}}
    \subfloat[Model Loss]{\includegraphics[width=0.32\linewidth]{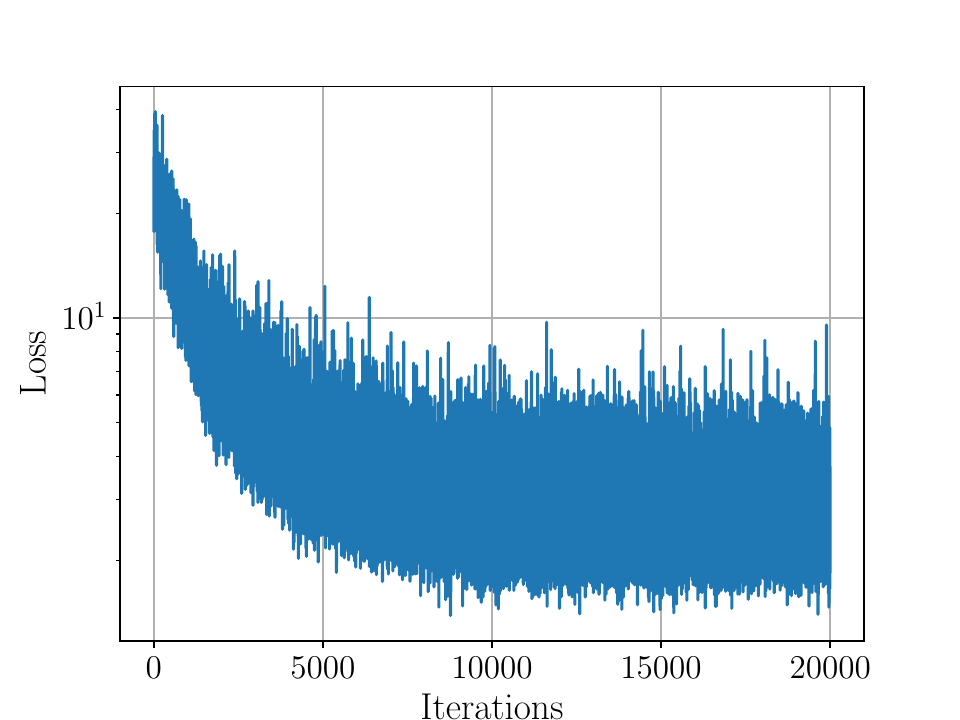}}
    \subfloat[$\log \kappa_2(\Gamma)$]{\includegraphics[width=0.32\linewidth]{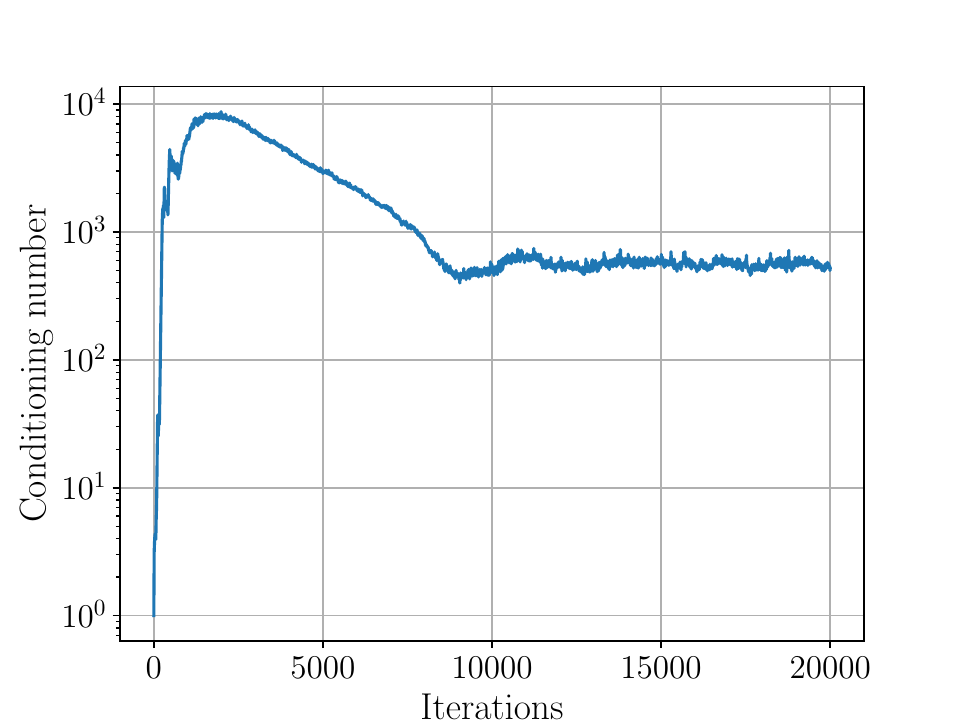}}
    \caption{(a) Data Loss function, (b) model $\cG^\phi$ loss function. (b) the evolution of the condition number of the estimated covariance.}
    \label{fig:lossLorenz96ss}
\end{figure}
\begin{figure}[t]
    \centering
    {\includegraphics[width=0.5\linewidth]{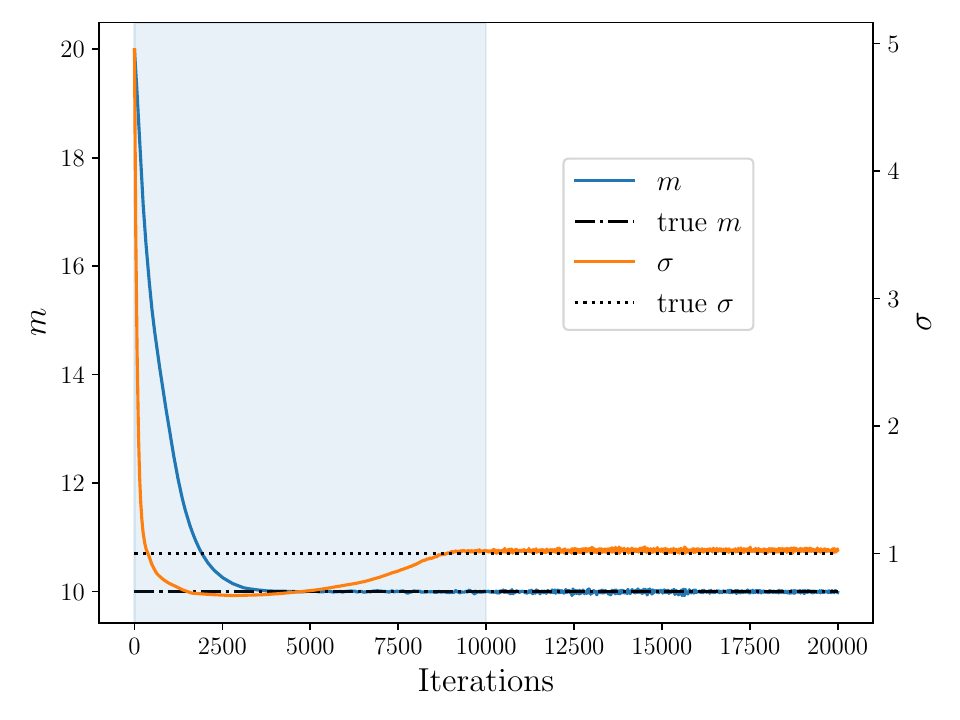}}
    \caption{Convergence of parameters $\alpha$}
    \label{fig:ConvergenceLorenz96ss}
\end{figure}
\begin{figure}[t]
    \centering
    \subfloat[$\Gamma^\star$]{\includegraphics[width=0.5\linewidth]{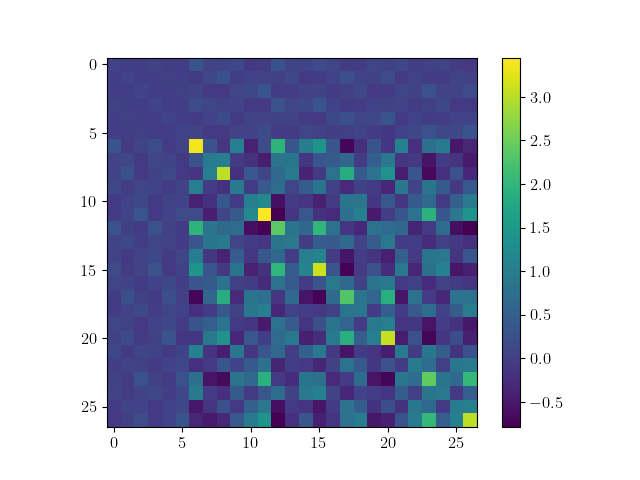}}
    \hspace{-3em}
    \subfloat[  $\mathrm{Cov}_{s_0\sim\bP^\dagger_{s_0}} \mathrm{[}\cG_\tau(m^\dagger; s_0)\mathrm{]}$ ]{\includegraphics[width=0.5\linewidth]{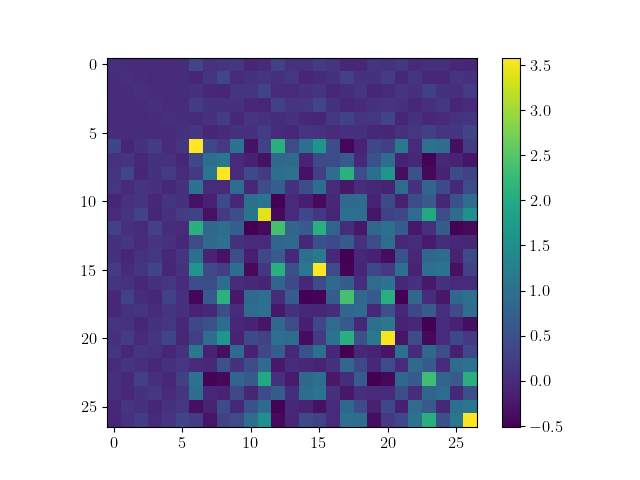}}
    \caption{(a) The learned $\Gamma^\star$. (b) Empirical covariance of the feature function for random initializations of trajectories for $m=10$ (the mode of $\mu^\dagger$).}
    \label{fig:CovsLorenz96ss}
\end{figure}
We first assess the proposed deconvolution and populational time-averaged inversion methodology on the single-scale Lorenz 96~\cite{lorenz1996predictability} model,
used as a simplified representation of chaotic atmospheric dynamics:
\begin{subequations}
    \begin{align}
    \dot{u}_k &= u_{k-1}(u_{k+1} - u_{k-2}) - u_k + \F,\\
    u_{k+K}&=u_k,\quad k=1 \hdots K
\end{align}
\end{subequations}
where $s(t)=u_{1:k}(t)$  and the $\mu(\alpha)$ distribution is over the parameter $z=(\F)$. 

\noindent\textbf{Data Specifics.}
The feature function 
\begin{align}
    \varphi(s) &=\left(\{ u_k\}_{k=1}^K , \{u_ku_j | k,j\in\mathbb{N}, 1\leq k \leq K, 1\leq j\leq k \}\right),
\end{align}
collects per-dimension means and covariances. We set $K=6$, hence $\varphi:\Rb^{6}\mapsto\Rb^{27}$; we fix $c=10$,  $\tau=100$ with 20 units of time burn-in for $\cG_\tau$. 
and set $\tau=100$ with 20 units of time burn-in for $\cG_\tau$.
The true parameter distribution is  $\F^{(n)}\sim\cN(z; m^\dagger, (\sigma^\dagger)^2)$ with $m^\dagger=10$, $\sigma^\dagger=1$ with $\alpha^\dagger=(m^\dagger, \sigma^\dagger)$. 
The data is a collection of $10^4$ trajectories with initial state distribution $\Pb^\dagger_{s_0} := \cN(0, 10^2\I)$.
In Figure~\ref{fig:trajectoryLorenz96ss} we  plot the first dimension of a sample trajectory. 

\noindent\textbf{Learning Specifics.}
The covariance matrix $\Gamma\in\Rb^{27\times 27}$ is parameterized as a lower triangular matrix with 378 learnable parameters constrained to be positive on the diagonal.  
The  trajectories in the data set are generated with the initial state distribution $\Pb^\dagger_{s_0} := \cN(0, 10^2\I)$ which during inference is replaced with the miss-specified initial state measure $\bP_{s_0} = \cN(0, 8^2\I)$ to demonstrate the ``washing out'' of the initial conditions, and hence does need not be inferred.
The differentiable initial-condition-independent surrogate model $\cG^\phi$ is parameterized as a fully connected Lipschitz constrained neural network (Section~\ref{sssec:nn_reg}) and the covariance regularizer is specified in~\ref{ssec:LorenzReg}.
The regularizer $h(m, \sigma)  = {1}/({2\cdot 5^2})\|m-8\|^2 + {1}/({2 \cdot 2^2})\|\log(\sigma) - \log(0.5)\|^2$. 
We use a learning batch size of 60 for $\cG^{\phi^\star_\st}$. We use a learning rate of $10^{-2}$ on $(\alpha, \Gamma)$ and a 5-times decayed-by-half Adam learning rate initialized at $10^{-3}$ for learning $\phi$.
We use a pretraining number of iterations of $\sT_\mathrm{pre.} = 10^3$ with $N_\mathrm{pre.}=60$, and $\sT_\mathrm{inner} = 20$ step on $\phi$ minimizing $\eqref{eq:learn_phi}$ per step on $(\alpha, \Gamma)$. We terminate the training data-pairs acquisition after $\sT_a = 10$k steps.

\noindent\textbf{Solver Specifics.}
We use a time step of $10^{-2}U_\kt$ with a fourth order Runge-Kutta scheme.

\noindent\textbf{Results.}
Figure~\ref{fig:lossLorenz96ss} shows the data fit loss, surrogate model loss and the condition number of $\Gamma$. Figure~\ref{fig:ConvergenceLorenz96ss} shows the convergence of the $\alpha$ parameters. In Figure~\ref{fig:CovsLorenz96ss} we show the learned covariance matrix and the covariance matrix of the observations at the mode of the true parameter distribution for randomized initial conditions from $\bP^\dagger_{s_0}$. 
The average relative errors from the last 1000 iterations are: 
0.01\% for $m$, 2.64\% 
for $\sigma$. The relative difference between $\Gamma^\star$ and $\mathrm{Cov}_{s_0\sim\bP^\dagger_{s_0}}[\cG_\tau(m^\dagger;s_0)]$ is of 18.22\% in the Frobenius norm.

\subsection{Lorenz 96 Multi-Scale}
\begin{figure}[t]
    \centering
    \subfloat[Full 100 time units]{\includegraphics[width=0.44\linewidth]{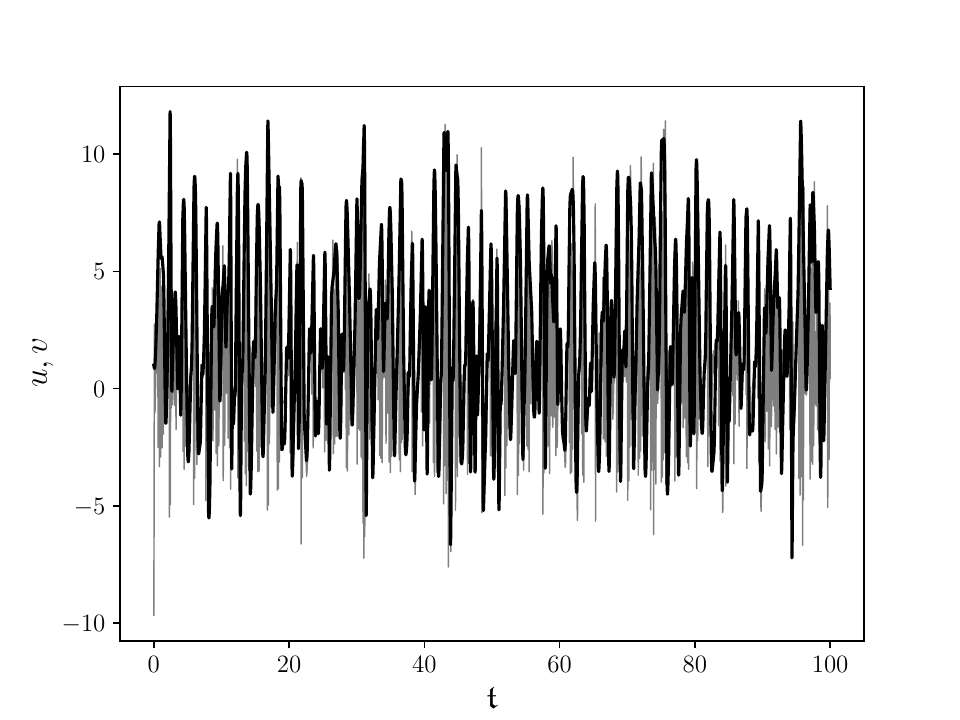}}
    \subfloat[First 20 time units]{\includegraphics[width=0.44\linewidth]{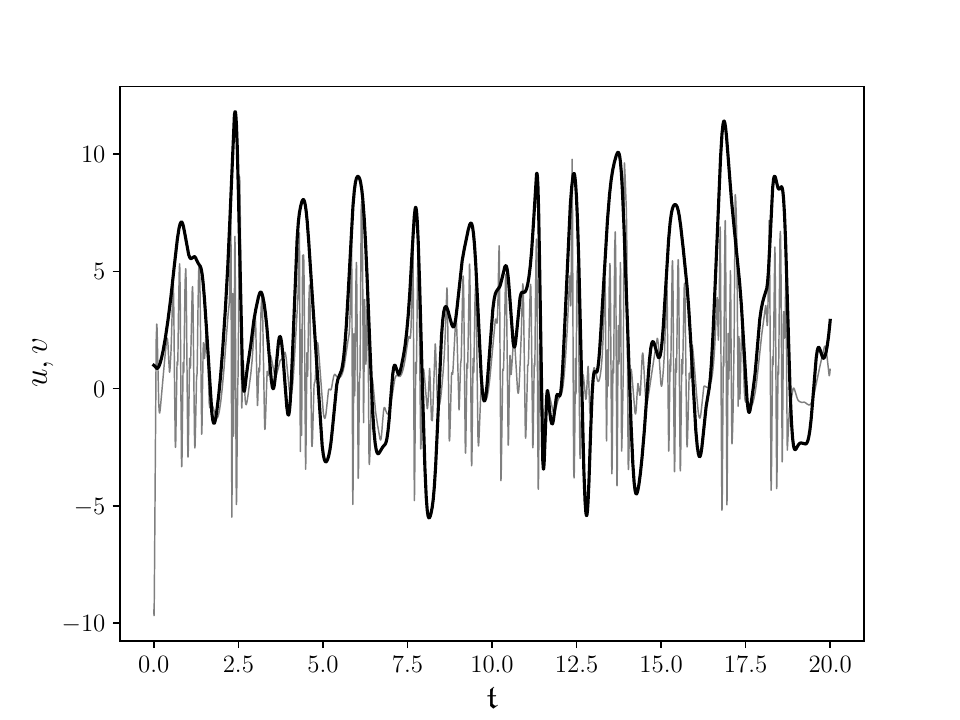}}
    \caption{100 (a) and 20  (b) time unites of the first dimension of Lorenz 96 multi-scale showing both the slow and fast scale trajectories of $u_k, v_{k,j}$ for $k=j=1$.}
    \label{fig:trajectoryLorenz96ms}
\end{figure}
\begin{figure}[t]
    \centering
    \subfloat[Data Loss]{\includegraphics[width=0.32\linewidth]{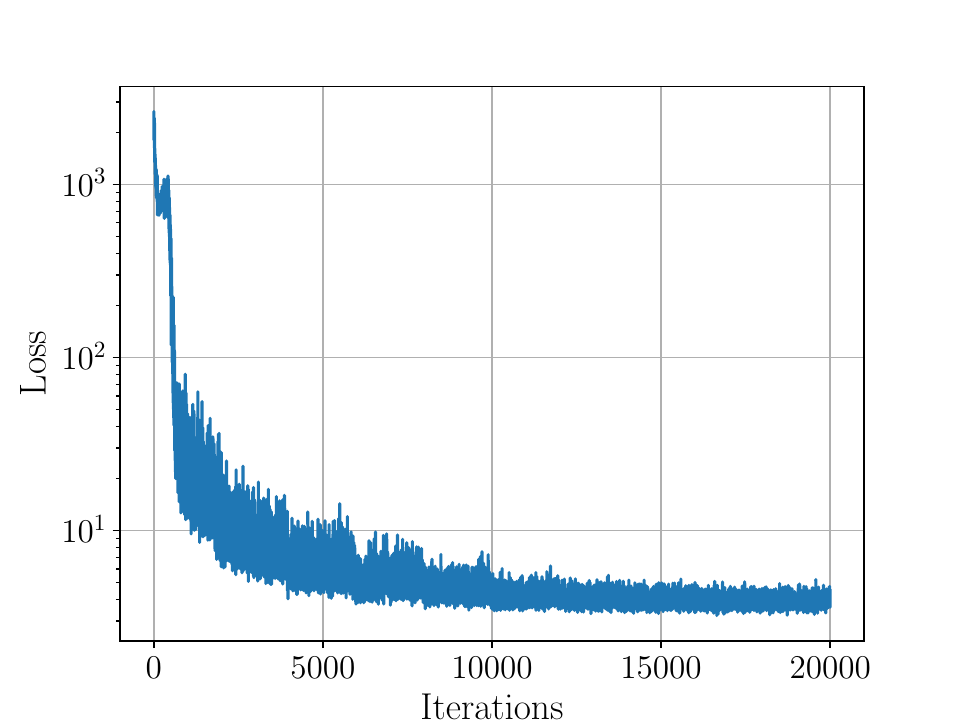}}
     \subfloat[Model Loss]{\includegraphics[width=0.32\linewidth]{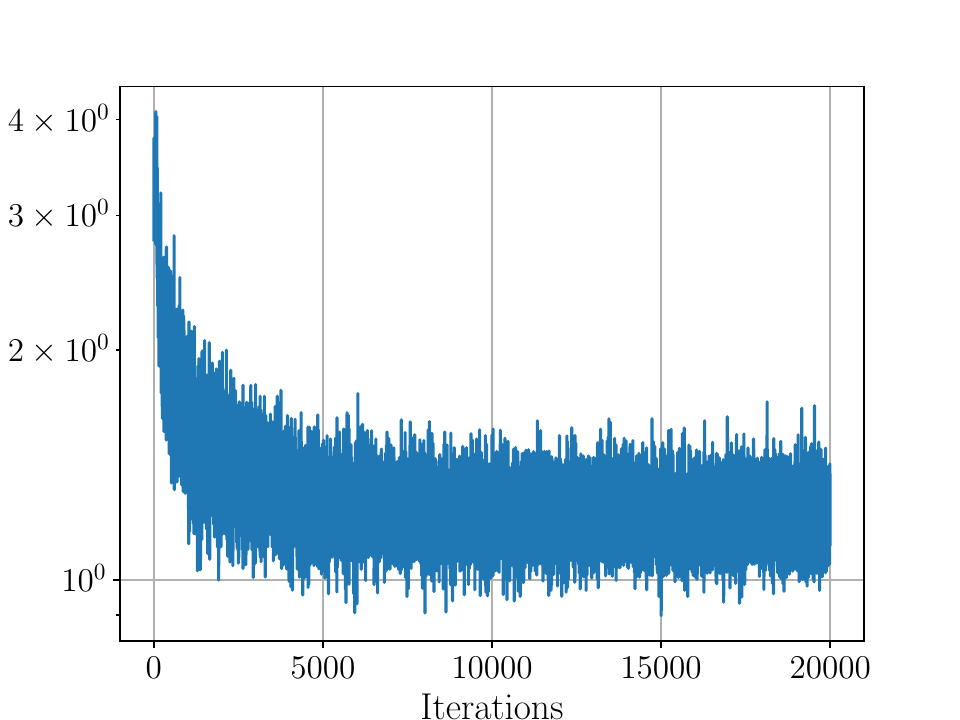}}
    \subfloat[$\log \kappa_2(\Gamma)$]{\includegraphics[width=0.32\linewidth]{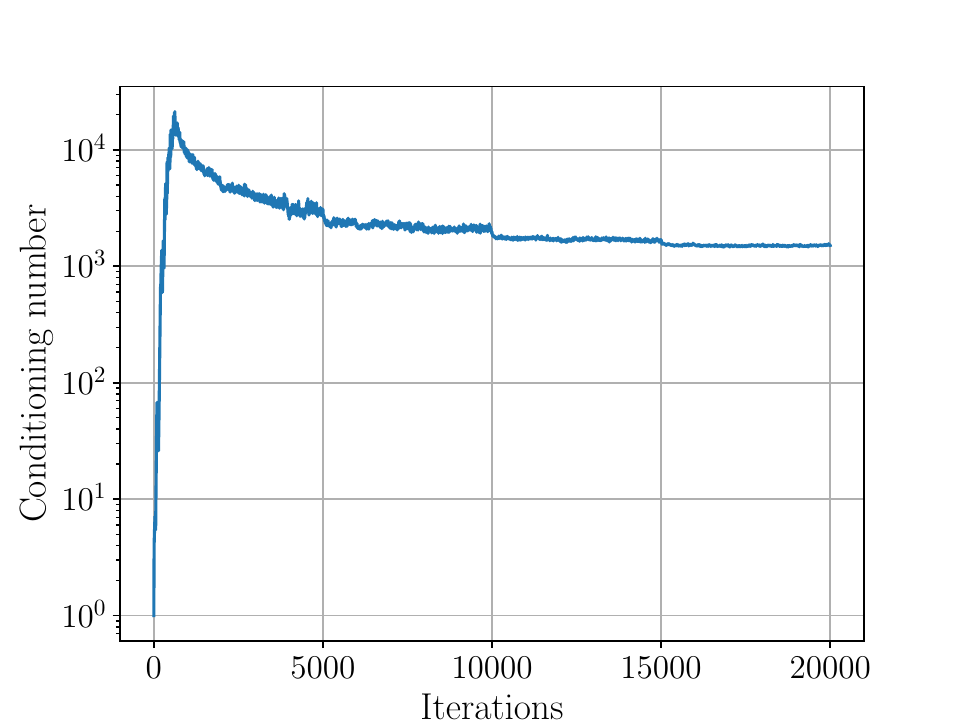}}
    \caption{(a) Data Loss function and the model $\cG^\phi$ loss function. (b) the evolution of the condition number of the estimated covariance.}
    \label{fig:lossLorenz96ms}
\end{figure}
\begin{figure}[t]
    \centering
    \subfloat[$\F$ v $\h$]{\includegraphics[width=0.32\linewidth]{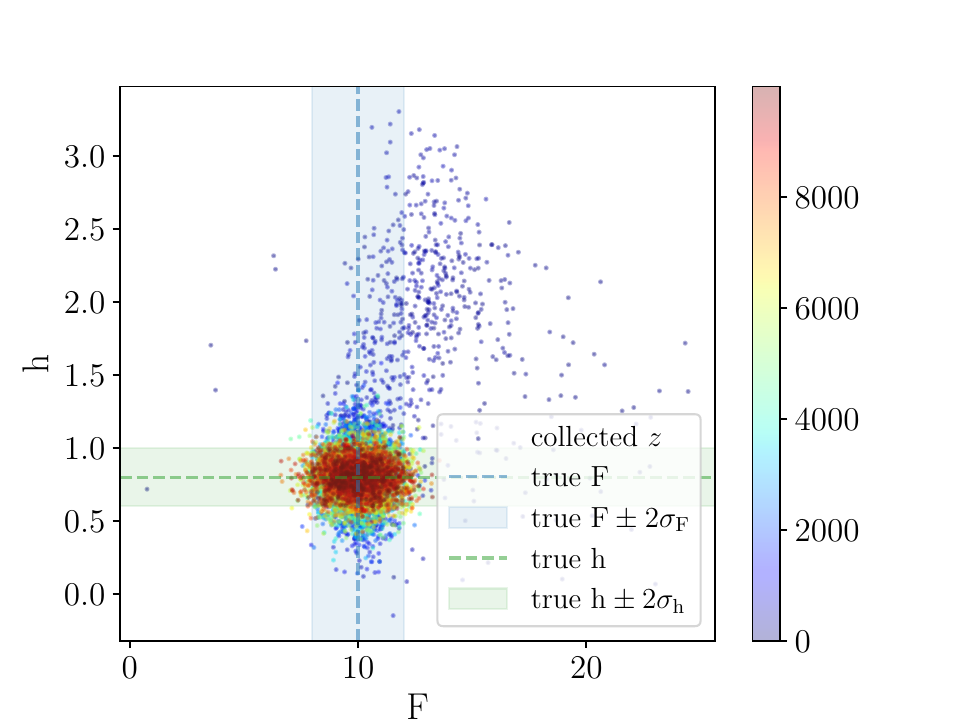}}
    \subfloat[$\h$ v $\b$]{\includegraphics[width=0.32\linewidth]{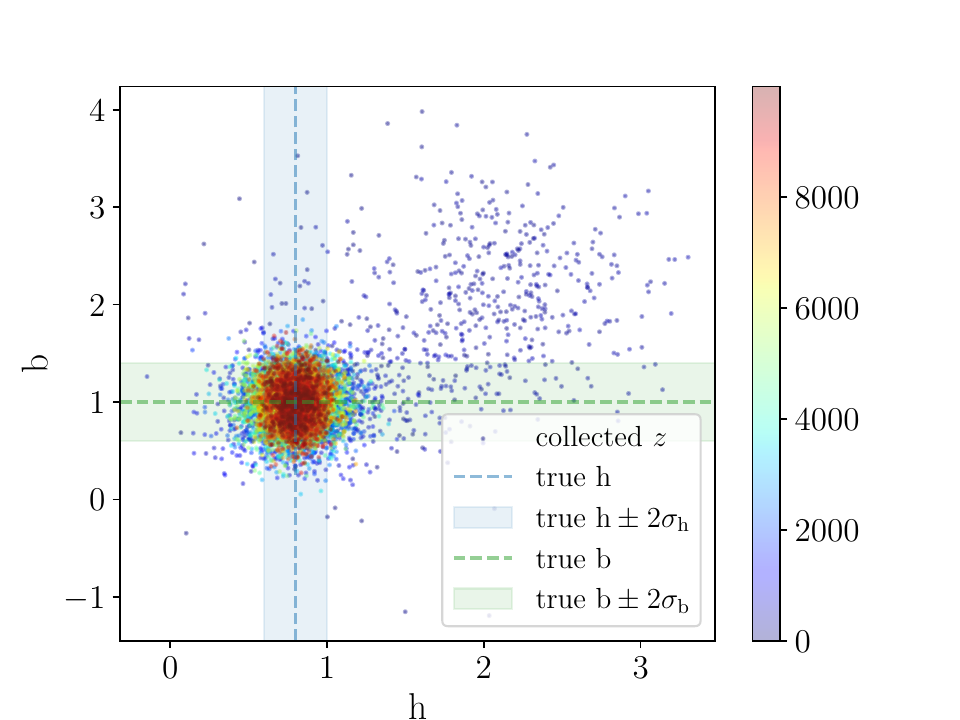}}
    \subfloat[$\b$ v $\F$]{\includegraphics[width=0.32\linewidth]{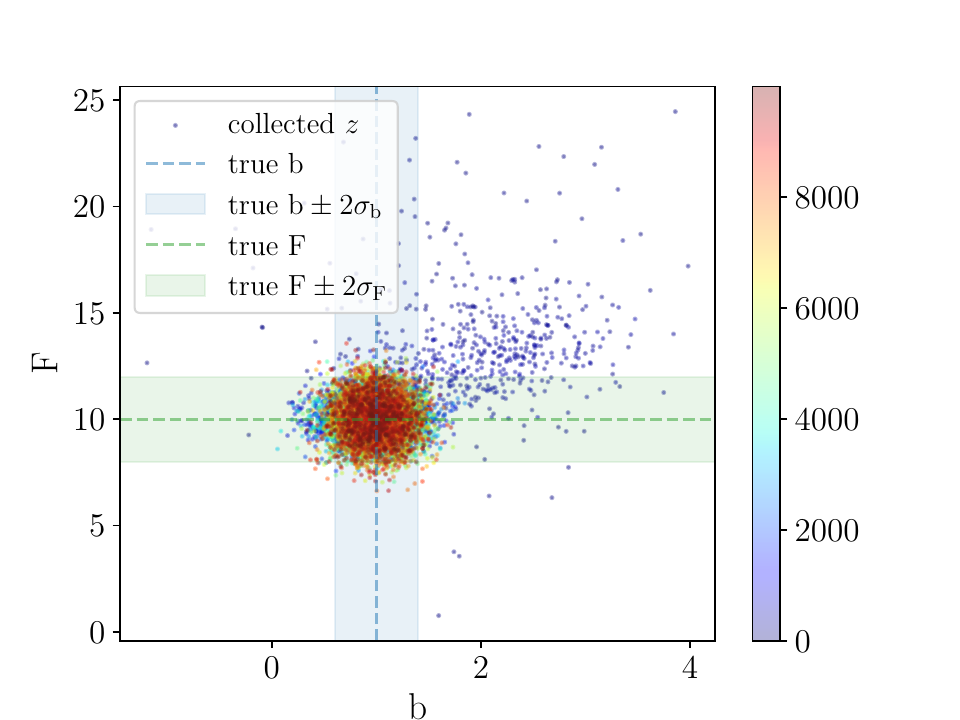}}
    \caption{The data sample colour indicates the sampling index $\st$ of $ z^{(\st)} =(F^{(\st)}, h^{(\st)}, b^{(\st)})\sim\mu(\alpha_\st)$ as it goes into $\bP^\st_{y,z}$. Hence, as the colorbs progress upwards on the color scale, the more recent the selected sample $z^{(\st)}$.}
    \label{fig:data_aquisition_l96ms}
\end{figure}
\begin{figure}[t]
    \centering
    \subfloat[Convergence $\F$]{\includegraphics[width=0.32\linewidth]{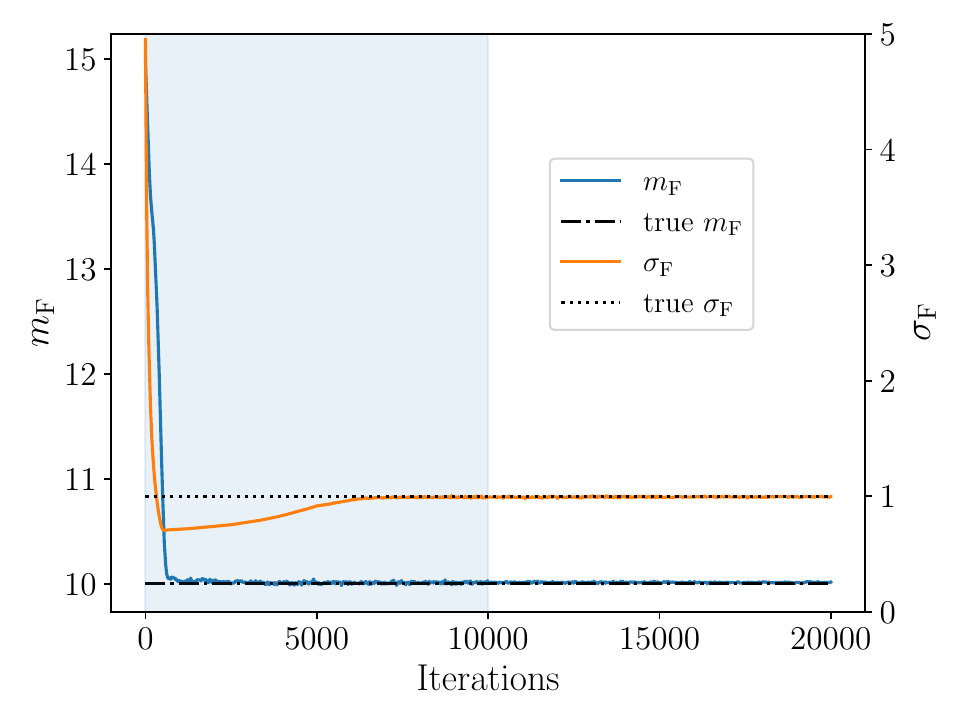}}
    \subfloat[Convergence $\h$]{\includegraphics[width=0.32\linewidth]{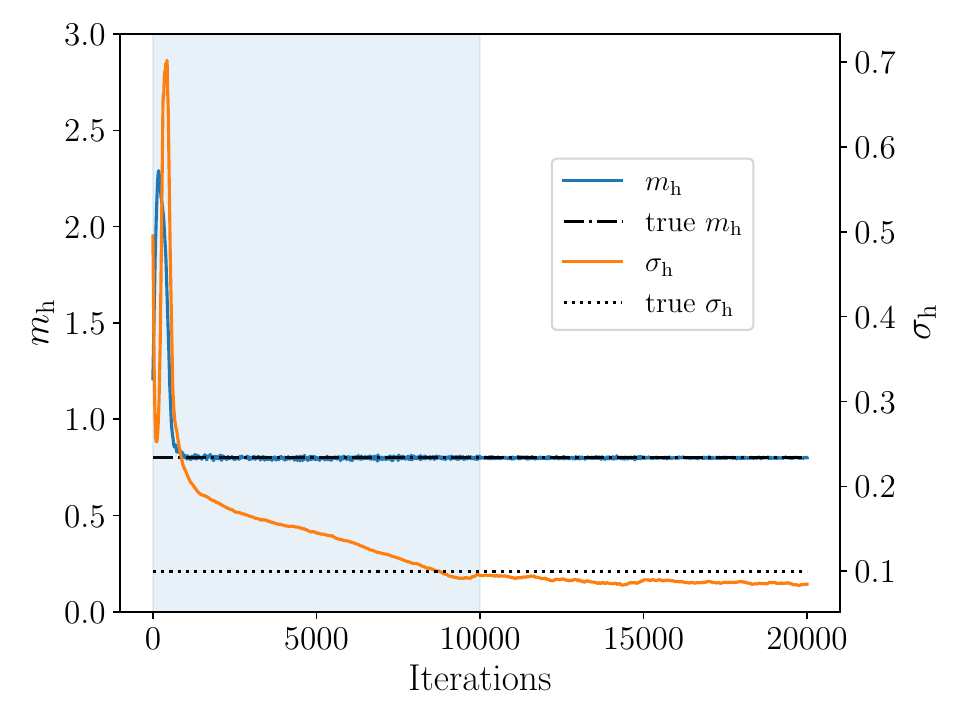}}
    \subfloat[Convergence $\b$]{\includegraphics[width=0.32\linewidth]{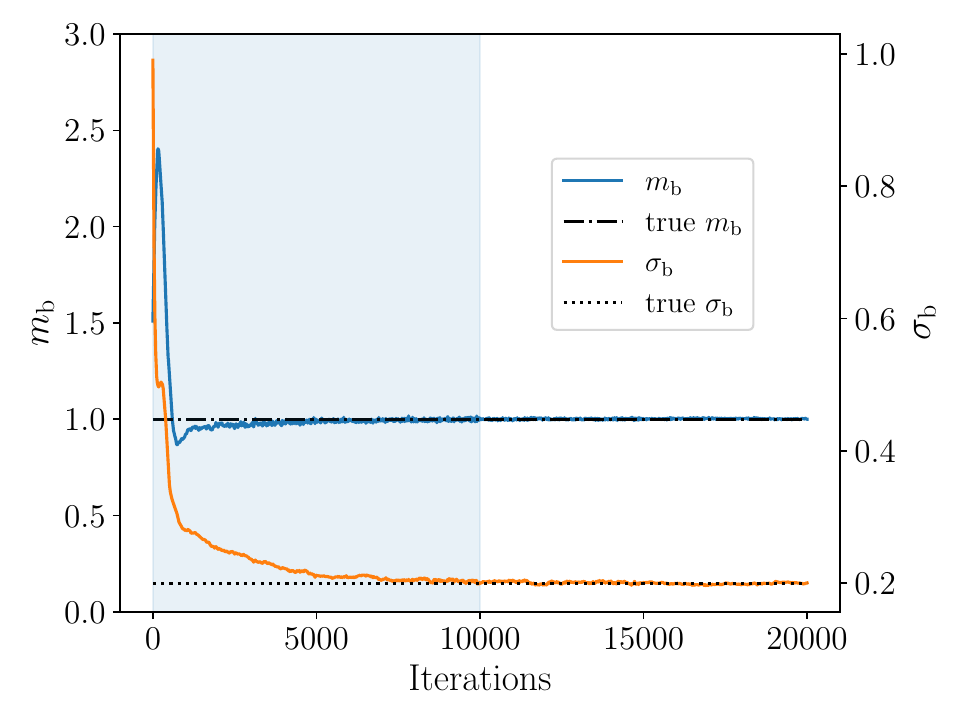}}
    \caption{Convergence of parameters $\alpha$. The blue shaded area denotes the training pairs acquisition after which we stop adding training pairs to $\bP^T_{z,y}$.}
    \label{fig:ConvergenceLorenz96ms}
\end{figure}
\begin{figure}[t]
    \centering
    \subfloat[$\Gamma^\star$]{\includegraphics[width=0.5\linewidth]{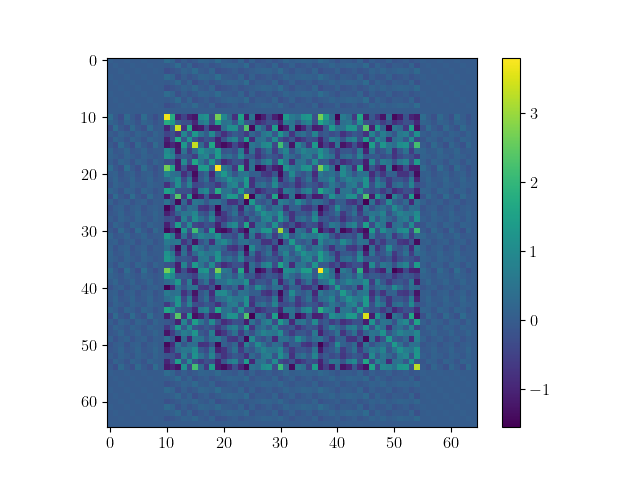}}
    \hspace{-3em}
    \subfloat[ $\mathrm{Cov}_{s_0\sim\bP^\dagger_{s_0}} \mathrm{[}\cG_\tau(m^\dagger; s_0)\mathrm{]}$]{\includegraphics[width=0.5\linewidth]{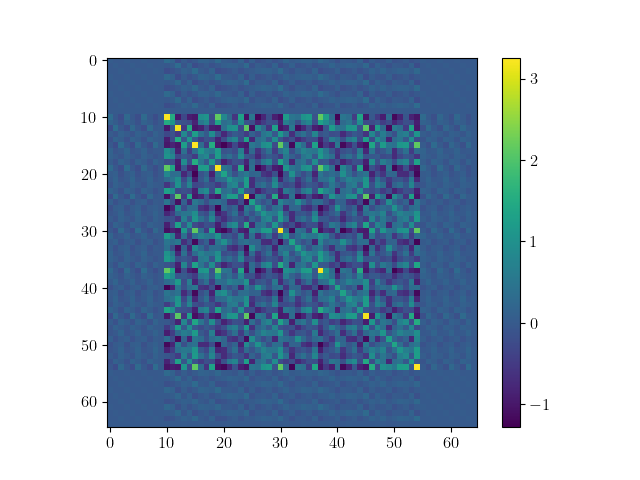}}
    \caption{(a) The learned $\Gamma^\star$. (b) Empirical covariance of the feature function for random initializations of trajectories for $z=m^\dagger=(m_\F, m_\h, m_\b)^\dagger$.}
    \label{fig:CovsLorenz96ms}
\end{figure}
Building on the Lorenz 96 single-scale model, we now turn our attention to the more elaborate Lorenz 96 multi-scale model~\cite{lorenz1996predictability}
\begin{subequations}
\begin{align}
    \dot{u}_k &= -u_{k-1}(u_{k-2} - u_{k+1}) - u_k + \F - \h \bar{v}_k,\\
    \frac{1}{c}\dot{v}_{k,l} &= -  v_{k, l+1}(v_{k, l+2} - v_{k,l-1}) - v_{k,l} + \b u_k,\\
    u_{k+K}&=u_k, \;   v_{l+L}=v_l, \quad k=1 \hdots K,\; l=1\dots L,
\end{align}
\end{subequations}
where $\bar{v}_k=\frac{1}{L}\sum_{l=1}^L v_{k,l}$ and we have periodic boundary conditions on $u$ and $v$ variables~\cite{schneider2017earth, cleary2021calibrate, fatkullin2004computational}. For $c>1$, this model combines chaotic dynamics at two different time scales\footnote{The model dynamics are chaotic for the chosen distribution of data-generating parameters at both scales.}; the slow variables are $u_{1:K}$, each of  which is associated to as set $v_{k,1:L}$ of fast variables; thereby reflecting the multi-scale nature of atmospheric dynamics. 

\noindent\textbf{Data Specifics.}
We specify the observed variables and feature function as
\begin{subequations}
\begin{align}
    w &= \left( u_1, \hdots, u_K, \frac{1}{L}\sum_{l=1}^L \bar{v}_l\right),\\
    \varphi(s) &= \left(\{ w_i\}_{i=1}^{K+1} , \{w_iw_j | i,j\in\mathbb{N}, 1\leq i\leq K+1, 1\leq j\leq i \}\right).
\end{align}
\end{subequations}
We set $K=9$, $L=10$, hence $\varphi:\Rb^{99}\mapsto\Rb^{65}$ ; we fix $c=10$,  $\tau=100$ with 20 units of time burn-in for $\cG_\tau$.
We specify the data generating $z^{(n)}=(\F^{(n)}, \h^{(n)}, \b^{(n)})$ to be sampled from
$\mu(\alpha^\dagger):=\cN(z; m^\dagger, \diag\,(\sigma^\dagger)^2)$ 
where $m^\dagger=(m_\F, m_\h, m_\b)^\dagger=(10, 0.8, 1),$ 
and $\sigma^\dagger=(\sigma_\F, \sigma_\h, \sigma_\b)^\dagger=(1, 0.1, 0.2)$. 
We collect $10^4$ trajectories with the true data generating initial measure $\cN(0, 5^2\I)$. In Figure~\ref{fig:trajectoryLorenz96ms} we show the first dimension of the system, for the full 100 time units (a), for the first 20 time units also showing the first fast variable $v_{0,0}$ (b).

\noindent\textbf{Learning Specifics.} The covariance matrix $\Gamma\in\Rb^{65\times 65}$ is parameterized as a lower triangular matrix with 2145 learnable parameters constrained to be positive on the diagonal.
The true data generating initial measure is $\cN(0, 5^2\I)$, and in the inference scheme we misspecify the initial state measure $\bP_{s_0}$ to be $\cN(0, 8^2\I)$.
The surrogate model $\cG^\phi$ and Covariance regularizer are specified in Subsection~\ref{ssec:LorenzReg}.
The regularizer $h(\alpha)$ $= 1/(2\cdot 5^2)\|(m_\F, m_\h, m_\b) - (8, 2, 2)\|^2$ + $1/(2\cdot 2^2)\|(\sigma_\F, \sigma_\h, \sigma_\b) - \log(0.5, 0.5, 0.5)\|^2$.
We use a learning batch size of 100 for $\cG^{\phi^\star_\st}$. We use a learning rate of $10^{-2}$ on $(\alpha, \Gamma)$ with Adam decayed by half four times over training; and a 10-times decayed-by-half Adam learning rate initialized at $10^{-3}$ for learning $\phi$.
We use a pretraining number of iterations of $\sT_\mathrm{pre.} = 10^3$ with $N_\mathrm{pre.}=100$, and $\sT_\mathrm{inner} = 20$ step on $\phi$ minimizing $\eqref{eq:learn_phi}$ per step on $(\alpha, \Gamma)$. We terminate the training data-pairs acquisition after $\sT_a = 10$k steps.

\noindent\textbf{Solver Specifics.}  We time integrate the system with a fourth order Runge-Kutta scheme with time step of $10^{-3}U_\kt$.

\noindent\textbf{Results.}
Figure~\ref{fig:lossLorenz96ms} shows the data fit loss, surrogate model loss and the condition number of $\Gamma$. Figure~\ref{fig:ConvergenceLorenz96ms} shows the convergence of $\alpha={(m_\F, m_\h, m_\b, \sigma_\F, \sigma_\h, \sigma_\b)}$. In Figure~\ref{fig:CovsLorenz96ms} we show: (a), the learned covariance matrix; (b), the covariance matrix of the observations at the mode of the true parameter distribution for randomized initial conditions from $\bP^\dagger_{s_0}$. In Figure~\ref{fig:data_aquisition_l96ms} we show the active learning collection of points for $z^{(\st)}$ going into $\bP^\st_{y,z}$ as the learning iterations progress, the colour bar indicates the gradient-step number $\st$ at which the sample was taken. We can see from this that the learning efforts are eventually concentrated on regions of high density under $\mu^\dagger$.
The average relative errors on the last 1000 iterations are: 
0.13\% for $\F_m$, 0.46\% for $\F_\sigma$, 
0.03\% for $\h_m$, 14.92\% for $\h_\sigma$, 
0.13\% for $\b_m$, 0.48\% for $\b_\sigma$, 
The relative difference between $\Gamma^\star$ and $\mathrm{Cov}_{s_0\sim\bP^\dagger_{s_0}}[\cG_\tau(m^\dagger;s_0)]$ is of 17.23\% in the Frobenius norm.

\section{Conclusions and Future Work}\label{sec:conclusion}
In this work, we propose a flexible and broadly applicable inference scheme for performing simultaneous deconvolution and populational inversion with data originating from collections of physical systems. We propose a loss function for performing this inference along with an update scheme robust to empiricalization. We also propose a scheme to concurrently learn a surrogate model which concentrates the learning efforts to regions of high probability under the estimated distribution over the parameters. The proposed methodology is then tested on a variety of numerical examples, including the Darcy model for porous medium flow, and damped elastodynamics. Furthermore, we show how the methodology can be adapted to problems in chaotic dynamical systems with time-averaged observation models and learn both the distribution of the parameters of the systems and the covariance of CLT approximations of the error arising from finite-time-averaging; such data acquisition models arise naturally when calibrating GCMs.
There are a number of natural directions for future  methodological work: expanding the types of problems this methodology can be applied to, such as stochastic differential equations; making use of different parametric approximations of the distribution of interest;
exploring inference under stronger model misspecification; performing inference with more general forms of unknown noise distributions,   beyond Gaussian noise, and including multiplicative or nonlinear noise models. Future theoretical questions suggested by our numerics include: proving theoretical guarantees under finite sample size approximation; and providing theory
characterizing parameter identifiability of deconvolution in populational inverse problems.

\section*{Acknowledgements}
The authors would like to thank Dr. Charikleia Stoura for discussions on the topic of structural dynamics, which helped improve this paper.

\bibliographystyle{elsarticle-num} 
\bibliography{biblio}

\end{document}